
\documentclass{article}

\usepackage{microtype}
\usepackage{graphicx}
\usepackage{subcaption}
\usepackage{booktabs}
\usepackage{amsmath}
\usepackage{amssymb}
\usepackage{amsthm}
\usepackage{mathtools}
\usepackage{natbib}
\usepackage{float}
\usepackage{enumitem}
\usepackage{bm}
\usepackage{xcolor}
\usepackage[UKenglish]{babel}
\usepackage{tikz}
\usepackage{xskak}
\usetikzlibrary{arrows.meta}
\usetikzlibrary{calc}
\DeclareMathAlphabet{\mathbbold}{U}{bbold}{m}{n}

\usepackage{hyperref}




\usepackage[accepted]{icml2026}


\newcommand{\E}{\mathbb{E}}
\newcommand{\T}{\mathsf{T}}
\renewcommand{\P}{\mathbb{P}}
\DeclareMathOperator*{\argmin}{argmin}
\newcommand{\R}{\mathbb{R}}
\newcommand{\N}{\mathbb{N}}
\newcommand{\de}{{d_\mathrm{e}}}
\newcommand{\dffn}{{d_\mathrm{ffn}}}
\newcommand{\din}{{d_\mathrm{in}}}
\newcommand{\dout}{{d_\mathrm{out}}}
\newcommand{\Block}{\mathrm{Block}}
\newcommand{\Attn}{\mathrm{Attn}}
\newcommand{\Read}{\mathrm{Read}}
\newcommand{\Embed}{\mathrm{Embed}}
\newcommand{\FFN}{\mathrm{FFN}}
\newcommand{\ReLU}{\mathrm{ReLU}}
\newcommand{\LocPol}{\mathrm{LocPol}}
\newcommand{\TF}{\mathrm{TF}}
\newcommand{\cH}{\mathcal{H}}
\newcommand{\cX}{\mathcal{X}}
\newcommand{\cD}{\mathcal{D}}
\newcommand{\cF}{\mathcal{F}}
\newcommand{\cT}{\mathcal{T}}
\newcommand{\diff}{\mathrm{d}}
\newcommand{\bK}{\bm{K}}
\newcommand{\bP}{\bm{P}}
\newcommand{\bI}{\bm{I}}
\newcommand{\bX}{\bm{X}}
\newcommand{\bH}{\bm{H}}
\newcommand{\bZ}{\bm{Z}}
\newcommand{\bQ}{\bm{Q}}
\newcommand{\bV}{\bm{V}}
\newcommand{\bW}{\bm{W}}
\newcommand{\btheta}{\bm{\theta}}
\newcommand{\op}{\mathrm{op}}


\newtoggle{journal}
\toggletrue{journal}

\theoremstyle{plain}
\newtheorem{theorem}{Theorem}[section]
\newtheorem{proposition}[theorem]{Proposition}
\newtheorem{lemma}[theorem]{Lemma}
\theoremstyle{definition}
\newtheorem{definition}[theorem]{Definition}

\icmltitlerunning{%
  Efficient and Minimax Optimal
  In-context
  Nonparametric Regression with Transformers
}

\begin{document}

\twocolumn[
  \icmltitle{%
    Efficient and
    Minimax Optimal In-context
    \texorpdfstring{\\}{}
    Nonparametric Regression with Transformers
  }



  \icmlsetsymbol{equal}{*}

  \begin{icmlauthorlist}
    \icmlauthor{Michelle Ching}{cambridge,equal}
    \icmlauthor{Ioana Popescu}{eth,equal}
    \icmlauthor{Nico Smith}{cambridge,equal}
    \icmlauthor{Tianyi Ma}{cambridge}
    \\
    \icmlauthor{William G.\ Underwood}{cambridge}
    \icmlauthor{Richard J.\ Samworth}{cambridge}
  \end{icmlauthorlist}

  \icmlaffiliation{cambridge}{%
    Statistical Laboratory,
    University of Cambridge,
    Cambridge, UK%
  }
  \icmlaffiliation{eth}{%
    Department of Computer Science,
    ETH Z{\"u}rich,
    Z{\"u}rich,
    Switzerland%
  }


  \icmlcorrespondingauthor{Tianyi Ma}{%
  tm681@cam.ac.uk}

  \icmlkeywords{In-context learning,
    nonparametric regression, transformers,
  approximation theory}

  \vskip 0.3in
]



\printAffiliationsAndNotice{\icmlEqualContribution}

\begin{abstract}

We study in-context learning for nonparametric regression with
$\alpha$-H\"older smooth regression functions, for some $\alpha>0$. We
prove that, with $n$ in-context examples and $d$-dimensional
regression covariates,
a pretrained
transformer with $\Theta(\log n)$ parameters and
$\Omega\bigl(n^{2\alpha/(2\alpha+d)}\log^3 n\bigr)$ pretraining sequences can
achieve the minimax optimal rate of convergence
$O\bigl(n^{-2\alpha/(2\alpha+d)}\bigr)$ in mean squared error.
Our result requires substantially fewer
transformer parameters and pretraining sequences
than previous results in the literature.
This is achieved by showing that
transformers are able to
approximate local polynomial estimators efficiently
by implementing a
kernel-weighted polynomial basis
and then running gradient descent.

\end{abstract}

\section{Introduction}

Deep learning models based on the transformer architecture
\citep{vaswani2017attention}
have achieved remarkable empirical successes in recent years;
prominent examples include large language models
\citep{devlin2018bert,shoeybi2019megatron,achiam2023gpt}
and contemporary computer vision models
\citep{dosovitskiy2021an}.
By allowing data points to interact directly via the
attention mechanism, such models enjoy a high degree
of flexibility,
while remaining trainable and avoiding overfitting.

In-context learning (ICL)
offers a framework for examining
the generalisation abilities of large language models
\citep{brown2020language,garg2022can}.
When presented with a prompt (context) containing a few
input--output examples,
pretrained transformers can often generalise to
new unseen queries without requiring any parameter updates.
Empirical studies have observed this behaviour
across
a diverse range of tasks, including translation,
answering questions, and arithmetic~\citep{brown2020language}.

\begin{figure}[ht]
  \centering
  \input{chess.tex}
  \caption{What is the relationship between these
  two concepts?}
  \label{fig:chess}
\end{figure}

As an example of ICL behaviour, consider
the concept-to-concept relationship depicted in
Figure~\ref{fig:chess}.
Here, a chess opening known as the
London System is paired with the iconic
London landmark Big Ben.
A language model might learn this type of relationship
via a text prompt as shown in Figure~\ref{fig:icl}.
This prompt includes several well-known chess openings
(London System, French Defence,
Spanish Game, King's Indian Defence, Scotch Game), paired with famous landmarks
from the corresponding countries.
As such, a suitable response for the query point
(the Italian Game) might be `Colosseum';
indeed, this is the response provided by OpenAI's GPT-5 mini model.
This large language model is able to identify the
abstract relationship
from chess opening to location to landmark, despite not
having been explicitly trained for this task.

\begin{figure}[ht]
  \centering
  \newlength{\iclstart}
\iftoggle{journal}{\setlength{\iclstart}{0cm}}{\setlength{\iclstart}{-0.7cm}}

\begin{tikzpicture}

  \node[anchor=mid west] at (\iclstart, 0.0)
  {1.\ d4 d5 2.\ Nf3 Nf6 3.\ Bf4};
  \draw[thick,-Latex] (4.5, 0.05) -- (5.2, 0.05);
  \node[anchor=mid west] at (5.6, 0.0)
  {Big Ben};

  \node[anchor=mid west] at (\iclstart, -0.6)
  {1.\ e4 e6 2.\ d4 d5};
  \draw[thick,-Latex] (4.5, -0.55) -- (5.2, -0.55);
  \node[anchor=mid west] at (5.6, -0.6)
  {Eiffel Tower};

  \node[anchor=mid west] at (\iclstart, -1.2) {1.\ e4 e5 2.\ Nf3 Nc6 3.\ Bb5};
  \draw[thick,-Latex] (4.5, -1.15) -- (5.2, -1.15);
  \node[anchor=mid west] at (5.6, -1.2) {Sagrada Fam\'ilia};

  \node[anchor=mid west] at (\iclstart, -1.8) {1.\ d4 Nf6 2.\ c4 g6};
  \draw[thick,-Latex] (4.5, -1.75) -- (5.2, -1.75);
  \node[anchor=mid west] at (5.6, -1.8) {Taj Mahal};

  \node[anchor=mid west] at (\iclstart, -2.4) {1.\ e4 e5 2.\ Nf3 Nc6 3.\ d4};
  \draw[thick,-Latex] (4.5, -2.35) -- (5.2, -2.35);
  \node[anchor=mid west] at (5.6, -2.4) {Loch Ness};

  \node[anchor=mid west] at (\iclstart, -3.0) {1.\ e4 e5 2.\ Nf3 Nc6 3.\ Bc4};
  \draw[thick,-Latex] (4.5, -2.95) -- (5.2, -2.95);
  \node[anchor=mid west] at (5.6, -3.0) {?};

\end{tikzpicture}
  \caption{Example of ICL in large
    language models.
    Here, there are $n=5$ in-context examples
    and one query point
  in the prompt.}
  \label{fig:icl}
\end{figure}

Recently, pretrained transformers have demonstrated the
ability to achieve state-of-the art
empirical performance on various tasks with tabular data via ICL
\citep{hollmann2025accurate},
outperforming classical approaches including
popular tree-based boosting algorithms.  One of the central such
challenges, and the focus of this work, is that of nonparametric regression.

\subsection{Related Work}

The theoretical properties of ICL have been extensively studied
in the regression and classification settings.
Several prior works have shown that transformers are able to
approximate classical algorithms and generalise to unseen tasks.
For example, transformers are able to learn
parametric models via gradient descent
\citep{
  ahn2023transformers,
  akyurek2023what,
  bai2023transformers,
  li2023transformers,
von2023transformers},
implement classifiers by acting as meta-optimisers \citep{dai2023gpt},
perform nonparametric regression using basis expansions
\citep{kim2024transformers} or
by fitting Nadaraya--Watson estimators
\citep{shen2025understanding}, and
solve reinforcement learning problems
\citep{lin2024transformers}.
Other papers study implicit Bayesian inference for next token
prediction \citep{xie2022an}, Bayesian frameworks for tabular
foundation models \citep{nagler2023statistical}, adaptivity and
distributional robustness of ICL
\citep{ma2025provable,wakayama2025in}, task generalisation
\citep{abedsoltan2025task}, prompt engineering \citep{nakada2025theoretical},
and transformer training dynamics \citep{oko2024pretrained,zhang2024trained,
kuwataka2025test,lu2025asymptotic}.
There have also been many empirical
studies of ICL phenomena in recent years.
For example,
\citet{bhattamishra2024understanding}
consider ICL of discrete functions,
\citet{qin2024factors}
explore the factors influencing
multi-modal
ICL in vision large language models,
\citet{sia2024where}
investigate where in the transformer
architecture ICL occurs,
and
\citet{bertsch2025context}
conduct a systematic study of
long-context ICL.

Closest to our work are the recent papers
by \citet{kim2024transformers} and \citet{shen2025understanding}.
\citet{kim2024transformers} prove that transformers
can learn nonparametric regression problems in context by estimating
the coefficients of the regression functions from a B-spline wavelet
basis expansion.
They achieve nearly minimax optimal rates
in~$n$, the number of in-context examples,
when the
regression functions belong to a suitable Besov space.
\citet{shen2025understanding} show that transformers are
able to approximate Nadaraya--Watson (local constant) estimators.
For $\alpha$-H\"older regression functions,
where $\alpha \in (0, 1]$,
they achieve optimal rates, and further they show
adaptation to covariates
supported on a low-dimensional manifold,
thereby avoiding the curse of dimensionality.
Both of these works require transformers with
polynomially many parameters (in $n$) in order
to achieve minimax optimal rates,
leading to restrictive assumptions
on the number of pretraining sequences.



\subsection{Contribution}

We consider the problem of in-context nonparametric regression
using transformers.
In particular, we show that transformers
trained via empirical risk minimisation,
under mild architecture conditions,
achieve the minimax optimal rate
(in the number $n$ of in-context examples)
for nonparametric regression
under mean squared prediction risk.
Our main result, Theorem~\ref{thm:main_result},
holds whenever the underlying regression functions
are $\alpha$-H{\"o}lder, for some $\alpha > 0$;
the corresponding minimax rate is
$n^{-2\alpha / (2\alpha + d)}$,
where $d$ is the dimension of the covariates.
Compared with existing works,
our approach attains this optimal rate while
imposing much milder conditions on both the
number of pretraining sequences and the number
of transformer parameters.
Our proofs use linear attention
for simplicity,
but they can be generalised to ReLU or softmax attention with minor changes;
see the discussion at the end
of Section~\ref{sec:main_results}.

Our primary technical innovation is to
show that transformers with single-head attention layers and $\Theta(\log n)$
parameters are able to
attain the optimal rate of convergence. This improves on  the
$\Theta(n^{d/(2\alpha+d)})$ parameters required by
\citet{kim2024transformers} and $\Theta(n)$ parameters needed by
\citet{shen2025understanding}.
As a direct consequence, we also
require substantially
fewer pretraining sequences.
In order to prove this,
we demonstrate that such transformers
are able to approximate local
polynomial estimators
of suitable degree
(Theorem~\ref{thm:approximation}), which amounts to solving a
weighted least squares problem
\eqref{eq:weighted-least-square}
in the monomial basis.
Our approach consists of
constructing a transformer which,
given an input prompt,
first constructs a kernel-weighted
monomial basis matrix
and then uses this matrix to perform
$\Theta(\log n)$
steps of gradient descent towards the desired
local polynomial
least-squares solution~\eqref{eq:weighted-least-square}.  An
advantage of gradient descent for this problem is that it avoids the
need to compute matrix inverses.  Optimality then follows by
combining well-known
properties of local polynomial
estimators (Theorem~\ref{thm:locpol_main})
with a bound on the expected excess risk
of the transformer-based estimator,
obtained via empirical process theory
and covering number bounds.

\subsection{Overview}

In Section~\ref{sec:setup}, we
present a mathematical formulation of our
in-context nonparametric regression problem
with H{\"o}lder-smooth regression functions,
define our transformer class using
linear attention and ReLU feed-forward layers,
and provide some auxiliary definitions and results
concerning classical local polynomial estimators
(Theorem~\ref{thm:locpol_main}).
Section~\ref{sec:main_results} gives our main
results, beginning with some novel
approximation theory for transformers in
Theorem~\ref{thm:approximation}.
Combining this with a bound on the expected excess
risk for empirical risk-minimising transformers
yields our primary contribution as
Theorem~\ref{thm:main_result}.
Here, we establish the minimax optimality
of transformers for performing in-context
nonparametric regression under standard H{\"o}lder
smoothness assumptions, and under
substantially weaker conditions on the number of transformer
parameters and on the number of pretraining sequences
than existing results in the literature.
In Section~\ref{sec:proof_strategy}, we provide some
insight into the main ideas underlying the proofs
of our main theorems.
First, we explain how to construct
a transformer that uses gradient descent to
approximate the output of a local polynomial estimator,
and second we bound the expected excess risk of our
estimator by controlling the covering numbers of
the corresponding transformer class.


\subsection{Notation}

We write $\N \coloneqq \{1, 2, \ldots\}$ and
$\N_0 \coloneqq \{0, 1, 2, \ldots\}$.
For $n \in \N$, we set $[n] \coloneqq \{1, \ldots, n\}$.
For $d \in \N$ and a multi-index $\nu \in \N_0^d$,
define
$|\nu| \coloneqq \sum_{j=1}^{d} \nu_j$
and $\nu! \coloneqq \prod_{j=1}^d \nu_j!$ and,
for a sufficiently smooth function $g: \R^d \to \R$,
define the order-$\nu$ partial derivative
$\partial_\nu g(x) \coloneqq
\partial^{|\nu|}g(x) \big/ \prod_{j=1}^d \partial x_j^{\nu_j}$.
For $x \in \R^d$, let
$\|x\|_2^2 \coloneqq \sum_{j=1}^d x_j^2$ and
$\|x\|_\infty \coloneqq \max_{j \in [d]} |x_j|$.
If $x, y \in \R$, we write
$x \land y \coloneqq \min\{x, y\}$ and
$x \lor y \coloneqq \max\{x, y\}$, and set
$x_+ \coloneqq x \lor 0$.
For a matrix $A$, we write
$\|A\|_{\op}$ for the $\ell_2$--$\ell_2$ operator norm, $\|A\|_{\max}$
for its maximum absolute entry,
and $\lambda_{\min}(A)$ and $\lambda_{\max}(A)$
for its minimum and maximum eigenvalues, respectively.
For non-negative sequences $(a_n)$
and $(b_n)$, we write
$a_n = O(b_n)$
if there exists $C > 0$
and $N \in \N$ such that
$a_n \leq C b_n$ for all $n \geq N$.
Similarly, we write $a_n = \Omega(b_n)$
if there exists $c > 0$
and $N \in \N$ such that
$a_n \geq c b_n$ for all $n \geq N$; thus $a_n = O(b_n)$
if and only if $b_n = \Omega(a_n)$.
If $a_n = O(b_n)$ and $a_n = \Omega(b_n)$, then we write $a_n = \Theta(b_n)$.
For a non-empty normed space
$(X, \|\cdot\|)$, $A \subseteq X$
and $\delta > 0$,
we say that a non-empty finite set
$A' \subseteq X$
is a \emph{$\delta$-cover}
of $A$ if
$\sup_{x \in A}
\min_{x' \in A'}
\|x - x'\| \leq \delta$.
We write $N(A, \delta, \|\cdot\|)$ for the minimal cardinality
of such a cover, when one exists.

\begin{definition}[H\"older-ball]
  \label{def:holder}
  Let $d \in \N$ and $\alpha,M>0$, and take
  $\underline{\alpha} \vcentcolon=
  \lceil\alpha\rceil - 1$ to be the largest
  integer strictly less than $\alpha$.
  Write
  $\cH(d,\alpha,M)$ for the set of $\underline{\alpha}$-times
  differentiable functions $g: [0, 1]^{d} \to [-M,M]$
  that satisfy
  \begin{equation*}
    \max_{\nu\in\N_0^d: |\nu|=\underline{\alpha}}
    \,
    \bigl|
    \partial_{\nu}g(x)-\partial_{\nu}g(x')
    \bigr|
    \leq M\|x - x'\|_{2}^{\alpha-\underline{\alpha}}
  \end{equation*}
  for $x, x' \in[0, 1]^{d}$.
\end{definition}

\section{Problem Set-up}
\label{sec:setup}

In this section, we formalise our
in-context nonparametric regression problem,
define our transformer class
and provide some auxiliary results
on local polynomial estimators.
In the ICL framework, pretraining data is modelled as a collection of
$\Gamma$ pretraining sequences, each generated according to a similar
(but not identical) mechanism. We consider an in-context regression
model, supposing that each such sequence, indexed by $\gamma \in
[\Gamma]$, consists of covariates $X_i^{(\gamma)} \in \mathbb{R}^d$
and responses $Y_i^{(\gamma)} \in \mathbb{R}$ for $i \in [n]$. We
assume that the covariates $X_i^{(\gamma)}$ and errors
$\varepsilon_i^{(\gamma)} \in \mathbb{R}$ are all drawn from a common
distribution; the responses are then generated as $Y_i^{(\gamma)} =
m^{(\gamma)}(X_i^{(\gamma)}) + \varepsilon_i^{(\gamma)}$, where the
regression functions $m^{(\gamma)}$
are drawn randomly from a distribution,
and therefore may vary across pretraining indices.

\subsection{Data Generating Mechanism}
\label{sec:data_generating_mechanism}

Let $P_{X,\varepsilon}$
be a distribution on
$[0, 1]^d \times [-1, 1]$ such that
if $(X, \varepsilon) \sim P_{X, \varepsilon}$
then $X$ admits a Lebesgue density function~$f_X$
with $0 < c_X \leq f_X(x) \leq C_X < \infty$
for all $x \in [0, 1]^d$
and $\varepsilon$ satisfies
$\E(\varepsilon \mid X) = 0$
almost surely with $\sigma^2 \coloneqq \E(\varepsilon^2) < \infty$.
Given $\alpha, M > 0$, let $P_{\cH}$ be a distribution on
$\cH(d,\alpha, M)$ with respect to the
Borel $\sigma$-algebra associated with
$\|\cdot\|_\infty$.
Now fix $\Gamma, n \in \N$.
For $\gamma \in [\Gamma]$, let
$m^{(\gamma)} \sim P_{\cH}$ be independent.
Further, for $\gamma \in [\Gamma]$ and $i \in [n+1]$, and conditional
on $(m^{(1)}, \ldots, m^{(\Gamma)})$,
let $\bigl(X^{(\gamma)}_i, \varepsilon^{(\gamma)}_i\bigr)
\sim P_{X, \varepsilon}$
be independent, and set
$Y^{(\gamma)}_i \vcentcolon= m^{(\gamma)}\bigl(X^{(\gamma)}_i\bigr)
+ \varepsilon^{(\gamma)}_i$.
For $\gamma\in[\Gamma]$, define
\begin{equation*}
  \cD^{(\gamma)}_n \coloneqq \bigl(X^{(\gamma)}_i,
  Y^{(\gamma)}_i\bigr)_{i\in[n]}.
\end{equation*}
For each $\gamma\in[\Gamma]$,
we try to predict $Y^{(\gamma)}_{n+1}$
given the $n$ examples $\cD^{(\gamma)}_n$
and a query $X^{(\gamma)}_{n+1}$.

\begin{figure}[ht]
  \centering
  \begin{tikzpicture}

  \draw[thick,rounded corners] (1.0, -1.8) rectangle (5.9, 3.7);

  \draw[thick,rounded corners] (1.5, -1.0) rectangle (5.4, 2);

  \draw (-0.2, 0.5) node[thick,circle,draw,minimum size=0.92cm,inner sep=0mm]
  (PXeps) {$P_{X,\varepsilon}$};
  \draw (-0.2, 2.9) node[thick,circle,draw,minimum size=0.92cm,inner sep=0mm]
  (PH) {$P_\cH$};

  \draw (4.5, 2.9) node[thick,circle,draw,minimum size=0.92cm,inner sep=0mm]
  (m) {$m^{(\gamma)}$};
  \draw[thick,-Latex, shorten <=1.4mm,shorten >=1.3mm] (PH) -- (m);

  \draw (2.6, 1.2) node[thick,circle,draw,minimum size=0.92cm,inner sep=0mm]
  (X) {$X_i^{(\gamma)}$};
  \draw (2.6, -0.2) node[thick,circle,draw,minimum size=0.92cm,inner sep=0mm]
  (eps) {$\varepsilon_i^{(\gamma)}$};
  \draw[thick,-Latex, shorten <=1.4mm,shorten >=1.3mm] (PXeps) -- (X);
  \draw[thick,-Latex, shorten <=1.4mm,shorten >=1.3mm] (PXeps) -- (eps);

  \draw (4.5, 0.5) node[thick,circle,draw,minimum size=0.92cm,inner sep=0mm]
  (Y) {$Y_i^{(\gamma)}$};
  \draw[thick,-Latex, shorten <=1.4mm,shorten >=1.3mm] (X) -- (Y);
  \draw[thick,-Latex, shorten <=1.4mm,shorten >=1.3mm] (eps) -- (Y);
  \draw[thick,-Latex, shorten <=1.0mm,shorten >=1.3mm] (m) -- (Y);

  \node at (4.45, -0.65) {$i \in [n+1]$};
  \node at (5.2, -1.47) {$\gamma \in [\Gamma]$};

\end{tikzpicture}
  \caption{Plate diagram showing the data generating mechanism
    of our in-context nonparametric regression problem.
    For each pretraining sequence index $\gamma \in [\Gamma]$,
    a regression function is drawn from $P_\cH$.
    Then, for each $i \in [n+1]$,
    i.i.d.\ covariates $X_i^{(\gamma)}$
    and errors $\varepsilon_i^{(\gamma)}$ are sampled from
    $P_{X, \varepsilon}$.
    Finally, responses are generated as
    $Y_i^{(\gamma)} = m^{(\gamma)}\bigl(X_i^{(\gamma)}\bigr)
    + \varepsilon_i^{(\gamma)}$.
  }
  \label{fig:plate}
\end{figure}

\subsection{Empirical and Population Risk}

Let $\cF$ be a class of functions from
$\bigl([0, 1]^d\times \R\bigr)^n \times [0,1]^d$ to $\R$
and define the \emph{empirical risk minimiser}
$\hat{f}_\Gamma \in \argmin_{f\in\cF} \hat{R}_{\Gamma}(f)$,
where
\begin{align}
  \label{eq:empirical_risk}
  \hat{R}_{\Gamma}(f)
  \coloneqq
  \frac{1}{\Gamma} \sum_{\gamma=1}^\Gamma
  \Bigl\{
    Y^{(\gamma)}_{n+1}
    - f\bigl(
      \cD^{(\gamma)}_n,X^{(\gamma)}_{n+1}
    \bigr)
  \Bigr\}^2.
\end{align}
The \emph{population risk} of a measurable function
$f: \bigl([0, 1]^d\times \R\bigr)^n \times [0,1]^d
\to \R$ is
\begin{align}
  \label{eq:population_risk}
  R(f) \coloneqq \E \Bigl(
    \bigl\{
      Y_{n+1} - f\bigl(\cD_n,X_{n+1}\bigr)
    \bigr\}^2
  \Bigr),
\end{align}
where $\cD_n\coloneqq (X_i,Y_i)_{i\in[n]}$ and
$(X_{n+1},Y_{n+1})$ are independent copies of $\cD^{(1)}_n$ and
$(X^{(1)}_{n+1},Y^{(1)}_{n+1})$
respectively.
Throughout the paper, we quantify performance of a procedure
by bounding its population risk.

\subsection{Transformer Class}

We take $\cF$ to be a parametrised class of transformer neural networks.
The architecture of such a network consists of a series of
\emph{transformer blocks}, each of which is composed of a (single-head)
\emph{linear attention layer} and a \emph{feed-forward network
layer}; see Figure~\ref{fig:block}.
We remark that our proofs would also carry over, with minor changes, for
ReLU attention, softmax attention or other attention mechanisms that
can approximate linear attention; see the discussion following
Theorem~\ref{thm:main_result}. However, in our main exposition, we restrict to
linear attention for simplicity.

\begin{definition}[Linear attention layer]
  \label{def:attention}
  Let $n, \de \in \N$ and
  $\bQ, \bK, \bV \in
  \R^{\de \times \de}$.
  Define $\Attn_{\bQ, \bK, \bV}: \R^{(n+1) \times \de}
  \to \R^{(n+1) \times \de}$
  by
  \begin{align*}
    \Attn_{\bQ, \bK, \bV}(\bZ)
    \coloneqq \bZ + \bZ \bQ
    (\bZ \bK)^\T \bZ \bV.
  \end{align*}
\end{definition}

The parameters $\bQ$, $\bK$ and $\bV$ are referred to as
the \emph{query}, \emph{key} and \emph{value} matrices, respectively.

%

\begin{definition}[Feed-forward network layer]
  \label{def:ffn}
  Let $n, \de, \dffn \in \N$ and
  $\bW_1\in\R^{\dffn \times \de}$,
  $\bW_2 \in \R^{\de \times \dffn}$,
  $b_1 \in \R^{\dffn}$
  and $b_2 \in \R^{\de}$.
  Define $\FFN_{\bW_1, \bW_2, b_1, b_2}
  : \R^{(n+1) \times \de} \to \R^{(n+1) \times \de}$ by
  \begin{align*}
    &\FFN_{\bW_1, \bW_2, b_1, b_2}
    (\bZ)
    \iftoggle{journal}{\\ &\quad}{}
    \coloneqq
    \bZ +
    \bigl\{
      \bW_2\ReLU(
      \bW_1 \bZ^\T + b_1 1_{n+1}^\T)
      + b_2 1_{n+1}^\T
    \bigr\}^\T,
  \end{align*}
  where $\ReLU: \R \to \R$ is defined by
  $\ReLU(x) \coloneqq x \lor 0$ and is applied entrywise.
\end{definition}

The architecture given in Definition~\ref{def:ffn}
is equivalent to applying a standard one-hidden layer
feed-forward neural network with ReLU activation and
skip connection \citep{he2016deep} to each row
in the input matrix $\bZ$.

\begin{definition}[Transformer]
  \label{def:transformer}
  Let $n, \de, \dffn, L \in \N$.
  For $\ell \in [L]$, take parameter vectors
  $\btheta^{(\ell)}
  \coloneqq
  \bigl(
    \bQ^{(\ell)},
    \bK^{(\ell)},
    \bV^{(\ell)},
    \bW_1^{(\ell)},
    \bW_2^{(\ell)},
    b_1^{(\ell)},
    b_2^{(\ell)}
  \bigr)
  \in
  \R^{\de \times \de}
  \times
  \R^{\de \times \de}
  \times
  \R^{\de \times \de}
  \times
  \R^{\dffn \times \de}
  \times
  \R^{\de \times \dffn }
  \times
  \R^{\dffn}
  \times
  \R^{\de}$
  and define
  $\Block_{\btheta^{(\ell)}}:
  \R^{(n+1) \times \de}
  \to \R^{(n+1) \times \de}$
  by
  \begin{align*}
    &\Block_{\btheta^{(\ell)}}
    (\bZ)
    \iftoggle{journal}{\\ &\quad}{}
    \coloneqq
    \FFN_{\bW_1^{(\ell)},
    \bW_2^{(\ell)}, b_1^{(\ell)}, b_2^{(\ell)}} \circ
    \Attn_{\bQ^{(\ell)}, \bK^{(\ell)}, \bV^{(\ell)}}
    (\bZ).
  \end{align*}
  Let $\btheta\coloneqq (\btheta^{(\ell)})_{\ell=1}^L$ and define
  the \emph{transformer}
  $\TF_{\btheta}: \R^{(n+1) \times \de}
  \to \R^{(n+1) \times \de}$
  by
  \begin{align}
    \label{eq:TF}
    \TF_{\btheta}
    (\bZ)
    &\coloneqq
    \Block_{\btheta^{(L)}}
    \circ \cdots \circ
    \Block_{\btheta^{(1)}}
    (\bZ).
  \end{align}
  Finally, define
  $\cT(\de, \dffn, L, B)$ to be the collection of all
  transformers of the form~\eqref{eq:TF} with
  every entry of each parameter in
  $\btheta$ bounded in absolute value by $B>0$.
\end{definition}

The function
$\TF_{\btheta}$ depends on
the number $n$ of in-context examples
only via the dimension of its argument
$\bZ \in \R^{(n+1) \times \de}$;
its parametrisation is fully determined by
$\btheta$ and is independent of $n$.

\begin{figure}[ht]
  \centering
  \iftoggle{journal}{
  \newcommand{\blockscale}{1}
}{
  \newcommand{\blockscale}{0.8}
}

\begin{tikzpicture}[every node/.style={scale=\blockscale}]
  \draw[thick,rounded corners] (0.5, 0.15) rectangle (5.45, 3.15);

  \draw[thick,rounded corners] (1, 1.66) rectangle (2.6, 2.34);
  \node at (1.8, 2) {Attention};

  \draw[thick,rounded corners] (3.6, 2.45) rectangle (4.6, 2.95);
  \draw[thick,rounded corners] (3.6, 1.75) rectangle (4.6, 2.25);
  \draw[thick,rounded corners] (3.6, 0.35) rectangle (4.6, 0.85);
  \node at (4.1, 2.7) {FFN};
  \node at (4.1, 2.0) {FFN};
  \node at (4.14, 1.3) {\Large{$\cdots$}};
  \node at (4.1, 0.6) {FFN};

  \draw[thick,-Latex] (0.15, 2.0) -- (0.88, 2.0);

  \draw [thick,-Latex] (3.0, 2.4) .. controls (3.0, 2.7) ..  (3.45, 2.7);
  \draw [thick,-Latex] (2.75, 2.0) -- (3.45, 2.0);
  \draw [thick,-Latex] (3.0, 1.6) .. controls (3.0, 1.3) ..  (3.45, 1.3);
  \draw [thick,-Latex] (3.0, 0.9) .. controls (3.0, 0.6) ..  (3.45, 0.6);
  \draw [thick] (3.0, 2.4) -- (3.0, 0.9);

  \draw [thick] (4.75, 2.7) .. controls (5.1, 2.7) ..  (5.1, 2.4);
  \draw [thick,-Latex] (4.75, 2.0) -- (5.88, 2.0);
  \draw [thick] (4.75, 1.3) .. controls (5.1, 1.3) ..  (5.1, 1.6);
  \draw [thick] (4.75, 0.6) .. controls (5.1, 0.6) ..  (5.1, 0.9);
  \draw [thick] (5.1, 2.4) -- (5.1, 0.9);

  \node at (-0.35, 1.988) {\Large{$\cdots$}};
  \node at (6.4, 1.988) {\Large{$\cdots$}};
\end{tikzpicture}
  \caption{A single block in the transformer architecture,
    consisting of a single head of linear attention followed by
    a one-layer feed-forward neural network applied identically
  to every row.}
  \label{fig:block}
\end{figure}

\subsection{Transformer-based Estimator}
Let $\de\geq d+2$. For $(x_i, y_i)_{i \in [n]}\in[0,1]^d \times
\R$ and $x_{n+1}\in[0,1]^d$, define the embedding function
$\Embed_{\de}: \bigl([0,1]^d \times \R\bigr)^n \times [0,1]^d \to
\R^{(n+1)\times \de}$ by
\begin{align*}
  &\Embed_{\de}\bigl((x_i, y_i)_{i \in [n]}, x_{n+1}\bigr)
  \iftoggle{journal}{\\[1mm] &\qquad}{}
  \coloneqq
  \begin{pmatrix}
    x_1^{\T} & y_1 & 0_{\de-d-2}^\T & 0\\
    \vdots & \vdots & \vdots & \vdots\\
    x_n^{\T} & y_n & 0_{\de-d-2}^\T & 0\\[1mm]
    x_{n+1}^\T &0 & 0_{\de-d-2}^\T & 1
  \end{pmatrix} \in \R^{(n+1)\times \de}.
\end{align*}
We do not require any sophisticated
forms of positional encoding:
the $\Embed$ function merely identifies the test point
$i = n+1$ in its last column.

We also define $\Read_{M,d}: \R^{(n+1)\times \de} \to \R$ by
\begin{align*}
  \Read_{M,d}(\bZ) \coloneqq (-M)
  \vee \bZ_{n+1,d+1} \wedge M,
\end{align*}
where we recall that the
regression functions are bounded by $M$.

\begin{definition}[Transformer-based estimator]
  \label{def:estimator}
  Let $n, d, \de, \dffn, L, \Gamma \in \N$
  with $\de \geq d + 2$ and
  take $B > 0$.
  Consider the class of functions from
  $\bigl([0,1]^d \times \R\bigr)^n \times [0, 1]^d$
  to $\R$ given by
  $\cF(\de,\dffn,L,B,M) \coloneqq
  \bigl\{ \Read_{M,d} \circ f \circ \Embed_{\de}:
  f \in \cT(\de,\dffn,L,B)\bigr\}$
  and define the empirical
  risk minimiser
  \begin{equation*}
    \hat{f}_\Gamma
    \in\argmin_{f\in\cF(\de,\dffn,L,B,M)} \hat{R}_{\Gamma}(f),
  \end{equation*}
  where $\hat{R}_{\Gamma}$ is defined in~\eqref{eq:empirical_risk}.
  Thus, $\hat f_\Gamma$
  is trained on
  the data
  $\bigl(X_i^{(\gamma)}, Y_i^{(\gamma)}\bigr)_{i \in [n+1], \gamma
  \in [\Gamma]}$.
\end{definition}

We remark that, in practice,
computation of the global empirical risk minimiser
$\hat f_\Gamma$ is typically intractable, but 
gradient-based optimisation methods often yield good solutions.  
In this paper, we focus only on the
statistical aspects of in-context learning;
the analysis of transformer training dynamics
is an independent and challenging problem.

\subsection{Local Polynomial Estimators}
Let $K: \R^d \to [0, C_K]$ be a Borel measurable
function supported on $[-1, 1]^d$ satisfying
$|K(x) - K(x')| \leq L_K \|x-x'\|_\infty$
for all $x, x' \in \R^d$ and some $L_K > 0$.  We do not assume that
$K$ integrates to 1.  Assume that $K(x) \geq c_K$ for
$x \in [-c_K, c_K]^d$, where $c_K \in (0,1]$.
Let $h > 0$ be the bandwidth and define $K_h(\cdot) \coloneqq K(\cdot/h)/h^d$.
For $p \in \N_0$,
write $D \coloneqq \binom{d + p}{p}$
and let $P_h: \R^d \to \R^D$ be defined by
$P_h(x) \coloneqq \bigl(x^\nu / (\nu! h^{|\nu|}):
\nu \in \N_0^d, 0 \leq |\nu| \leq p\bigr) \in \R^D$,
with components ordered in increasing lexicographic ordering in $\nu$, so that
its first coordinate is $P_h(x)_1 = 1$.
For $x \in [0, 1]^d$, define the random diagonal matrix
$\bK_h(x) \in \R^{n \times n}$ with $i$th diagonal entry given by
$\bK_h(x)_{i i} \coloneqq K_h(X_i - x)$ for $i \in [n]$.
Let $\bP_h(x) \in \R^{n \times D}$ be defined by
$\bP_h(x)_{i j} \coloneqq P_h(X_i - x)_j$
for $i \in [n]$ and $j \in [D]$.
Define
$\tilde{\bX}
\coloneqq n^{-1/2}\bK_h(X_{n+1})^{1/2} \bP_h(X_{n+1})
\in \R^{n\times D}$ and
$\tilde{Y} \coloneqq n^{-1/2} \bK_h(X_{n+1})^{1/2} Y \in \R^n$,
where $Y \coloneqq (Y_1,\ldots,Y_n)^\top$.
Let
\begin{align}
  \nonumber
  w_*
  &\coloneqq
  \argmin_{w\in\R^D}
  \frac{1}{n}\sum_{i=1}^n K_h(X_i - X_{n+1})
  \iftoggle{journal}{\\[-3mm]\nonumber&\qquad\qquad\qquad\qquad\times}{}
  \bigl\{Y_i - w^\T P_h(X_i - X_{n+1})\bigr\}^2 \\[1mm]
  \label{eq:weighted-least-square}
  &\,=
  \argmin_{w\in\R^D}
  \bigl\| \tilde{Y} - \tilde{\bX} w \bigr\|_2^2.
\end{align}
Then the local polynomial estimator is the
first component $w_{*,1}$ of $w_*$ \citep{fan1996locpol}, and we
define the \emph{$M$-truncated local polynomial estimator} at $X_{n+1}$ as
\begin{align}
  f_\LocPol(\mathcal{D}_n,X_{n+1})
  \coloneqq (-M) \vee w_{*,1} \wedge M. \label{eq:f-locpol}
\end{align}

We first establish the minimax optimality
of the truncated local polynomial estimator for
H{\"o}lder-smooth functions.  For alternative versions of this result, see \citet{fan1996locpol}, \citet{gyorfi2002distribution},
\citet{tsybakov2009nonparametric}
and \citet{samworth2025modern}.

\begin{theorem}
  \label{thm:locpol_main}
  Let $f_\LocPol$ be the
  $M$-truncated local polynomial estimator
  defined in~\eqref{eq:f-locpol}
  with degree $p \coloneqq \lceil\alpha\rceil$
  and bandwidth $h \coloneqq n^{-1/(2\alpha+d)}$.
  There exists $C > 0$ depending only on
  $d$, $\alpha$, $M$,
  $c_X$, $C_X$,
  $c_K$, $C_K$ and $L_K$,
  such that
  %
  \begin{align*}
    R(f_\LocPol) - \sigma^2
    \leq C n^{-2\alpha/(2\alpha+d)}.
  \end{align*}
  Moreover, we have with probability at least
  $1-n^{C/(2\alpha+d)} \exp(-n^{2\alpha/(2\alpha+d)}/C)$ that
  \begin{align}
    C^{-1} \leq
    \lambda_{\min}\bigl(\tilde{\bX}^\T \tilde{\bX}\bigr)
    \leq
    \lambda_{\max}\bigl(\tilde{\bX}^\T \tilde{\bX}\bigr)
    \leq C. \label{eq:eigenvalue-bound}
  \end{align}
\end{theorem}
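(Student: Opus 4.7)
My plan is to establish the eigenvalue bound~\eqref{eq:eigenvalue-bound} first and then use it in a classical bias--variance analysis to obtain the risk bound.

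For the eigenvalue bound, I condition on $X_{n+1}=x$, so that $\tilde{\bX}^\T\tilde{\bX} = n^{-1}\sum_{i=1}^n K_h(X_i-x)\,P_h(X_i-x) P_h(X_i-x)^\T$ becomes an average of $n$ i.i.d.\ positive semidefinite matrices. I first show that its conditional mean is uniformly well-conditioned in $x$: the change of variable $u=(z-x)/h$ yields $\E[\tilde{\bX}^\T\tilde{\bX}\mid X_{n+1}=x] = \int_{A_x} K(u)\,\bar P(u)\bar P(u)^\T f_X(x+hu)\,\diff u$, where $A_x \coloneqq ([0,1]^d - x)/h \cap [-1,1]^d$ and $\bar P(u) \coloneqq (u^\nu/\nu!)_{|\nu|\le p}$. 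The upper bound $\lambda_{\max} \le C'$ is immediate from the boundedness of $K$, $f_X$ and the monomials on $[-1,1]^d$; for the lower bound, the hypothesis $K \ge c_K$ on $[-c_K,c_K]^d$ ensures that, once $h \le 1/c_K$, the set $A_x$ contains at least one orthant of $[-c_K,c_K]^d$ for every $x \in [0,1]^d$, and linear independence of monomials on any set of positive measure gives $\lambda_{\min} \ge c'$. Each summand then has operator norm $O(1/(nh^d))$ and matrix variance of the same order, so a matrix Bernstein inequality yields $\|\tilde{\bX}^\T\tilde{\bX} - \E[\tilde{\bX}^\T\tilde{\bX}\mid X_{n+1}=x]\|_{\op} \le t$ with conditional probability at least $1 - 2D\exp(-c\,nh^d\,t^2)$. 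Taking $t$ a sufficiently small constant and applying Weyl's inequality gives~\eqref{eq:eigenvalue-bound} with conditional (hence unconditional) failure probability at most $2D\exp(-c\,n^{2\alpha/(2\alpha+d)})$; the polynomial prefactor $n^{C/(2\alpha+d)}$ in the statement loosely absorbs the constant $2D$.

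For the risk bound, let $\mathcal{E}$ denote the event in~\eqref{eq:eigenvalue-bound} and write $m$ for the random regression function. On $\mathcal{E}^c$, truncation in~\eqref{eq:f-locpol} gives $|f_\LocPol(\cD_n,X_{n+1}) - m(X_{n+1})| \le 2M$, contributing at most $4M^2 \P(\mathcal{E}^c) = O(n^{-2\alpha/(2\alpha+d)})$ to the integrated risk thanks to super-polynomial decay. On $\mathcal{E}$, truncation only decreases the error, so it suffices to bound $\E[(w_{*,1} - m(X_{n+1}))^2]$. I Taylor-expand $m$ around $X_{n+1}$ to order $\underline{\alpha}$, writing $m(z) = P_h(z-X_{n+1})^\T w^\circ + R(z)$ with $w^\circ_1 = m(X_{n+1})$ and $|R(z)| \le C\|z-X_{n+1}\|_2^\alpha = O(h^\alpha)$ on the support of $K_h(\cdot - X_{n+1})$. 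The representation $\tilde Y = \tilde{\bX} w^\circ + \tilde R + \tilde\eta$ with $\tilde\eta \coloneqq n^{-1/2}\bK_h(X_{n+1})^{1/2}\varepsilon$ and the normal equations then give $w_* - w^\circ = (\tilde{\bX}^\T\tilde{\bX})^{-1}\tilde{\bX}^\T(\tilde R + \tilde\eta)$. The bias contribution is controlled by bounding each entry of $\tilde{\bX}^\T\tilde R$ by $O(h^\alpha)$, which is deterministic from $|R_i| \le Ch^\alpha$ on the kernel support; combined with $\|(\tilde{\bX}^\T\tilde{\bX})^{-1}\|_{\op} \le C$ on $\mathcal{E}$, this yields squared bias $O(h^{2\alpha})$. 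For the variance, conditioning on $(X_i)_{i=1}^{n+1}$ gives $\E[\tilde{\bX}^\T\tilde\eta \mid (X_i)] = 0$ and $\mathrm{Cov}(\tilde{\bX}^\T\tilde\eta \mid (X_i)) \preceq (\sigma^2/(nh^d))\,\tilde{\bX}^\T\tilde{\bX}$, so on $\mathcal{E}$ the variance of the first coordinate is $O(1/(nh^d))$. The choice $h = n^{-1/(2\alpha+d)}$ balances the two at $n^{-2\alpha/(2\alpha+d)}$; integrating over $X_{n+1}$ and $m \sim P_\cH$ completes the proof.

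I expect the main obstacle to be the uniform lower bound on $\lambda_{\min}(\E[\tilde{\bX}^\T\tilde{\bX}\mid X_{n+1}=x])$ for $x$ near the boundary of $[0,1]^d$, where the effective kernel support shrinks by up to a factor of $2^d$. The assumption that $K \ge c_K$ on $[-c_K,c_K]^d$ is precisely what one needs for a full orthant of this small cube to remain inside the effective support for every $x \in [0,1]^d$; the resulting integral is positive definite with eigenvalues bounded below by a constant depending only on $d$, $p$, $c_K$ and $c_X$, which I would obtain via a compactness argument on the finite-dimensional space of polynomials of degree at most $p$.
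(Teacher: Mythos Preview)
Your plan is correct and follows the same overall architecture as the paper---a population eigenvalue bound via a compactness argument on monomials, concentration of $\tilde{\bX}^\T\tilde{\bX}$ around its mean, and then a bias--variance split driven by a Taylor expansion of $m$ around $X_{n+1}$---but the technical execution differs in a few places worth noting.

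First, you work pointwise at the random $X_{n+1}$, while the paper (Lemma~\ref{lem:locpol_Hhat}) proves the stronger statement that $\hat\bH(x)=\tilde{\bX}^\T\tilde{\bX}|_{X_{n+1}=x}$ is well-conditioned \emph{uniformly} over $x\in[0,1]^d$, via a Lipschitz-plus-covering argument; this is where the paper's $1/h^{C}=n^{C/(2\alpha+d)}$ prefactor actually arises, whereas your pointwise matrix-Bernstein bound does not need it. Your argument suffices for the theorem as stated; the paper's uniform version is used later in its route to the risk bound. Second, the paper uses entrywise scalar Bernstein plus a union bound over $D^2$ entries rather than matrix Bernstein; for fixed $D$ these are equivalent. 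Third, for the risk bound the paper does not compute conditional second moments directly as you do: it instead proves high-probability bounds on the integrated stochastic and bias terms (Lemmas~\ref{lem:locpol_stochastic} and~\ref{lem:locpol_bias}) via degenerate $U$-statistic Hoeffding inequalities, and then integrates the tail (Theorem~\ref{thm:locpol}) to pass to expectation. Your conditional-variance route is cleaner for the expectation bound alone; the paper's route additionally yields a high-probability bound on $\int(\hat m_n-m)^2$.

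One small imprecision: your claim that each entry of $\tilde{\bX}^\T\tilde R$ is $O(h^\alpha)$ is not ``deterministic''---it requires the bound $n^{-1}\sum_i K_h(X_i-X_{n+1})\le C$, which is exactly $e_1^\T(\tilde{\bX}^\T\tilde{\bX})e_1\le C$ and thus holds only on $\mathcal{E}$. Since you already restrict to $\mathcal{E}$ at that step, the argument still goes through; just avoid the word ``deterministic'' there.
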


On the high-probability event
\eqref{eq:eigenvalue-bound},
$\tilde{\bX}^\T \tilde{\bX}$
is invertible and we have the closed form expression
$w_* = \bigl(\tilde{\bX}^\T \tilde{\bX}\bigr)^{-1}
\tilde{\bX}^\T \tilde{Y}$;
further, the objective function in the optimisation
problem~\eqref{eq:weighted-least-square}
is strongly convex.

\section{Main Results}\label{sec:main_results}

Our first main result asserts the existence of
a transformer that approximates a local
polynomial estimator.
The central idea
is to first approximate the kernel-weighted monomial basis matrix
$\tilde\bX$
and response vector $\tilde Y$,
and then run gradient descent (using attention layers) to find an approximate solution to the least squares
problem~\eqref{eq:weighted-least-square}. We provide
sketch proofs of our main results in Section~\ref{sec:proof_strategy};
complete proofs are deferred to the appendices.

\begin{theorem}
  \label{thm:approximation}
  Let $f_\LocPol$ be the
  $M$-truncated local polynomial estimator
  defined in~\eqref{eq:f-locpol}
  with degree $p \coloneqq \lceil\alpha\rceil$,
  kernel $K(x) \coloneqq (1-\|x\|_1)_+^2$ and bandwidth
  $h \coloneqq n^{-1/(2\alpha+d)}$.
  There exists $C>0$, depending only on
  $d$, $\alpha$, $M$, $c_X$ and $C_X$,
  such that if
  $D \coloneqq \binom{d+p}{p}$,
  $\de \coloneqq 2d + 2D + 5$,
  $\dffn \coloneqq 6(D+1)(14+p)$,
  $L\coloneqq \lceil C \log (en) \rceil$
  and $B \coloneqq C n^2$,
  then there exists a transformer
  $f_\TF \in \cF(\de,\dffn,L,B,M)$ satisfying
  \begin{equation*}
    \bigl|R(f_\TF) - R(f_\LocPol)\bigr|
    \leq \frac{C}{n}.
  \end{equation*}
\end{theorem}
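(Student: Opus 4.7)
The plan is to construct an explicit transformer $f_\TF \in \cF(\de,\dffn,L,B,M)$ whose output at the query token approximates the local polynomial coefficient $w_{*,1}$ to pointwise error $O(1/n)$ on the high-probability event $E_n$ on which~\eqref{eq:eigenvalue-bound} holds, and then to convert this into the claimed risk bound by combining the pointwise approximation on $E_n$ with the uniform boundedness $|f_\TF|, |f_\LocPol| \leq M$ off $E_n$. The architecture splits into (i) a preprocessing stage that assembles, in each context token $i \in [n]$, the kernel-weighted monomial features $\tilde{\bX}_i$ and response $\tilde{Y}_i$, and (ii) a gradient-descent stage that iterates $w_{t+1} = w_t - \eta(\tilde{\bX}^\T \tilde{\bX} w_t - \tilde{\bX}^\T \tilde{Y})$ on the strongly convex quadratic in~\eqref{eq:weighted-least-square}.

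For the preprocessing, I first use one attention layer that, exploiting the indicator column of $\Embed_{\de}$, broadcasts $X_{n+1}$ into dedicated coordinates of every context row. A row-wise FFN then forms the scaled displacement $u_i \coloneqq (X_i - X_{n+1})/h$ and the coordinate-wise absolute values $|u_{i\ell}| = \ReLU(u_{i\ell}) + \ReLU(-u_{i\ell})$, and a further FFN layer computes the piecewise-linear kernel root $k(u_i) \coloneqq \ReLU(1 - \|u_i\|_1)$, so that $K_h(X_i - X_{n+1}) = h^{-d} k(u_i)^2$. To build the monomials $u_i^\nu$ for $0 \leq |\nu| \leq p$ and the products $k(u_i)^2 u_i^\nu$ and $k(u_i)^2 u_i^\nu Y_i$ appearing in $\tilde{\bX}_{ij}$ and $\tilde{Y}_i$, I compose $O(\log n)$ ReLU FFN layers implementing the Yarotsky sawtooth construction for $z \mapsto z^2$ (each layer roughly doubling the approximation precision) together with the polarisation identity $zw = \tfrac14\bigl((z+w)^2 - (z-w)^2\bigr)$. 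Because $p$ is a fixed constant, this yields precision $O(n^{-k})$ for any $k \in \N$ within the widths $\de = 2d + 2D + 5$ and $\dffn = 6(D+1)(14+p)$; indeed the embedding dimension count fits exactly once one allows $X_{n+1}$, the absolute displacements $|u_i|$, the kernel root $k(u_i)$, the $D$ monomials, a $D$-vector slot (either $\tilde{\bX}_i$ in context rows or $w_t$ in the query row, sharing columns), the scalar $\tilde{Y}_i$, the indicator, and a constant column.

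For the gradient-descent stage, I reserve $D$ coordinates of the query row for the iterate $w_t$, initialised to $0$ by the embedding. Each of the next $T = \Theta(\log n)$ blocks implements one update via: (a) an attention layer in which $\bQ$ reads $w_t$ from the query token, $\bK$ reads $(\tilde{\bX}_i, -\tilde{Y}_i)$ from context tokens, and $\bV$ reads $\tilde{\bX}_i$ (all gated by the indicator so that only cross-token products with the query contribute), producing $\sum_{i=1}^n (\tilde{\bX}_i^\T w_t - \tilde{Y}_i)\tilde{\bX}_i$ at the query row; and (b) an FFN whose bias shifts the ReLU into its linear regime and then subtracts $\eta$ times this residual from $w_t$. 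On $E_n$, $\tilde{\bX}^\T \tilde{\bX}$ has eigenvalues in $[C^{-1}, C]$, so with step size $\eta = (2C)^{-1}$ gradient descent contracts geometrically at rate $1 - 1/(2C^2)$, and taking $T = \lceil C' \log(en)\rceil$ with $C'$ large enough yields $\|w_T - w_*\|_2 = O(1/n)$ after absorbing the $O(1/n)$ preprocessing error via the standard perturbation bound $\|\hat A^{-1} \hat b - A^{-1} b\|_2 = O\bigl(\|A - \hat A\|_\op + \|b - \hat b\|_2\bigr)$.

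Finally, since $\Read_{M,d}$ is $1$-Lipschitz we obtain $|f_\TF - f_\LocPol| = O(1/n)$ on $E_n$ and $|f_\TF - f_\LocPol| \leq 2M$ always. Expanding $R(f_\TF) - R(f_\LocPol) = \E\bigl[(f_\LocPol - f_\TF)(2Y_{n+1} - f_\TF - f_\LocPol)\bigr]$ and using $|Y_{n+1}| \leq M + 1$ gives $|R(f_\TF) - R(f_\LocPol)| \leq (4M+2)\,\E|f_\TF - f_\LocPol| = O(1/n) + O\bigl(\P(E_n^c)\bigr) = O(1/n)$, where the tail bound in Theorem~\ref{thm:locpol_main} ensures $\P(E_n^c) = o(1/n)$ once $C$ is large. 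The weight budget $B = Cn^2$ comfortably absorbs the kernel normalisation $h^{-d/2} \leq n^{1/2}$, the Yarotsky coefficients and the $O(1)$ step size, all polynomial in $n$. The main technical obstacle is the preprocessing stage: producing the kernel-weighted monomials within the \emph{constant-width} budgets $\de$ and $\dffn$ forces one to exploit the logarithmic depth of the network to refine precision via iterated squaring and polarisation, rather than widening any single FFN layer.
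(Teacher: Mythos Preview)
Your proposal follows the paper's proof closely: assemble $\tilde\bX,\tilde Y$ row-wise (exact piecewise-linear kernel root, then $\Theta(\log n)$ FFN layers for the monomial products), run $\Theta(\log n)$ gradient-descent steps on the strongly convex least-squares objective via linear attention on the event~\eqref{eq:eigenvalue-bound}, and close with exactly the expansion $R(f_\TF)-R(f_\LocPol)=\E\bigl[(f_\LocPol-f_\TF)(2Y_{n+1}-f_\TF-f_\LocPol)\bigr]$ combined with the sub-exponential tail on $E_n^c$. The paper uses the \citet{LuShenYangZhang2020} product network rather than Yarotsky-plus-polarisation, and that choice is what pins down the specific constant $\dffn=6(D+1)(14+p)$, but either construction yields constant width and exponential accuracy in depth.

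One detail in your gradient-descent stage does not work as written. Single-head linear attention has no masking, so it cannot be ``gated by the indicator'' to suppress the query's self-contribution: with your layout (context rows store $\tilde\bX_i$ and the query row stores $w_t$ in the \emph{same} $D$ columns), the $j=n{+}1$ term in $\sum_j\langle q_{n+1},k_j\rangle v_j$ adds a spurious $\|w_t\|_2^2\,w_t$ to every gradient, an $O(1)$ error that swamps the $O(1/n)$ target. The paper avoids this by (i) storing $w_t$ in a \emph{dedicated} $D$-block, replicated across all $n{+}1$ rows so that $(\bZ\bQ)_r$ is row-independent and the same update lands everywhere, and (ii) keeping $\check\bX$ in a separate $D$-block whose query-row entries are merely the polynomial-approximation residuals, so that the query's self-term contributes only $O(\xi)$ to the inexact gradient (handled by an inexact-GD lemma rather than your post-hoc perturbation bound). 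Your column count $\de=2d+2D+5$ already accommodates exactly this layout, so the repair is a relabelling of slots, not a change of strategy.
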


The bound in Theorem~\ref{thm:approximation}
could be improved to $O(1/n^c)$ for any fixed
$c \geq 1$ by adjusting the constant $C$.
However, $O(1/n)$ suffices for our purposes.  The choice of kernel is
convenient (see the discussion in Section~\ref{sec:proof_strategy}),
but we would expect our results to carry over, with minor
modifications, to other commonly-used kernels.
The next theorem shows that a transformer trained by minimising the
empirical risk is minimax optimal.

\begin{theorem}
  \label{thm:main_result}
  Let $n,d\in\N$ and
  suppose that the data
  are generated according
  to Section~\ref{sec:data_generating_mechanism}.
  There exists $C > 0$ depending only on
  $d, \alpha, M, c_X$ and $C_X$, such that if
  $\hat{f}_\Gamma$ is constructed as in
  Definition~\ref{def:estimator} with
  $p \coloneqq \lceil\alpha\rceil$,
  $D \coloneqq \binom{d+p}{p}$,
  embedding dimension
  $\de \coloneqq 2d+2D+5$,
  FFN width
  $\dffn \coloneqq 6(D+1)(14+p)$,
  number of transformer blocks
  $L \coloneqq \lceil C \log (en) \rceil$,
  parameter bound
  $B \coloneqq C n^2$ and
  number of pretraining sequences
  $\Gamma \geq C n^{2\alpha/(2\alpha+d)}\log^3(e n)$, then
  \begin{align*}
    \E\bigl\{
      R\bigl(\hat{f}_\Gamma\bigr)
    \bigr\}
    - \sigma^2
    \leq C n^{-\frac{2\alpha}{2\alpha+d}}.
  \end{align*}
\end{theorem}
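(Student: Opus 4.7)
The plan is to decompose the excess risk of $\hat f_\Gamma$ as
\begin{align*}
  R(\hat f_\Gamma) - \sigma^2
  = \bigl[R(\hat f_\Gamma) - R(f_\TF)\bigr]
  + \bigl[R(f_\TF) - R(f_\LocPol)\bigr]
  + \bigl[R(f_\LocPol) - \sigma^2\bigr],
\end{align*}
where $f_\TF \in \cF(\de, \dffn, L, B, M)$ is the transformer constructed in Theorem~\ref{thm:approximation} with kernel $K(x) = (1 - \|x\|_1)_+^2$ and bandwidth $h = n^{-1/(2\alpha + d)}$. Theorems~\ref{thm:locpol_main} and~\ref{thm:approximation} immediately bound the last two terms by $C n^{-2\alpha/(2\alpha + d)}$ and $C/n$, respectively, so the bulk of the work is to control the generalisation gap $\E\{R(\hat f_\Gamma) - R(f_\TF)\}$.

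For the generalisation step, a crude uniform concentration bound would only yield the slow rate $\Gamma^{-1/2}$, which would force $\Gamma \gtrsim n^{4\alpha/(2\alpha+d)}$ and be insufficient. To obtain the fast rate $\Gamma^{-1}$, I would invoke the Bernstein condition for the bounded squared loss. Writing $f^{*}(\cD_n, x) \coloneqq \E(Y_{n+1} \mid \cD_n, X_{n+1} = x)$ and $g_f \coloneqq (Y_{n+1} - f)^2 - (Y_{n+1} - f^{*})^2$, the uniform bound $|f|, |f^{*}|, |Y_{n+1}| \leq M + 1$ gives $\E g_f = R(f) - R(f^{*}) \geq 0$, $|g_f| \leq C_0$ and $\Var(g_f) \leq C_0 \{R(f) - R(f^{*})\}$. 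Combining Bernstein's inequality for each $g_f$ (with $f$ ranging over a finite $(1/n)$-cover of $\cF$ in $\|\cdot\|_\infty$) with a standard peeling argument and the ERM inequality $\hat R_\Gamma(\hat f_\Gamma) \leq \hat R_\Gamma(f_\TF)$ yields
\begin{align*}
  \E\bigl\{R(\hat f_\Gamma) - R(f^{*})\bigr\}
  \leq C_1 \bigl\{R(f_\TF) - R(f^{*})\bigr\}
  + C_1 \cdot \frac{\log N\bigl(\cF, 1/n, \|\cdot\|_\infty\bigr) + \log n}{\Gamma}.
\end{align*}
Since $R(f_\TF) - R(f^{*}) \leq R(f_\TF) - \sigma^2 = O(n^{-2\alpha/(2\alpha + d)})$ by the two approximation steps above, and since $R(f^{*}) - \sigma^2 \leq R(f_\LocPol) - \sigma^2 = O(n^{-2\alpha/(2\alpha + d)})$ by the Bayes-optimality of $f^{*}$, the task reduces to bounding the covering number of $\cF$.

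The main obstacle is therefore the estimate $\log N(\cF, 1/n, \|\cdot\|_\infty) = O(\log^{3} n)$, which is exactly what is required to match the claimed sample complexity $\Gamma \geq C n^{2\alpha / (2\alpha+d)} \log^3(e n)$. The parameter vector $\btheta$ lies in $[-B, B]^P$ with $B = Cn^{2}$ and $P = L(3\de^{2} + 2\de \dffn + \de + \dffn) = O(L) = O(\log n)$, since $\de$ and $\dffn$ depend only on $d$ and $\alpha$. Translating a parameter-space $\epsilon$-net into a function-space cover requires bounding the parameter Lipschitz constant $L_\theta$ of $\Read_{M, d} \circ \TF_{\btheta} \circ \Embed_\de$ on inputs satisfying $\|\Embed_\de(\cdot)\|_{\op} = O(\sqrt n)$. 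This is the delicate step, because linear attention is cubic in its input, so a careless layer-by-layer propagation of operator norms yields a super-polynomial Lipschitz bound; a refined induction that exploits the $\ell_\infty$-bound on the parameters, the FFN skip-connection identity, and the $[-M, M]$ range enforced by $\Read_{M, d}$ is needed to obtain $L_\theta \leq \exp(O(L \log n)) = \exp(O(\log^2 n))$. Plugging this into the union bound over a $\bigl(1/(n L_\theta)\bigr)$-net of $[-B, B]^P$ gives $\log N = O(P \cdot (\log B + \log L_\theta)) = O(\log n) \cdot O(\log^2 n) = O(\log^3 n)$. Combining the three ingredients then delivers $\E\{R(\hat f_\Gamma)\} - \sigma^{2} = O(n^{-2\alpha / (2\alpha+d)}) + O(\log^3 n / \Gamma)$, which is of the claimed order under the stated condition on $\Gamma$. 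The hard part is clearly the covering number control: the logarithmic depth $L = \Theta(\log n)$ from Theorem~\ref{thm:approximation} is what keeps $\log L_\theta$ at the scale $\log^2 n$, which is precisely what allows the pretraining sample complexity to remain polynomial up to polylogarithmic factors.
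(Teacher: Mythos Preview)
Your proposal is essentially the paper's proof: the fast-rate step is exactly Theorem~\ref{thm:statistical} (which invokes \citet[Theorem~11.4]{gyorfi2002distribution} for the Bernstein-type localisation you describe), the entropy bound $\log N(\cF,\delta,\|\cdot\|_\infty)=O\bigl(L^2\log n+L\log(1/\delta)\bigr)$ is Lemma~\ref{lem:covering} via the parameter-Lipschitz estimate of Lemma~\ref{lem:transformer_lipschitz}, and the two approximation brackets are handled by Theorems~\ref{thm:approximation} and~\ref{thm:locpol_main}. The only discrepancy is that the paper obtains the Lipschitz constant $\exp(O(L\log n))$ by a direct layer-by-layer composition in $\|\cdot\|_{\max}$ rather than the ``refined induction'' you anticipate; in particular, the clipping in $\Read_{M,d}$ acts only at the final output and cannot be used to control intermediate activations.
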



We achieve the minimax optimal rate by taking
$\Gamma = \Omega(n^{2\alpha/(2\alpha+d)} \log^3 n\bigr)$
and using $\Theta(\log n)$ transformer parameters.
In contrast, \citet{shen2025understanding} require
$\Gamma = \Omega \bigl(n^{(6\alpha+d)/(2\alpha+d)} \log n\bigr)$
and $\Theta(n)$ transformer parameters, while
\citet{kim2024transformers} need
$\Gamma = \Omega\bigl(n^{(2\alpha+2d)/(2\alpha+d)} \log n\bigr)$
and $\Theta\bigl(n^{d / (2\alpha+d)}\bigr)$
parameters.
These improvements are due to our new
approximation result, Theorem~\ref{thm:approximation},
which shows that transformers can efficiently
represent an approximate version of
local polynomial estimation via gradient descent.
In particular,
our use of transformer blocks permits
the approximation error to decrease
polynomially in $n$ while
the total number of parameters grows
only logarithmically.
This is because FFN layers can approximate polynomials exponentially
fast \citep{LuShenYangZhang2020}, and attention layers can implement
gradient descent
\citep{bai2023transformers,
von2023transformers}, which also converges exponentially quickly.
In the standard (i.e.\ not in-context) regression setting,
existing neural network regression constructions
need $\Omega\bigl(n^{d/(2\alpha+d)}\bigr)$ parameters
for minimax optimal estimation
\citep{schmidt2020nonparametric,LuShenYangZhang2020}.
In contrast,
the FFN layers in transformers
permit heavy parameter sharing
while the attention
layers allow data points to interact
with each other directly; see Figure~\ref{fig:block}.
We emphasise that, in contrast to some prior work, our main result
guarantees minimax optimality
regardless of the smoothness level $\alpha > 0$.
Computing the empirical risk minimiser $\hat f_\Gamma$
involves optimising over the parameters $\btheta$ of the
underlying transformer network.
In general, this is a non-convex optimisation problem, so
standard gradient-based approaches
such as Adam \citep{kingma2014adam}
and AdamW \citep{loshchilov2018decoupled}
are not generally guaranteed to converge to a global minimum.
However, our main result
(Theorem~\ref{thm:main_result})
holds if instead $\hat f_\Gamma$ is taken to be any
transformer achieving an empirical risk within
$O\bigl(n^{-2\alpha/(2\alpha+d)}\bigr)$
of the optimal value.
A detailed analysis of transformer training dynamics is
beyond the scope of this paper.


We remark that it is straightforward to adapt our results to certain other non-linear attention
mechanisms.
For instance,
$\ReLU(x) - \ReLU(-x) = x$, so two ReLU attention heads
can implement a single linear attention head.
It follows that Theorems~\ref{thm:approximation}
and~\ref{thm:main_result} hold for transformers with two ReLU
attention heads in each attention layer.
Moreover, it is possible to use two softmax attention layers followed
by $O(\log n)$ FFN layers to approximate the output of a single
linear attention layer; see Appendix~\ref{sec:softmax-approx-linear}.
Thus, similar results to our
Theorems~\ref{thm:approximation} and~\ref{thm:main_result}
hold for softmax transformers with
$\Theta(\log^2 n)$ layers.

\section{Proof Sketches}
\label{sec:proof_strategy}



The proof of our approximation result,
Theorem~\ref{thm:approximation},
proceeds via several steps; see Figure~\ref{fig:transformer_steps}.
First, we use three transformer blocks to construct
the centred and scaled covariates
$(\bX - 1_n X_{n+1}^\T) / h$,
along with the
square root of the diagonal kernel matrix $\bK_h(X_{n+1})$.
Here, it is convenient for us to use the specific kernel
$K(x) \coloneqq (1 - \|x\|_1)_+^2$ because its square root
is piecewise linear and so can be constructed exactly using
ReLU FFNs.
Next, we approximate the monomial basis
$\bP_h(X_{n+1})$, relying on the ability of
ReLU FFNs to approximate polynomials
with error $O(1/n^c)$ using $\Theta(\log n)$ layers
\citep{LuShenYangZhang2020}.
The monomial basis matrix and the responses are then
premultiplied by the square root kernel matrix,
yielding approximations of $\tilde \bX$ and $\tilde Y$.
Finally, the local polynomial estimator is given by
the solution to the least-squares optimisation problem
\eqref{eq:weighted-least-square}.
We obtain an approximately optimal solution
to this problem by
implementing gradient descent using transformers
\citep{bai2023transformers,
von2023transformers}.
We only need $\Theta(\log n)$ gradient descent steps
because the optimisation problem is strongly convex on the
high-probability
event~\eqref{eq:eigenvalue-bound}.
We keep track of the errors incurred,
first due to having access only to approximations
of $\tilde\bX$ and $\tilde Y$, and second because we
perform only finitely many steps of gradient descent. See
Appendix~\ref{sec:transformer_construction} for the proofs.

\begin{figure}[ht]
  \centering
  \scalebox{0.8}{
    \begingroup
\setlength\arraycolsep{3.5pt}

\newlength{\tfblockspacing}
\iftoggle{journal}{
  \setlength{\tfblockspacing}{0cm}
}{
  \setlength{\tfblockspacing}{0.5cm}
}

\begin{tikzpicture}

  \node at ($(0.1, 0) - (\tfblockspacing,0)$) {
    $
    \begin{pmatrix}
      \bX & Y \\
      X_{n+1}^\T & \cdot
    \end{pmatrix}
    $
  };

  \draw[thick,-Latex] (1.3, 0.0) -- (3.4, 0.0) node
  [anchor=south,midway,inner sep=1mm,xshift=-1mm] {Three blocks};

  \node at ($(6.0, 0) + (\tfblockspacing,0)$) {
    $
    \begin{pmatrix}
      \dfrac{\bX - 1_n X_{n+1}^\T}{h} &
      \sqrt{\dfrac{\bK_h(X_{n+1})}{n}}
    \end{pmatrix}
    $
  };

  \draw[thick,-Latex] (-0.7, -1.5) -- (1.8, -1.5) node
  [anchor=south,midway,inner sep=1mm,xshift=-1mm] {$\Theta(\log n)$ blocks};

  \node at ($(5.25, -1.5) + (\tfblockspacing,0)$) {
    $
    \begin{pmatrix}
      \!
      \sqrt{\dfrac{\bK_h(X_{n+1})}{n}} \bP_h(X_{n+1}) &
      \sqrt{\dfrac{\bK_h(X_{n+1})}{n}} Y
      \!
    \end{pmatrix}
    $
  };

  \draw[thick,-Latex] (-0.7, -2.8) -- (1.8, -2.8) node
  [anchor=south,midway,inner sep=1mm,xshift=-1mm] {$\Theta(\log n)$ blocks};

  \node at ($(2.27, -2.8) + (\tfblockspacing,0)$) {$w_*$};

\end{tikzpicture}

\endgroup
  }
  \caption{
    Key steps in the construction of the
    approximating transformer $f_\TF$;
    unchanged quantities are omitted for
    clarity.
    The first three blocks compute the centred and scaled
    covariates $\bX$, relative to the test point $X_{n+1}$,
    along with the kernel matrix $\bK_h(X_{n+1})$.
    Next, $\Theta(\log n)$ blocks are used to
    repeatedly multiply
    these, producing approximations
    of the kernel-weighted monomial basis
    $\tilde\bX$ and responses $\tilde Y$.
    Finally, $\Theta(\log n)$ steps of gradient
    descent are applied to arrive at an approximation
    of the optimal point $w_*$.
  }
  \label{fig:transformer_steps}
\end{figure}

Theorem~\ref{thm:main_result} is proved
using a decomposition of the population risk of
the empirical risk minimising transformer~$\hat f_\Gamma$; see
Figure~\ref{fig:proof_diagram}.
%
%
Since the transformer functions in
$\cF$ have at most $O(\log n)$ parameters, each bounded in magnitude
by $O(n^2)$, and as the transformer output
is Lipschitz in the parameters,
we are able to show (Lemma~\ref{lem:covering}) that
the covering numbers of this class satisfy
\begin{align*}
  \log N\bigl(\cF, \delta, \|\cdot\|_\infty \bigr)
  = O\biggl\{
    \log^3 n + (\log n)
    \log\biggl(\frac{1}{\delta}\biggr)
  \biggr\},
\end{align*}
where $\delta > 0$.
Applying a standard empirical process theory result
for $L_2$-empirical risk minimisers \citep{gyorfi2002distribution} yields the
following expected excess risk bound:
\begin{align*}
  &\E\bigl\{R(\hat f_\Gamma)\bigr\} - \sigma^2
  \leq
  2\bigl\{R(f_{\TF}) - R(f_{\LocPol})\bigr\}\\
  &\quad + 2\bigl\{R(f_{\LocPol}) - \sigma^2\bigr\}
  + O\biggl(
    \frac{\log^3 n + (\log n)\log\Gamma}{\Gamma}
  \biggr).
\end{align*}
The remaining terms are bounded by applying
Theorem~\ref{thm:approximation}
and Theorem~\ref{thm:locpol_main}.

\begin{figure}[ht]
  \centering
  \newlength{\proofspacing}
\iftoggle{journal}{
  \setlength{\proofspacing}{0cm}
}{
  \setlength{\proofspacing}{0.1cm}
}

\begin{tikzpicture}

  \draw [thick] (0.5, 2.0) .. controls (-2, 0.0)
  and (-3.5, 2.4) .. (-4.5, 4.3);
  \draw [thick] (-1.0, 4.3) .. controls (-3.2, 0.9)
  and (-5.0, 1.0) .. (-6.0, 1.5);
  \node at (-5.6, 1.7) {$R(f)$};
  \node at (-4.88, 3.85) {$\hat R_\Gamma(f)$};

  \draw[thick] ($(-0.3, -0.27)-(0,\proofspacing)$) rectangle (-3.1, 0.7);
  \draw[thick,-Latex] (-6, 0.4) -- (0.5, 0.4);
  \node at (-0.59, 0.02) {$\cF$};
  \node at (0.3, 0.02) {$f$};

  \fill (-1.3, 0.4) circle [radius=2.0pt] node [anchor=north] {$\hat f_\Gamma$};
  \fill (-2.4, 0.4) circle [radius=2.0pt] node [anchor=north]
  {$\vphantom{\hat f_\Gamma}f_\TF$};
  \fill (-4.96, 0.4) circle [radius=2.0pt] node [anchor=north] {$f_\LocPol$};

  \fill (-4.96, 1.24) circle [radius=2.0pt] node [anchor=north west] {(a)};
  \fill (-2.4, 2.52) circle [radius=2.0pt] node [anchor=north west] {(b)};
  \fill (-1.3, 1.25) circle [radius=2.0pt] node [anchor=south] {(c)};
  \fill (-1.3, 3.85) circle [radius=2.0pt] node [anchor=north west] {(d)};

  \draw[thick, dashed] (-1.3, 3.85) -- (-2.4, 3.85) -- (-2.4, 2.52);
  \draw[thick, dashed] (-2.4, 2.52) -- (-4.96, 2.52) -- (-4.96, 1.24);

\end{tikzpicture}
  \caption{
    Overview of the proof of Theorem~\ref{thm:main_result}.
    (a) The truncated local polynomial estimator
    attains the minimax optimal population risk of
    $R(f_\LocPol) - \sigma^2 =
    O\bigl(n^{-2 \alpha / (2 \alpha + d)}\bigr)$,
    by Theorem~\ref{thm:locpol_main}.
    (b) We construct a specific transformer with excess risk
    $R(f_\TF) - R(f_\LocPol) = O(1/n)$;
    see Theorem~\ref{thm:approximation}.
    (c) Our estimator $\hat f_\Gamma$ minimises the empirical
    risk $\hat R_\Gamma$ over the transformer class $\cF$
    (Definition~\ref{def:estimator}).
    (d) By bounding the covering numbers of $\cF$, we show that
    $\E\bigl\{R(\hat f_\Gamma)\bigr\} - 2 R(f_\TF) + \sigma^2 =
    O\bigl((\log^3 n + (\log n) \log \Gamma) / \Gamma\bigr)$;
    see Appendix~\ref{sec:covering}.
  }
  \label{fig:proof_diagram}
\end{figure}

\section{Simulations}

In this section, we compare the performance of a pretrained
transformer with local polynomial estimators. To this end, we
consider random regression tasks (drawn using a random Fourier series) with $d=3$, $\alpha=3$,
$\sigma=0.01$ and $n\in \{15,20,25,30,35\}$.  We use a
linear attention transformer with embedding dimension $\de = 256$,
FFN width $\dffn=1024$ and $L=12$ transformer blocks. The transformer
was trained for $50{,}000$ optimisation steps using AdamW \citep{loshchilov2019decoupled} with weight decay $10^{-3}$ and a cosine annealing learning rate schedule. At each optimisation step, we sampled 40 random regression functions, and
randomly generated $16$ pretraining sequences for each function (i.e.
  the covariate vectors and queries were random while the
regression function was held fixed), yielding a total of $640$ pretraining
sequences per optimisation step. The pretraining was carried out on an NVIDIA
A100 GPU (80GB) over approximately $2.25$ hours. At test time, for
each $n\in \{15,20,25,30,35\}$, we generated $10^5$ random regression
tasks, each with $n$ in-context examples, and compared the performance of our pretrained transformer with
a local cubic estimator with bandwidth
$h \in \{0.30, 0.40, 0.50, 0.60, 0.70, 0.80, 1.00, 1.25, 1.50, 2.00\}$
and ridge penalty\footnote{A
  local cubic estimator uses a monomial basis with dimension
  $\binom{6}{3} = 20$. Thus, in our simulations, we added a ridge penalty
  to~\eqref{eq:weighted-least-square} to improve its empirical
performance.} in $\{0.001, 0.005, 0.01, 0.05, 0.1, 0.5\}$ chosen via 5-fold
cross validation. Figure~\ref{fig:simulations} shows the estimated
excess risks for the pretrained transformer and the local polynomial estimator. We see that the transformer and the local polynomial estimator exhibit a similar dependence on~$n$, while the transformer outperforms the local polynomial estimator.
The code for our simulations is available at
\url{https://github.com/tianyima2000/ICL_LocPol}.

\begin{figure}
  \centering
  \includegraphics[width=\linewidth]{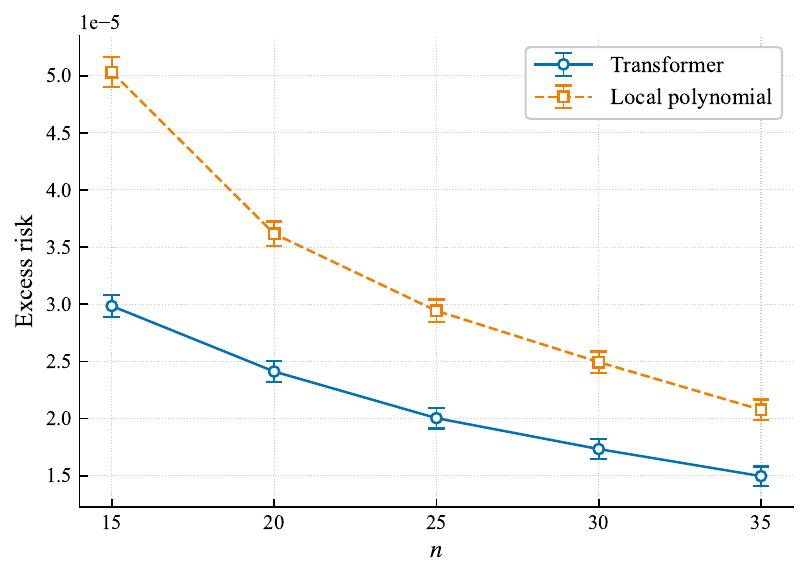}
  \caption{Estimated excess risks for a pretrained transformer and
  local polynomial estimator. Error bars represent $90\%$ confidence intervals for the excess population risks based on $10^5$ test sequences.}
  \label{fig:simulations}
\end{figure}

\section{Summary}

We have presented a theoretical study of
in-context learning for
nonparametric regression problems.
With $n$ in-context examples,
$d$-dimensional covariates and
$\alpha$-H{\"o}lder smooth
regression functions, we have shown that
suitable pretrained transformers with
only $\Theta(\log n)$ parameters are able
to attain the minimax optimal rate
$O\bigl(n^{-2\alpha / (2\alpha + d)}\bigr)$.
Moreover, we showed that this rate is
achievable whenever
$\Gamma = \Omega(n^{2\alpha/(2\alpha+d)} \log^3 n\bigr)$
pretraining sequences are available.
Our approach involved first demonstrating
that transformers are able to
approximate local polynomial estimators
to arbitrary accuracy,
and then bounding the risk of
our transformer estimator
by deriving covering number bounds
and applying results from
empirical process theory.

Future directions could include
extensions to next token prediction
in dependent data settings;
this would allow for more accurate modelling
of large language models.
Alternatively, one might attempt
to discover and exploit low-dimensional
structure (such as sparsity, an index structure
or a manifold hypothesis)
in the regression function,
thereby avoiding the curse of
dimensionality.
Finally, one could aim to establish
minimax theory for transformers trained
via gradient descent
(or related optimisation algorithms).

\section*{Acknowledgement}
The first three authors were funded by Summer Research in Mathematics bursaries from the University of Cambridge.  The last three authors were funded by RJS's European Research Council Advanced Grant 101019498.

\section*{Impact Statement}
This paper presents work whose goal is to advance the field of Machine
Learning. There are many potential societal consequences of our work, none
which we feel must be specifically highlighted here.

\bibliography{refs}
\bibliographystyle{icml2026}

\newpage
\appendix
\onecolumn

\section{Local Polynomial Estimation}
\label{sec:locpol}

We consider the classical degree-$p$
local polynomial estimator for
$d$-dimensional covariates
\citep{fan1996locpol,gyorfi2002distribution,hardle2004nonparametric,
tsybakov2009nonparametric},
truncating the
output to avoid overly poor estimation
on low-probability events.
Our main contribution in this section is to provide
high-probability bounds with sub-exponential tails
for the integrated squared error
of this estimator
(Theorem~\ref{thm:locpol}).
As an intermediate result, we also
provide a high-probability uniform guarantee on the
spectra of the associated
kernel-weighted monomial basis
matrices in Lemma~\ref{lem:locpol_Hhat},
which is also essential for showing that
transformers can successfully approximate
local polynomial methods.
We remark that truncation is important for
passing from high-probability guarantees to
bounds on the expected squared error:
see, for example,
\citet[Problem~20.4]{gyorfi2002distribution}.

\subsection{Local Polynomial Estimation Set-up}
\label{sec:locpol_setup}

Let $P_{X,\varepsilon}$ be as defined in
Section~\ref{sec:data_generating_mechanism}
and let $m \in \cH(d,\alpha, M)$ be fixed,
where $\alpha, M > 0$.
Let $(X_1, \varepsilon_1),\ldots,(X_n, \varepsilon_n)
\stackrel{\mathrm{i.i.d.}}{\sim} P_{X,\varepsilon}$ and define
\begin{equation*}
  Y_i \coloneqq m(X_i) + \varepsilon_i
\end{equation*}
for $i \in [n]$.
Suppose that $K: \R^d \to [0, C_K]$ is a Borel measurable
function supported on $[-1, 1]^d$ satisfying
$|K(x) - K(x')| \leq L_K \|x-x'\|_\infty$
for all $x, x' \in \R^d$.
Assume that $K(x) \geq c_K$ for each
$x \in [-c_K, c_K]^d$, where $c_K \in (0,1]$.
Let $h > 0$ and define $K_h(\cdot) \coloneqq K(\cdot/h)/h^d$.
For $p \in \N_0$ with $p \geq \underline{\alpha}$,
write $D \coloneqq \binom{d + p}{p}$
and let $P_h: \R^d \to \R^D$ be defined by
$P_h(x) \coloneqq \bigl(x^\nu / (\nu! h^{|\nu|}):
\nu \in \N_0^d, 0 \leq |\nu| \leq p\bigr)$,
with components ordered in increasing lexicographic ordering in $\nu$,
so that $P_h(x)^\T e_1 = 1$,
where $e_1$ denotes the first standard basis vector in $\R^D$.
We may index elements of $\R^D$ using either
regular indices $j \in [D]$
or multi-indices $\nu \in \N_0^d$, as convenient.
For $x \in [0, 1]^d$, define the random diagonal matrix
$\bK_h(x) \in \R^{n \times n}$ with $i$th diagonal entry given by
$\bK_h(x)_{i i} \coloneqq K_h(X_i - x)$ for $i \in [n]$.
Let $\bP_h(x) \in \R^{n \times D}$ be defined by
$\bP_h(x)_{i j} \coloneqq P_h(X_i - x)_j$
for $i \in [n]$ and $j \in [D]$.
Define $\hat\bH(x) \coloneqq \bP_h(x)^\T \bK_h(x) \bP_h(x) / n
\in \R^{D \times D}$
and $\bH(x) \coloneqq \E\bigl\{\hat\bH(x)\bigr\}
= \E\bigl\{P_h(X_1 - x) K_h(X_1 - x) P_h(X_1 - x)^\T\bigr\}$.
With $Y \coloneqq (Y_1, \ldots, Y_n)^\T$,
define the local polynomial estimator of degree~$p$ at $x$ by
$\hat m_n(x) \coloneqq e_1^\T \hat\bH(x)^{-1} \bP_h(x)^\T \bK_h(x) Y / n$
and let
$\tilde m_n(x) \coloneqq (-M) \lor \hat m_n(x) \land M $.
Write
$m(\bX) \coloneqq \bigl(m(X_1), \ldots, m(X_n)\bigr)^\T$ and
$\hat B_n(x) \coloneqq e_1^\T \hat\bH(x)^{-1} \bP_h(x)^\T \bK_h(x)
m(\bX) / n \in \R$.

\subsection{Results for Local Polynomial Estimation}

\begin{lemma}
  \label{lem:locpol_H}
  Under the set-up of Section~\ref{sec:locpol_setup},
  there exists $C \geq 1$,
  depending only on $c_K$, $C_K$, $c_X$, $C_X$, $d$ and $p$,
  such that for $h \leq 1/C$, we have
  $\sup_{x \in [0, 1]^d} \|\bH(x)\|_{\op} \leq C$
  and moreover, $\bH(x)$ is invertible
  for every $x \in [0, 1]^d$, with
  $\sup_{x \in [0, 1]^d} \|\bH(x)^{-1}\|_{\op} \leq C$.
\end{lemma}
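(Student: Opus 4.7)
The plan is to perform the change of variables $u = (X_1 - x)/h$, which expresses $\bH(x)$ as a fixed integral over a subset of $[-1,1]^d$, and then handle the bounds on $\|\bH(x)\|_{\op}$ and $\|\bH(x)^{-1}\|_{\op}$ separately. After the substitution---noting that the factor $h^{-d}$ in $K_h$ cancels the Jacobian, and the factors $h^{-|\nu|}$ in $P_h$ cancel against the powers of $h$ coming from $(X_1 - x)^\nu$---I would obtain
\begin{equation*}
  \bH(x) = \int_{[-1,1]^d} P(u) P(u)^\T K(u) f_X(x + hu) \mathbbm{1}_{\{x + hu \in [0,1]^d\}} \diff u,
\end{equation*}
where $P(u) \coloneqq \bigl(u^\nu / \nu! : \nu \in \N_0^d,\, |\nu| \leq p\bigr) \in \R^D$.

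The upper bound is then routine: each entry of $\bH(x)$ is bounded in absolute value by $C_K C_X \int_{[-1,1]^d} |u|^{\nu + \nu'}/(\nu!\nu'!) \diff u$, a constant depending only on $d$, $p$, $C_K$ and $C_X$. Since $\bH(x) \in \R^{D \times D}$ with $D$ fixed, this yields the desired bound on $\|\bH(x)\|_{\op}$.

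The hard part will be the uniform inverse bound $\sup_{x}\|\bH(x)^{-1}\|_{\op} \leq C$, which reduces to establishing a positive lower bound on $\lambda_{\min}(\bH(x))$, uniform in $x \in [0,1]^d$. The strategy is to restrict the integral to a subset of $[-c_K, c_K]^d$ of positive volume on which both $K$ and $f_X$ are bounded below; the subtlety is that the indicator $\mathbbm{1}_{\{x + hu \in [0,1]^d\}}$ can exclude most of $[-1,1]^d$ when $x$ is near a corner of $[0,1]^d$. To handle this, for each coordinate $j$ I would set $I_j \coloneqq [0, c_K]$ if $x_j \leq 1/2$ and $I_j \coloneqq [-c_K, 0]$ otherwise, and let $\Omega_x \coloneqq \prod_{j=1}^d I_j$. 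Provided $h \leq 1/(2c_K)$, one checks directly that $x + h \Omega_x \subseteq [0,1]^d$, while $\Omega_x \subseteq [-c_K, c_K]^d$ has Lebesgue volume $c_K^d$ and satisfies $K(u) \geq c_K$, $f_X(x + hu) \geq c_X$ on it, so that for any $v \in \R^D$,
\begin{equation*}
  v^\T \bH(x) v \geq c_K c_X \int_{\Omega_x} \bigl(v^\T P(u)\bigr)^2 \diff u.
\end{equation*}

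To conclude, I would observe that there are only $2^d$ possible orthants of the form $\prod_j I_j$ with $I_j \in \{[-c_K, 0], [0, c_K]\}$. For each such orthant $\Omega$, the Gram matrix $G_\Omega \coloneqq \int_\Omega P(u) P(u)^\T \diff u$ is strictly positive definite, since $v^\T G_\Omega v = 0$ would force the degree-$p$ polynomial $u \mapsto v^\T P(u)$ to vanish on an open set, hence identically, implying $v = 0$. Taking $\lambda_0 > 0$ to be the minimum of $\lambda_{\min}(G_\Omega)$ over these finitely many orthants then gives $v^\T \bH(x) v \geq c_K c_X \lambda_0 \|v\|_2^2$ for all $x \in [0,1]^d$, which simultaneously establishes invertibility of $\bH(x)$ and the required bound on $\|\bH(x)^{-1}\|_{\op}$.
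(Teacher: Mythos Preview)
Your proposal is correct and follows essentially the same route as the paper: both change variables to remove the $h$-dependence, bound $\|\bH(x)\|_{\op}$ entrywise, and obtain the uniform lower eigenvalue bound by restricting the integral to one of the $2^d$ orthants $\prod_j \zeta_j[0,c_K]$ that fits inside the transformed domain when $h \leq 1/(2c_K)$. The only cosmetic difference is that you finish via positive definiteness of the finitely many Gram matrices $G_\Omega$, whereas the paper argues that $v \mapsto v^\T \bH(x) v$ is continuous and strictly positive on the unit sphere and invokes compactness; these are equivalent.
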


\begin{proof}[Proof of Lemma~\ref{lem:locpol_H}]
  Throughout the proof, $C_1, C_2, \ldots > 0$ denote quantities
  depending only on $c_K$, $C_K$, $c_X$, $C_X$, $d$ and $p$.
  For $z \in [-1, 1]^d$,
  each entry of $P_h(z) K_h(z) P_h(z)^\T$
  is bounded in magnitude by $K_h(z)$, as
  $K_h(z) = 0$ unless $\|z\|_\infty \leq h$.
  Thus, for each $x \in [0, 1]^d$,
  \begin{align*}
    \|\bH(x)\|_{\op}
    &=
    \biggl\|
    \int_{[0,1]^d}
    P_h(y - x) K_h(y - x) P_h(y - x)^\T
    f_X(y)
    \,\diff y
    \biggr\|_{\op} \\
    &\leq
    D C_X
    \int_{[0,1]^d}
    \bigl\|
    P_h(y - x) K_h(y - x) P_h(y - x)^\T
    \bigr\|_{\max}
    \,\diff y \\
    &\leq
    D C_X
    \int_{[0,1]^d}
    K_h(y - x)
    \,\diff y
    \leq
    D C_X
    \int_{\R^d}
    K(z)
    \,\diff z
    \leq
    2^d C_K D C_X
    \eqqcolon C_1.
  \end{align*}
  For the second result, take $v \in \R^D$ with $\|v\|_2 = 1$.
  Since $K(x) \geq c_K$ for each $x \in [-c_K, c_K]^d$,
  and as
  either $-u/h \leq -c_K$ or $(1-u)/h \geq c_K$ for $h \leq 1/(2
  c_K)$ and $u \in [-c_K, c_K]$, we have
  for $x = (x_1,\ldots,x_d)^\T \in [0, 1]^d$ that
  \begin{align*}
    v^\T \bH(x) v
    &=
    \int_{[0,1]^d}
    K_h(y - x)
    \bigl\{v^\T P_h(y - x)\bigr\}^2
    f_X(y)
    \,\diff y \\
    &\geq
    c_X
    \int_{\prod_{j=1}^d [-x_j/h,(1-x_j)/h]}
    K(z)
    \bigl\{v^\T P_1(z)\bigr\}^2
    \,\diff z \\
    &\geq
    c_X
    c_K
    \min_{(\zeta_1,\ldots,\zeta_d) \in \{-1, 1\}^d}
    \int_{\prod_{j=1}^d (\zeta_j [0,c_K])}
    \bigl\{v^\T P_1(z)\bigr\}^2
    \,\diff z,
  \end{align*}
  where
  $(-1) \cdot [0, c_K] \coloneqq [-c_K, 0]$.
  The integrand above is a non-negative polynomial in $z$
  that is not identically zero,
  so there are finitely many points $z \in \R^d$ satisfying
  $v^\T P_1(z) = 0$.
  As the Lebesgue
  integral of an almost everywhere positive continuous function over
  a positive measure set
  is positive, we deduce that there exists
  $c(v, d, p, c_K) > 0$ such that
  $v^\T \bH(x) v \geq c_X c(v, d, p, c_K)$.
  Therefore, the map $v \mapsto v^\T \bH(x) v$
  defined for $v \in \R^D$ with $\|v\|_2 = 1$ is a continuous
  function on a compact set that is positive everywhere,
  so in particular it is bounded away from zero.
  We conclude that
  \begin{align*}
    \inf_{v \in \R^D: \, \|v\|_2 = 1}
    v^\T \bH(x) v
    &\geq
    \frac{1}{C_2},
  \end{align*}
  as required.
\end{proof}

\begin{lemma}
  \label{lem:multi_index}
  Let $d \in \N$ and $\nu \in \N_0^d$.
  For each $x, y \in \R^d$,
  \begin{align*}
    |x^\nu - y^\nu|
    &\leq
    |\nu|
    \bigl(\|x\|_\infty \lor \|y\|_\infty\bigr)^{|\nu| - 1}
    \|x - y\|_\infty.
  \end{align*}
  %
  %
\end{lemma}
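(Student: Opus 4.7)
The plan is to reduce the problem to the classical telescoping bound for the difference of two scalar products, applied to the $|\nu|$ factors that make up the monomial $x^\nu$. If $|\nu| = 0$, both sides vanish (with the standard convention that the empty product equals $1$), so I would dispose of this as a trivial case and then assume $|\nu| \geq 1$ throughout.

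Write $k \coloneqq |\nu|$. The first substantive step is to list the coordinates of $x$ (respectively $y$) with multiplicity according to $\nu$ to obtain sequences $(a_1, \ldots, a_k)$ and $(b_1, \ldots, b_k)$ in $\R$ such that, for every $i \in [k]$, there is an index $\ell_i \in [d]$ with $a_i = x_{\ell_i}$ and $b_i = y_{\ell_i}$, and therefore $x^\nu = \prod_{i=1}^k a_i$ and $y^\nu = \prod_{i=1}^k b_i$. The proof then reduces to applying the standard telescoping identity
\[
\prod_{i=1}^k a_i - \prod_{i=1}^k b_i
= \sum_{i=1}^k \left(\prod_{j < i} b_j\right)(a_i - b_i)\left(\prod_{j > i} a_j\right),
\]
which is obtained by telescoping the partial products $T_i \coloneqq \prod_{j \leq i} b_j \prod_{j > i} a_j$ for $i = 0, 1, \ldots, k$ (with empty products equal to $1$).

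The remainder is routine bookkeeping. Writing $M \coloneqq \|x\|_\infty \lor \|y\|_\infty$, each $|a_j|, |b_j| \leq M$ by construction, and $|a_i - b_i| = |x_{\ell_i} - y_{\ell_i}| \leq \|x - y\|_\infty$ for every $i \in [k]$. Applying the triangle inequality to the identity above and bounding each of the $k$ summands by $M^{k-1} \|x-y\|_\infty$ yields $|x^\nu - y^\nu| \leq k M^{k-1} \|x-y\|_\infty$, which is exactly the claimed estimate.

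There is no real obstacle: the only minor care needed is to interpret the empty products appearing in the telescoping identity as $1$ (when $i = 1$ or $i = k$) and to handle the $|\nu| = 0$ case separately so as to avoid needing to make sense of $M^{-1}$ when $M = 0$. An equivalent approach would be induction on $|\nu|$, splitting off a single factor via $x^\nu - y^\nu = x_j \bigl(x^{\nu - e_j} - y^{\nu - e_j}\bigr) + (x_j - y_j)\, y^{\nu - e_j}$ for some $j$ with $\nu_j \geq 1$ and applying the triangle inequality, but the telescoping version is marginally cleaner.
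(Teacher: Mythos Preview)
Your proof is correct and follows essentially the same telescoping approach as the paper. The only cosmetic difference is that you telescope over all $|\nu|$ individual factors, while the paper telescopes over the $d$ coordinates and then applies the univariate bound $|x_r^{\nu_r}-y_r^{\nu_r}|\leq \nu_r(|x_r|\vee|y_r|)^{\nu_r-1}|x_r-y_r|$ to each term; summing $\sum_r \nu_r=|\nu|$ gives the same result.
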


\begin{proof}[Proof of Lemma~\ref{lem:multi_index}]
  Write $x = (x_1,\ldots,x_d)^\T$ and $y = (y_1,\ldots,y_d)^\T$.  By
  telescoping the sum and the mean value theorem,
  \begin{align*}
    |x^\nu - y^\nu|
    &=
    \biggl|
    \prod_{j=1}^d x_j^{\nu_j}
    - \prod_{j=1}^d y_j^{\nu_j}
    \biggr|
    =
    \biggl|
    \sum_{r=1}^d
    \biggl\{
      \biggl(
        \prod_{j=r+1}^d x_j^{\nu_j}
      \biggr)
      \biggl(
        \prod_{\ell=1}^{r-1} y_\ell^{\nu_\ell}
      \biggr)
      (x_r^{\nu_r} - y_r^{\nu_r})
    \biggr\}
    \biggr| \\
    &\leq
    \sum_{r=1}^d
    \biggl(
      \prod_{j=r+1}^d |x_j|^{\nu_j}
    \biggr)
    \biggl(
      \prod_{\ell=1}^{r-1} |y_\ell|^{\nu_\ell}
    \biggr)
    |x_r^{\nu_r} - y_r^{\nu_r}| \\
    &\leq
    \sum_{r=1}^d
    \biggl(
      \prod_{j=r+1}^d |x_j|^{\nu_j}
    \biggr)
    \biggl(
      \prod_{\ell=1}^{r-1} |y_\ell|^{\nu_\ell}
    \biggr)
    \nu_r
    \bigl(|x_r| \lor |y_r|\bigr)^{\nu_r - 1}
    |x_r - y_r| \\
    &\leq
    \sum_{r=1}^d
    \nu_r
    \|x - y\|_\infty
    \bigl(\|x\|_\infty \lor \|y\|_\infty\bigr)^{|\nu| - 1}
    =
    |\nu|
    \bigl(\|x\|_\infty \lor \|y\|_\infty\bigr)^{|\nu| - 1}
    \|x - y\|_\infty,
  \end{align*}
  as required.
\end{proof}

\begin{lemma}
  \label{lem:locpol_Hhat}
  Assume the set-up of Section~\ref{sec:locpol_setup}.
  There exists $C \geq 1$,
  depending only on $c_K$, $C_K$, $c_X$, $C_X$, $L_K$, $d$ and $p$,
  such that, for $h \leq 1/C$, we have
  with probability at least
  $1 - \exp(-n h^d/C) / h^C$ that
  $\sup_{x \in [0, 1]^d} \|\hat \bH(x)\|_{\op} \leq C$
  and moreover,
  $\hat \bH(x)$ is invertible for every
  $x \in [0, 1]^d$, satisfying
  $\sup_{x \in [0, 1]^d} \|\hat \bH(x)^{-1}\|_{\op} \leq C$.
\end{lemma}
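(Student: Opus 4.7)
The overall strategy is to show that $\hat\bH(x)$ is close to $\bH(x)$ in operator norm, uniformly in $x \in [0,1]^d$, with the claimed sub-exponential failure probability, and then combine this with Lemma~\ref{lem:locpol_H} via a standard perturbation argument. More precisely, since Lemma~\ref{lem:locpol_H} gives $\|\bH(x)\|_{\op} \leq C_0$ and $\|\bH(x)^{-1}\|_{\op} \leq C_0$ uniformly in $x$, it suffices to establish that $\sup_{x \in [0,1]^d} \|\hat\bH(x) - \bH(x)\|_{\op} \leq 1/(2C_0)$ with the stated probability; then the Neumann series perturbation bound yields $\|\hat\bH(x)^{-1}\|_{\op} \leq 2 C_0$ and $\|\hat\bH(x)\|_{\op} \leq C_0 + 1/(2C_0)$, and we can redefine the constant $C$ accordingly.

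The first step is a pointwise concentration bound. Fix $x \in [0,1]^d$ and indices $j,k \in [D]$. Each summand defining the $(j,k)$-entry of $\hat\bH(x)$, namely $K_h(X_i - x) P_h(X_i - x)_j P_h(X_i - x)_k$, is supported on $\{\|X_i - x\|_\infty \leq h\}$, on which $|P_h(X_i - x)_\nu| \leq 1/\nu! \leq 1$. The summand is therefore bounded in magnitude by $C_K/h^d$, and its second moment is at most $C_K^2/h^{2d} \cdot \Pr(\|X_1 - x\|_\infty \leq h) \leq C_K^2 C_X (2/h)^d$, so its variance is $O(1/h^d)$. Bernstein's inequality then gives, for $t \in (0, 1]$,
\begin{equation*}
  \Pr\Bigl(|\hat\bH(x)_{jk} - \bH(x)_{jk}| > t\Bigr)
  \leq 2\exp\bigl(-c_1 n h^d t^2\bigr),
\end{equation*}
for some constant $c_1 > 0$ depending only on the allowed parameters.

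The second step upgrades this pointwise bound to a uniform bound over $x \in [0,1]^d$ by a covering argument. Using $L_K$-Lipschitzness of $K$ together with Lemma~\ref{lem:multi_index} applied to the entries of $P_h(X_i - x)$, the function $x \mapsto K_h(X_i - x) P_h(X_i - x)_j P_h(X_i - x)_k$ is Lipschitz in $\|\cdot\|_\infty$ with constant $O(h^{-(d+1)})$, and the same holds for $x \mapsto \bH(x)_{jk}$ (uniform convergence of the Lipschitz constant passes to the expectation). Taking a minimal $\|\cdot\|_\infty$-cover of $[0,1]^d$ at scale $\delta \coloneqq h^{d+2}/(2 D)$, which has cardinality at most $(2 D / h^{d+2})^d$, and taking a union bound over the $D^2$ entries, we obtain
\begin{equation*}
  \Pr\Bigl(\sup_{x \in [0,1]^d}\|\hat\bH(x) - \bH(x)\|_{\max} > t\Bigr)
  \leq 2 D^2 \Bigl(2D/h^{d+2}\Bigr)^d \exp\bigl(-c_1 n h^d (t/2)^2\bigr)
\end{equation*}
for $t$ a small enough constant. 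Choosing $t \coloneqq 1/(2 C_0 D)$ and using $\|A\|_{\op} \leq D \|A\|_{\max}$ for $A \in \R^{D \times D}$ yields $\sup_x \|\hat\bH(x) - \bH(x)\|_{\op} \leq 1/(2C_0)$ with failure probability at most $\exp(-n h^d / C) / h^C$ for a suitable constant $C \geq 1$. The final step is the perturbation argument outlined above, which converts this into the claimed operator norm bounds on $\hat\bH(x)$ and $\hat\bH(x)^{-1}$.

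The main obstacle in executing this plan is bookkeeping the interaction between the Bernstein variance proxy $1/h^d$, the Lipschitz constant $h^{-(d+1)}$, and the covering cardinality, so that all polynomial factors in $1/h$ can be absorbed into the $1/h^C$ prefactor in the failure probability while keeping the exponent of the form $-n h^d / C$. This requires choosing the net scale $\delta$ finely enough that the Lipschitz-discretisation error is controlled by the concentration scale $t$, but not so fine that the cover cardinality overwhelms the Bernstein exponent; the choice $\delta \asymp h^{d+2}$ suffices because the resulting covering log-cardinality is $O(\log(1/h))$, which is dominated by $n h^d$ for $h$ sufficiently small.
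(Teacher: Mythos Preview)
Your proposal is correct and follows essentially the same route as the paper: entrywise Bernstein concentration for $\hat\bH(x)-\bH(x)$, a Lipschitz bound of order $h^{-(d+1)}$ on $x\mapsto\hat\bH(x)$ via Lemma~\ref{lem:multi_index} and the Lipschitzness of $K$, a union bound over a $\|\cdot\|_\infty$-net of $[0,1]^d$ at scale $\asymp h^{d+1}$, and finally an eigenvalue perturbation using Lemma~\ref{lem:locpol_H}. The only cosmetic differences are that the paper invokes Weyl's inequality where you use the Neumann series, and it takes $\delta\asymp h^{d+1}$ rather than your slightly finer $\delta\asymp h^{d+2}$; both choices are absorbed into the $h^{-C}$ prefactor in the same way.
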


\begin{proof}[Proof of Lemma~\ref{lem:locpol_Hhat}]
  Throughout the proof, $C_1, C_2, \ldots > 0$ denote quantities
  depending only on $c_K$, $C_K$, $c_X$, $C_X$, $L_K$, $d$ and $p$.
  For $i \in [n]$ and $j, k \in [D]$, write
  \begin{align*}
    u_{i j k}(x)
    &\coloneqq
    P_h(X_i - x)_j K_h(X_i - x) P_h(X_i - x)_k
    - \E\bigl\{ P_h(X_i - x)_j K_h(X_i - x) P_h(X_i - x)_k \bigr\},
  \end{align*}
  so that $\hat \bH(x)_{j k} - \bH(x)_{j k} = \sum_{i=1}^n u_{i j k} / n$.
  Further, $\E(u_{i j k}) = 0$ and $|u_{i j k}| \leq 2 C_K / h^d$.
  Also,
  \begin{align*}
    \E(u_{i j k}^2)
    &\leq
    \E\bigl\{
      P_h(X_i - x)_j^2 K_h(X_i - x)^2 P_h(X_i - x)_k^2
    \bigr\}
    \leq
    \E\bigl\{
      K_h(X_i - x)^2
    \bigr\} \\
    &=
    \int_{[0, 1]^d}
    K_h(y - x)^2
    f_X(y)
    \, \diff y
    \leq
    \frac{C_X}{h^d}
    \int_{\R^d}
    K(z)^2
    \, \diff z
    \leq
    \frac{2^dC_X C_K^2}{h^d}.
  \end{align*}
  Now, by Bernstein's inequality,
  for all $t > 0$,
  \begin{align*}
    \P\Biggl(
      \biggl|
      \frac{1}{n}
      \sum_{i=1}^n u_{i j k}
      \biggr|
      > \frac{C_1 t}{\sqrt{n h^d}}
      + \frac{C_1 t^2}{n h^d}
    \Biggr)
    \leq 2 e^{-t^2/2}.
  \end{align*}
  Therefore, by a union bound and as
  $\|A\|_{\op} \leq D \|A\|_{\max}$ for $A \in \R^{D \times D}$,
  \begin{align*}
    \P\biggl(
      \bigl\|\hat \bH(x) - \bH(x)\bigr\|_{\op}
      > \frac{C_1 D t}{\sqrt{n h^d}}
      + \frac{C_1 D t^2}{n h^d}
    \biggr)
    \leq D (D+1) e^{-t^2/2}.
  \end{align*}
  Therefore, by Weyl's inequality
  \citep[e.g.][Corollary~III.2.6]{bhatia1997matrix}
  and Lemma~\ref{lem:locpol_H},
  with probability at least $1 - C_2 e^{-t^2/2}$,
  where $C_2 \geq 1$,
  %
  \begin{align*}
    \lambda_{\min}\bigl(\hat\bH(x)\bigr)
    &\geq
    \lambda_{\min}\bigl(\bH(x)\bigr)
    - \bigl\|\hat \bH(x) - \bH(x)\bigr\|_{\op}
    \geq
    \frac{1}{C_2} - \frac{C_2 D t}{\sqrt{n h^d}} - \frac{C_2 D t^2}{n h^d}, \\
    \lambda_{\max}\bigl(\hat\bH(x)\bigr)
    &\leq
    \lambda_{\max}\bigl(\bH(x)\bigr)
    + \bigl\|\hat \bH(x) - \bH(x)\bigr\|_{\op}
    \leq
    C_2 + \frac{C_2 D t}{\sqrt{n h^d}} + \frac{C_2 D t^2}{n h^d}.
  \end{align*}
  Setting $t \coloneqq \sqrt{n h^d} / (4 C_2^2 D)$, we see
  that with probability at least
  $1 - C_3 \exp(-n h^d / C_3)$,
  %
  \begin{align*}
    \lambda_{\min}\bigl(\hat\bH(x)\bigr)
    &\geq
    \frac{1}{C_2}
    - \frac{1}{4 C_2}
    - \frac{1}{16 C_2^3 D}
    \geq \frac{1}{C_3}, \\
    \lambda_{\max}\bigl(\hat\bH(x)\bigr)
    &\leq
    C_2
    + \frac{1}{4 C_2}
    + \frac{1}{16 C_2^3 D}
    \leq C_3.
  \end{align*}
  Observe that $K_h(\cdot)$ is
  bounded by
  $C_K/h^d$ and is
  $L_K/h^{d+1}$-Lipschitz
  on $\R^d$ with respect to $\|\cdot\|_\infty$.
  Further,
  for $j \in [D]$, the restriction of
  $P_h(\cdot)_j$ to $[-h,h]^d$ is
  bounded by $1$ and is
  $p/h$-Lipschitz
  with respect to $\|\cdot\|_\infty$,
  by Lemma~\ref{lem:multi_index}.
  Thus, for $x, y, z \in [0, 1]^d$
  and $j, k \in [D]$,
  \begin{align*}
    \bigl|
    P_h(z-y)_j K_h(z-y) P_h(z-y)_k
    &- P_h(z-x)_j K_h(z-x) P_h(z-x)_k
    \bigr|
    \iftoggle{journal}{}{\\ &}
    \leq
    \frac{L_K + 2 p C_K}{h^{d+1}}
    \|y - x\|_\infty
    \leq
    \frac{C_4}{h^{d+1}}
    \|y - x\|_\infty.
  \end{align*}
  %
  We deduce that $x \mapsto \hat\bH(x)$
  is $C_4 / h^{d+1}$-Lipschitz from $\|\cdot\|_\infty$ to
  $\|\cdot\|_{\max}$,
  where we may take $C_4 \geq C_3 \lor 1$,
  so it is also $C_4 D / h^{d+1}$-Lipschitz from
  $\|\cdot\|_\infty$ to $\|\cdot\|_{\op}$.
  It follows by Weyl's inequality that both
  $x \mapsto \lambda_{\min}\bigl(\hat\bH(x)\bigr)$
  and $x \mapsto \lambda_{\max}\bigl(\hat\bH(x)\bigr)$
  are $C_4 D / h^{d+1}$-Lipschitz with respect to $\|\cdot\|_\infty$.
  Let $\delta \coloneqq h^{d+1} / (2 C_4^2 D) \leq 1$ and
  let $\cX_h$ denote a $\delta$-cover of $[0, 1]^d$ with respect to
  $\|\cdot\|_\infty$ of
  cardinality at most $(2/\delta)^d$.
  By a union bound, with probability at least
  $1 - C_3 (2/\delta)^d \exp(-n h^d / C_3)$,
  \begin{align*}
    \inf_{x \in [0, 1]^d}
    \lambda_{\min} \bigl(\hat \bH(x)\bigr)
    &\geq
    \min_{x \in \cX_h}
    \lambda_{\min} \bigl(\hat \bH(x)\bigr)
    - \frac{1}{2 C_4}
    \geq
    \frac{1}{C_3} - \frac{1}{2 C_4}
    \geq
    \frac{1}{2 C_4}.
  \end{align*}
  Applying the same logic to the maximum eigenvalue,
  we see that both
  $\inf_{x \in [0, 1]^d} \lambda_{\min} \bigl(\hat \bH(x)\bigr) \geq 1 / C_5$
  and $\sup_{x \in [0, 1]^d} \lambda_{\max} \bigl(\hat \bH(x)\bigr)
  \leq C_5$,
  with probability at least
  $1 - \exp(-n h^d/C_5) / h^{C_5}$
  for $h \leq 1/C_5$.
\end{proof}

\begin{lemma}
  \label{lem:locpol_stochastic}
  Assume the set-up of Section~\ref{sec:locpol_setup}.
  There exists $C > 0$,
  depending only on $c_K$, $C_K$, $c_X$, $C_X$, $L_K$, $d$ and $p$,
  such that for all $h \leq 1/C$ and $t > 0$, we have with probability at least
  $1 - C \exp(-t^2) - \exp(-n h^d/C) / h^C$ that
  \begin{equation*}
    \int_{[0, 1]^d}
    \big(
      \hat m_n(x) - \hat B_n(x)
    \bigr)^2
    \,\diff x
    \leq
    \frac{C t^2}{n h^d}.
  \end{equation*}
\end{lemma}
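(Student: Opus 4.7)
The plan is to decompose $\hat{m}_n(x) - \hat{B}_n(x)$ as the product of $e_1^\T\hat{\bH}(x)^{-1}$ with a noise-only term, use Lemma~\ref{lem:locpol_Hhat} to remove the $\hat\bH(x)^{-1}$ factor on its good event, and then control the resulting quadratic form in $\varepsilon$ via Hanson--Wright's inequality, conditionally on the covariates. Substituting $Y = m(\bX) + \varepsilon$ gives $\hat{m}_n(x) - \hat{B}_n(x) = e_1^\T \hat{\bH}(x)^{-1} S(x)$, where $S(x) \coloneqq \frac{1}{n}\sum_{i=1}^n \varepsilon_i \phi_i(x)$ and $\phi_i(x) \coloneqq P_h(X_i - x) K_h(X_i - x) \in \R^D$. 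On the good event $\mathcal{E}_0$ from Lemma~\ref{lem:locpol_Hhat}, $\|\hat{\bH}(x)^{-1}\|_{\op} \leq C$ uniformly in $x \in [0,1]^d$, so
\begin{equation*}
\int_{[0,1]^d}\bigl(\hat{m}_n(x) - \hat{B}_n(x)\bigr)^2\,\diff x
\leq C^2 T,
\qquad
T \coloneqq \frac{1}{n^2}\varepsilon^\T M \varepsilon,
\end{equation*}
where $M \in \R^{n\times n}$ is the Gram matrix with entries $M_{ij} \coloneqq \int_{[0,1]^d} \phi_i(x)^\T \phi_j(x)\,\diff x$, so in particular $M$ is positive semidefinite.

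Next, since $\phi_i$ is supported on $\{x:\|x - X_i\|_\infty \leq h\}$ with entries bounded by a constant multiple of $C_K/h^d$, a direct computation gives $|M_{ij}| \leq C/h^d$ with $M_{ij}=0$ unless $\|X_i - X_j\|_\infty \leq 2h$, and $\sum_i M_{ii} \leq C n/h^d$ deterministically. I would then apply Bernstein's inequality to the counts $N_i \coloneqq |\{j\in[n]:\|X_j-X_i\|_\infty\leq 2h\}|$, each being a sum of Bernoullis with mean $O(n h^d)$, and take a union bound over $i\in[n]$ to obtain an event $\mathcal{E}_1$ with failure probability at most $\exp(-nh^d/C)/h^C$ on which $\max_{i\in[n]} N_i \leq Cnh^d$. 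On $\mathcal{E}_1$ this yields the spectral bounds $\|M/n^2\|_{\op}\leq C/n$ (via the row-sum bound) and $\|M/n^2\|_F^2 \leq C/(n^2 h^d)$ (via counting nonzero entries). Moreover, by conditional independence and mean-zero of $\varepsilon_i$ given $X_i$, the off-diagonal contributions to $\E[T\mid X]$ vanish and $\E[T\mid X] \leq C/(nh^d)$ holds deterministically.

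Conditional on $X_1,\ldots,X_n$, the variables $\varepsilon_i$ are independent, mean-zero and bounded by $1$, and hence sub-Gaussian with universal constant, so Hanson--Wright's inequality applied to $\varepsilon^\T (M/n^2)\varepsilon$ gives
\begin{equation*}
\P\bigl(T - \E[T\mid X] > u \bigm| X\bigr)
\leq 2\exp\bigl(-c\min\bigl\{u^2/\|M/n^2\|_F^2,\,u/\|M/n^2\|_{\op}\bigr\}\bigr).
\end{equation*}
Taking $u = C_1 t^2/(nh^d)$ and working on $\mathcal{E}_1$ with $t \geq 1$, both arguments of the minimum are bounded below by $c' t^2/h^d \geq c' t^2$ (using $h \leq 1$), yielding a sub-Gaussian tail of the form $2\exp(-c't^2)$. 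Combining with the fact that $\E[T\mid X] \leq Ct^2/(nh^d)$ whenever $t \geq 1$ and taking a union bound over $\mathcal{E}_0 \cap \mathcal{E}_1$ delivers the claim; the case $t < 1$ follows by enlarging the constant.

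The main obstacle will be obtaining uniform spectral control over $M$ on a single event whose failure probability matches the $\exp(-nh^d/C)/h^C$ form appearing in Lemma~\ref{lem:locpol_Hhat}: the sparsity pattern of $M$ depends on the random geometry of $\{X_1,\ldots,X_n\}$ at the bandwidth scale $h$, so the Bernstein concentration for each $N_i$ must be sharp enough that the union bound over $i\in[n]$ does not dominate. Once this spectral control is in place, the Hanson--Wright step is essentially routine, although it relies crucially on the conditional independence of the $\varepsilon_i$ given the covariates and on the boundedness of $\varepsilon_i$.
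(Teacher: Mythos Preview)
Your proof is correct but takes a genuinely different route from the paper's. Both begin identically: write $\hat m_n(x)-\hat B_n(x)=e_1^\T\hat\bH(x)^{-1}S(x)$, invoke Lemma~\ref{lem:locpol_Hhat} to strip off the inverse, and reduce to controlling the quadratic form $\varepsilon^\T M\varepsilon/n^2$ with $M_{ij}=\int\phi_i(x)^\T\phi_j(x)\,\diff x$. From there the arguments diverge. The paper does \emph{not} condition on the covariates; instead it sets $U_{ij}\coloneqq M_{ij}\varepsilon_i\varepsilon_j$, observes that $(U_{ij})$ is a degenerate second-order $U$-statistic in the i.i.d.\ variables $(X_i,\varepsilon_i)$ with $|U_{ij}|\leq C/h^d$, and applies Hoeffding's inequality for such $U$-statistics \citep[Theorem~4.1.12b]{de1999decoupling} in one stroke. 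This yields the $e^{-t^2}$ tail directly, with the only bad event being~$\mathcal E_0$.

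Your approach conditions on $X$, controls the random sparsity of $M$ via the neighbour counts $N_i$, derives spectral and Frobenius bounds on an auxiliary event $\mathcal E_1$, and then applies Hanson--Wright. This is sound---the union bound over $i\in[n]$ that worries you can indeed be absorbed into the $h^{-C}\exp(-nh^d/C)$ form by writing $n=(nh^d)h^{-d}$ and noting $u\,e^{-c'u}\leq e^{-c''u}$---but it costs an extra high-probability event and the associated geometric analysis. The paper's route is shorter and avoids all of this; conversely, your decomposition makes the dependence on the covariate geometry explicit and uses only the widely known Hanson--Wright inequality rather than a $U$-statistic exponential bound.
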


\begin{proof}[Proof of Lemma~\ref{lem:locpol_stochastic}]
  Throughout the proof, $C_1, C_2, \ldots > 0$ denote quantities
  depending only on $c_K$, $C_K$, $c_X$, $C_X$, $L_K$, $d$ and $p$.
  With $\varepsilon \coloneqq (\varepsilon_1, \ldots,
  \varepsilon_n)^\T$, we have
  %
  \begin{align*}
    \int_{[0, 1]^d}
    \big(
      \hat m_n(x) - \hat B_n(x)
    \bigr)^2
    \,\diff x
    &=
    \frac{1}{n^2}
    \int_{[0, 1]^d}
    \bigl(
      e_1^\T
      \hat\bH(x)^{-1}
      \bP_h(x)^\T \bK_h(x) \varepsilon
    \bigr)^2
    \,\diff x \\
    &\leq
    \frac{1}{n^2}
    \sup_{x \in [0, 1]^d}
    \bigl\|\hat\bH(x)^{-1}\bigr\|_\op^2
    \int_{[0, 1]^d}
    \bigl\|
    \bP_h(x)^\T \bK_h(x) \varepsilon
    \bigr\|_2^2
    \,\diff x.
  \end{align*}
  For $i, j \in [n]$, define the random variables
  \begin{align*}
    U_{i j}
    \coloneqq
    \sum_{k=1}^D
    \int_{[0, 1]^d}
    P_h(X_i - x)_k P_h(X_j - x)_k K_h(X_i - x) K_h(X_j - x)
    \varepsilon_i \varepsilon_j
    \,\diff x,
  \end{align*}
  so that, for each $i, j \in [n]$, we have
  $\E(U_{i j} \mid X_i, \varepsilon_i) = 0$ and
  \begin{align*}
    \int_{[0, 1]^d}
    \bigl\|
    \bP_h(x)^\T \bK_h(x) \varepsilon
    \bigr\|_2^2
    \,\diff x
    &=
    \int_{[0, 1]^d}
    \biggl\|
    \sum_{i=1}^n
    P_h(X_i - x) K_h(X_i - x) \varepsilon_i
    \biggr\|_2^2
    \,\diff x
    =
    \sum_{i=1}^n
    \sum_{j=1}^n
    U_{i j}.
  \end{align*}
  By the Cauchy--Schwarz inequality, for $i, j \in [n]$, and
  since $|\varepsilon_i| \leq 1$,
  \begin{align*}
    |U_{i j}|
    &\leq
    \max_{r \in [n]}
    \sum_{k=1}^D
    \int_{[0, 1]^d}
    P_h(X_r - x)_k^2 K_h(X_r - x)^2
    \,\diff x
    \leq
    D
    \sup_{y \in [0, 1]^d}
    \int_{[0, 1]^d}
    K_h(y - x)^2
    \,\diff x \\
    &=
    \frac{D}{h^{2d}}
    \sup_{y \in [0, 1]^d}
    \int_{[0, 1]^d}
    K\biggl(\frac{y - x}{h}\biggr)^2
    \,\diff x
    \leq
    \frac{D}{h^{d}}
    \int_{\R^d}
    K(u)^2
    \,\diff u
    \leq
    \frac{C_K^2 2^d D}{h^{d}}
    \leq \frac{C_2}{h^d}.
  \end{align*}
  In particular,
  $\sum_{i=1}^n U_{i i} \leq C_2 n / h^d$.
  Moreover, by Hoeffding's inequality
  for degenerate second-order $U$-statistics
  \citep[e.g.,][Theorem~4.1.12b]{de1999decoupling},
  for all $t > 0$,
  \begin{align*}
    \P\biggl(
      \biggl|
      \sum_{i=1}^n
      \sum_{j \in [n] \setminus \{i\}}
      U_{i j}
      \biggr|
      > \frac{C_3 n t^2}{h^d}
    \biggr)
    \leq
    C_3 e^{-t^2}.
  \end{align*}
  Therefore, with probability at least
  $1 - C_3 e^{-t^2}$, where $C_3 \geq e$,
  \begin{align*}
    \int_{[0, 1]^d}
    \bigl\|
    \bP_h(x)^\T \bK_h(x) \varepsilon
    \bigr\|_2^2
    \,\diff x
    &\leq
    \frac{C_2 n}{h^d}
    + \frac{C_3 n t^2}{h^d}
    \leq
    \frac{C_4 n t^2}{h^d},
  \end{align*}
  as there is nothing to prove if $t \in [0, 1]$.
  By Lemma~\ref{lem:locpol_Hhat} and a union bound,
  we conclude that with probability at least
  $1 - C_5 \exp(-t^2) - \exp(-n h^d/C_5) / h^{C_5}$,
  we have
  \begin{align*}
    \int_{[0, 1]^d}
    \big(
      \hat m_n(x) - \hat B_n(x)
    \bigr)^2
    \,\diff x
    &\leq
    \frac{C_5 t^2}{n h^d}.
    \qedhere
  \end{align*}
\end{proof}

\begin{lemma}
  \label{lem:locpol_bias}
  Assume the set-up of Section~\ref{sec:locpol_setup}.
  There exists $C \geq 1$,
  depending only on $c_K$, $C_K$, $c_X$, $C_X$, $L_K$, $d$, $p$ and $M$
  such that for $h \leq 1/C$, we have
  with probability at least
  $1 - \exp(-n h^d/C) / h^C$ that
  \begin{equation*}
    \int_{[0, 1]^d}
    \big(
      \hat B_n(x) - m(x)
    \bigr)^2
    \,\diff x
    \leq
    C h^{2 \alpha}.
  \end{equation*}
\end{lemma}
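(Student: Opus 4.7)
The plan is to combine the polynomial reproduction property of local polynomial fits of degree $p \geq \underline{\alpha}$ with the uniform spectral control on $\hat\bH(x)$ already established in Lemma~\ref{lem:locpol_Hhat}. For each $x \in [0,1]^d$, I would Taylor expand $m$ about $x$ to order $\underline{\alpha}$, writing $m(y) = T_x(y) + r_x(y)$, where $T_x$ is a polynomial of degree $\underline{\alpha}$. Since the order-$\underline{\alpha}$ partial derivatives of $m$ are $(\alpha - \underline{\alpha})$-H\"older with constant $M$, the integral form of Taylor's theorem gives $|r_x(y)| \leq C_1 \|y - x\|_2^\alpha$ for all $y \in [0,1]^d$, where $C_1$ depends only on $d, \alpha$ and $M$.

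Next, I would exploit the reproducing property. For any polynomial $q$ of degree at most $p$, Taylor expansion about $x$ combined with the identity $(y - x)^\nu / \nu! = h^{|\nu|} P_h(y - x)_\nu$ yields $q(y) = w^\T P_h(y - x)$, where $w \in \R^D$ is defined by $w_\nu \coloneqq h^{|\nu|} \partial_\nu q(x)$. In particular $e_1^\T w = q(x)$, since the first coordinate in our ordering corresponds to $\nu = \mathbf{0}$. Rewriting $q(\bX) = \bP_h(x) w$, we obtain the reproducing identity $e_1^\T \hat\bH(x)^{-1} \bP_h(x)^\T \bK_h(x) q(\bX) / n = q(x)$ on any event on which $\hat\bH(x)$ is invertible. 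Applying this with $q = T_x$ (so that $T_x(x) = m(x)$) gives
\[
  \hat B_n(x) - m(x) = \frac{1}{n} e_1^\T \hat\bH(x)^{-1} \bP_h(x)^\T \bK_h(x) r_x(\bX),
\]
where $r_x(\bX) \in \R^n$ has $i$th entry $r_x(X_i)$.

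To control the right-hand side, I would bound each entry of $\bP_h(x)^\T \bK_h(x) r_x(\bX) / n$ separately. On the support of $K_h(\cdot - x)$, we have $\|X_i - x\|_\infty \leq h$, hence both $|r_x(X_i)| \leq C_1 d^{\alpha/2} h^\alpha$ and $|P_h(X_i - x)_\nu| \leq 1$. Consequently each such entry is dominated in absolute value by $C_2 h^\alpha \hat\bH(x)_{1,1}$, using $\hat\bH(x)_{1,1} = n^{-1} \sum_i K_h(X_i - x)$ thanks to $P_h(\cdot)_1 = 1$. On the event of Lemma~\ref{lem:locpol_Hhat}, we simultaneously have $\sup_x \|\hat\bH(x)^{-1}\|_\op \leq C$ and $\sup_x \hat\bH(x)_{1,1} \leq \sup_x \|\hat\bH(x)\|_\op \leq C$, using the positive semi-definiteness of $\hat\bH(x)$. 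Combining these gives $|\hat B_n(x) - m(x)| \leq C_3 h^\alpha$ uniformly in $x \in [0,1]^d$ on this event; squaring and integrating over the unit cube then yields the claim. The stochastic content is entirely absorbed into Lemma~\ref{lem:locpol_Hhat}, so no substantial obstacle remains; the key conceptual step is the polynomial reproduction identity, which converts the bias analysis into a deterministic, support-of-kernel argument driven by the H\"older remainder.
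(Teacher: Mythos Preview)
Your argument is correct and, in fact, more economical than the paper's. Both proofs start identically: Taylor expand $m$ about $x$, use the polynomial reproduction property of the degree-$p$ fit to cancel the polynomial part, and arrive at the pointwise estimate
\[
  \bigl|\hat B_n(x) - m(x)\bigr|
  \;\leq\;
  C_1 h^{\alpha}\,\bigl\|\hat\bH(x)^{-1}\bigr\|_{\op}\cdot
  \frac{1}{n}\sum_{i=1}^n K_h(X_i-x).
\]
The divergence is in how to handle the kernel-density factor $\frac{1}{n}\sum_i K_h(X_i-x)$. The paper squares, integrates over $x$, and then mounts a separate concentration argument: it expands $\int_{[0,1]^d}\bigl(\sum_i K_h(X_i-x)\bigr)^2\,\diff x$ into a degenerate second-order $U$-statistic plus linear and deterministic terms, and applies Hoeffding-type bounds to show this is $O(n^2)$ with the right tail. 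You instead observe that $\frac{1}{n}\sum_i K_h(X_i-x)=\hat\bH(x)_{1,1}\leq\lambda_{\max}\bigl(\hat\bH(x)\bigr)$, which is already controlled uniformly in $x$ by Lemma~\ref{lem:locpol_Hhat}. This immediately yields the \emph{uniform} bound $\sup_x|\hat B_n(x)-m(x)|\leq C h^{\alpha}$ on that same event, with no further probabilistic work. Your route is shorter, gives a stronger (pointwise) conclusion, and the probability bound matches the statement exactly because all stochastic content is absorbed into the single invocation of Lemma~\ref{lem:locpol_Hhat}. The paper's approach buys nothing additional here; its $U$-statistic machinery is essentially redundant given the uniform spectral control already in hand.
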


\begin{proof}[Proof of Lemma~\ref{lem:locpol_bias}]
  Throughout the proof, $C_1, C_2, \ldots > 0$ denote quantities
  depending only on $c_K$, $C_K$, $c_X$, $C_X$, $L_K$, $d$, $p$ and $M$.
  By Taylor's theorem,
  as $m \in \cH(d,\alpha, M)$,
  for all $x, x' \in [0, 1]^d$,
  there exists $\tilde x$
  on the line segment between $x$ and $x'$ such that
  \begin{align}
    \label{eq:locpol_bias}
    m(x') - m(x)
    &=
    \!\!\!\!
    \sum_{\nu \in \N_0^d: |\nu| \in [\underline\alpha]}
    \!\!\!\!
    \frac{(x' - x)^\nu}{\nu!}
    \partial_\nu m(x)
    +
    \!\!\!\!
    \sum_{\nu \in \N_0^d: |\nu| = \underline\alpha}
    \!\!\!\!
    \frac{(x' - x)^\nu}{\nu!}
    \bigl\{
      \partial_\nu m(\tilde x)
      - \partial_\nu m(x)
    \bigr\}.
  \end{align}
  Since $p \geq \underline\alpha$,
  the first term on the right-hand side of \eqref{eq:locpol_bias} satisfies
  \begin{align}
    \label{eq:locpol_bias_01}
    \sum_{\nu \in \N_0^d: |\nu| \in [\underline\alpha]}
    \frac{(x' - x)^\nu}{\nu!}
    \partial_\nu m(x)
    &=
    \!
    \sum_{\nu \in \N_0^d: |\nu| \in [\underline\alpha]}
    h^{|\nu|} P_h(x' - x)^\T e_\nu
    \partial_\nu m(x).
  \end{align}
  For the second term on the right-hand side of \eqref{eq:locpol_bias},
  for $\|x' - x\|_\infty \leq h$, we have
  \begin{align}
    \label{eq:locpol_bias_02}
    \biggl|
    \sum_{\nu \in \N_0^d: |\nu| = \underline\alpha}
    \!\!
    \frac{(x' - x)^\nu}{\nu!}
    \bigl\{
      \partial_\nu m(\tilde x)
      - \partial_\nu m(x)
    \bigr\}
    \biggr|
    &\leq
    \sum_{\nu \in \N_0^d: |\nu| = \underline\alpha}
    \!\!
    h^{\underline\alpha}
    M \|\tilde x - x\|_2^{\alpha - \underline\alpha}
    \leq
    C_1
    h^{\alpha}.
  \end{align}
  Note that $1_n \coloneqq (1, \ldots, 1)^\T = \bP_h(x) e_1 \in \R^n$, so by
  \eqref{eq:locpol_bias}, \eqref{eq:locpol_bias_01}
  and \eqref{eq:locpol_bias_02},
  \begin{align*}
    \bigl|
    \hat B_n(x)
    \iftoggle{journal}{}{&}
    - m(x)
    \bigr|
    \iftoggle{journal}{&}{}
    =
    \bigl|
    e_1^\T
    \hat\bH(x)^{-1}
    \bP_h(x)^\T \bK_h(x) m(\bX) / n
    - m(x)
    \bigr| \\
    &=
    \frac{1}{n}
    \bigl|
    e_1^\T
    \hat\bH(x)^{-1}
    \bP_h(x)^\T \bK_h(x)
    \bigl\{ m(\bX) - m(x) 1_n \bigr\}
    \bigr| \\
    &=
    \frac{1}{n}
    \biggl|
    e_1^\T
    \hat\bH(x)^{-1}
    \sum_{i=1}^n
    P_h(X_i - x) K_h(X_i - x)
    \bigl\{ m(X_i) - m(x) \bigr\}
    \biggr| \\
    &\leq
    \iftoggle{journal}{}{\!\!\!\!}
    \sum_{\nu \in \N_0^d: |\nu| \in [\underline\alpha]}
    \iftoggle{journal}{}{\!\!\!}
    \bigl|
    h^{|\nu|}
    e_1^\T
    e_\nu
    \partial_\nu m(x)
    \bigr|
    + \frac{C_1 h^\alpha}{n}
    \bigl\|\hat\bH(x)^{-1}\bigr\|_\op
    \biggl\|\sum_{i=1}^n \|P_h(X_i-x)\|_\infty 1_D
    K_h(X_i-x) \biggr\|_2
    \\
    &\leq
    \frac{C_1 \sqrt{D} h^\alpha}{n}
    \bigl\|\hat\bH(x)^{-1}\bigr\|_\op
    \sum_{i=1}^n
    K_h(X_i - x).
  \end{align*}
  Therefore,
  \begin{align*}
    \int_{[0, 1]^d}
    \bigl(
      \hat B_n(x) - m(x)
    \bigr)^2
    \,\diff x
    &\leq
    \frac{C_1^2 D h^{2\alpha}}{n^2}
    \sup_{x \in [0, 1]^d}
    \bigl\|\hat\bH(x)^{-1}\bigr\|_\op^2
    \int_{[0, 1]^d}
    \biggl(
      \sum_{i=1}^n
      K_h(X_i - x)
    \biggr)^2
    \,\diff x.
  \end{align*}
  For $i \in [n]$ and $x \in [0, 1]^d$, let
  $u(x) \coloneqq \E\bigl\{K_h(X_1 - x)\bigr\}$ and
  $U_{i}(x) \coloneqq K_h(X_i - x) - u(x)$, so
  \begin{align}
    \nonumber
    &\int_{[0, 1]^d}
    \biggl(
      \sum_{i=1}^n
      K_h(X_i - x)
    \biggr)^2
    \,\diff x
    =
    \sum_{i=1}^n
    \sum_{j=1}^n
    \int_{[0, 1]^d}
    K_h(X_i - x)
    K_h(X_j - x)
    \,\diff x \\
    \nonumber
    &\quad=
    \sum_{i=1}^n
    \sum_{j=1}^n
    \int_{[0, 1]^d}
    \bigl\{U_i(x) + u(x)\bigr\}
    \bigl\{U_j(x) + u(x)\bigr\}
    \,\diff x \\
    \label{eq:bias_new_1}
    &\quad=
    \sum_{i=1}^n
    \sum_{j=1}^n
    \int_{[0, 1]^d}
    U_i(x) U_j(x)
    \,\diff x
    + 2 n
    \sum_{i=1}^n
    \int_{[0, 1]^d}
    u(x)
    U_i(x)
    \,\diff x
    + n^2 \int_{[0, 1]^d}
    u(x)^2
    \,\diff x.
  \end{align}
  Now, $0 \leq u(x) \leq 2^d C_K C_X$, so
  \begin{align*}
    \int_{[0, 1]^d}
    u(x)^2
    \,\diff x
    &\leq
    2^{2 d} C_K^2 C_X^2
    \leq C_2.
  \end{align*}
  %
  Next, for $i \in [n]$,
  \begin{align*}
    \biggl|
    \int_{[0, 1]^d} u(x) U_i(x) \,\diff x
    \biggr|
    &\leq
    2^d C_K C_X
    \int_{[0, 1]^d}
    \bigl|K_h(X_i - x) - u(x)\bigr|
    \,\diff x
    \leq
    2^{2 d} C_K^2 C_X
    + 2^{2d} C_K^2 C_X^2
    \leq C_3.
  \end{align*}
  Therefore, by Hoeffding's inequality,
  for each $t > 0$, we have with probability at least
  $1 - e^{-t^2}$ that
  \begin{align*}
    \sum_{i=1}^n
    \int_{[0, 1]^d}
    u(x)
    U_i(x)
    \,\diff x
    \leq
    C_3 t \sqrt{2n}.
  \end{align*}
  Since $|U_i(x)| \leq C_K h^{-d}$
  for each $i \in [n]$, we have
  for $i, j \in [n]$ that
  \begin{align*}
    \biggl|
    \int_{[0, 1]^d}
    U_i(x) U_j(x)
    \,\diff x
    \biggr|
    &\leq
    C_K h^{-d}
    \int_{[0, 1]^d}
    \bigl|K_h(X_i - x) - u(x)\bigr|
    \,\diff x
    \leq
    C_4 h^{-d}.
  \end{align*}
  In particular,
  $\sum_{i=1}^n \int_{[0, 1]^d} U_i(x)^2 \,\diff x \leq C_4 n h^{-d}$.
  Further, by Hoeffding's inequality
  for degenerate second-order $U$-statistics
  \citep[e.g.,][Theorem~4.1.12b]{de1999decoupling},
  for all $t > 0$,
  with probability at least $1 - C_5 e^{-t^2}$,
  %
  \begin{align*}
    \sum_{i=1}^n
    \sum_{j=1}^n
    \int_{[0, 1]^d}
    U_i(x) U_j(x)
    \,\diff x
    \leq
    C_5 n h^{-d} t^2.
  \end{align*}
  Combining the bounds on terms in \eqref{eq:bias_new_1},
  we deduce that,
  with probability at least $1 - C_6 e^{-t^2}$,
  \begin{align*}
    \int_{[0, 1]^d}
    \biggl(
      \sum_{i=1}^n
      K_h(X_i - x)
    \biggr)^2
    \,\diff x
    &\leq
    C_6 n h^{-d} t^2
    + C_6 t n^{3/2}
    + C_6 n^2.
  \end{align*}
  Taking $t \coloneqq \sqrt{n h^d}$ and since $h \leq 1$,
  we obtain that
  with probability at least $1 - C_6 \exp(-n h^d)$,
  \begin{align*}
    \int_{[0, 1]^d}
    \biggl(
      \sum_{i=1}^n
      K_h(X_i - x)
    \biggr)^2
    \,\diff x
    &\leq
    3 C_6 n^2.
  \end{align*}
  Therefore, by Lemma~\ref{lem:locpol_Hhat}, with probability at least
  $1 - \exp(-n h^d / C_7) / h^{C_7}$,
  \begin{align*}
    \int_{[0, 1]^d}
    \bigl(
      \hat B_n(x) - m(x)
    \bigr)^2
    \,\diff x
    &\leq
    C_7 h^{2\alpha},
  \end{align*}
  as required.
\end{proof}

\begin{theorem}
  \label{thm:locpol}
  Assume the set-up of Section~\ref{sec:locpol_setup}.
  There exists $C \geq 1$
  depending only on $c_K$, $C_K$, $c_X$, $C_X$, $L_K$, $d$, $p$ and $M$
  such that, for $h \leq 1/C$ and all $t > 0$,
  with probability at least
  $1 - C \exp(-t^2) - \exp(-n h^d/C) / h^C$,
  \begin{equation*}
    \int_{[0, 1]^d}
    \big(
      \hat m_n(x) - m(x)
    \bigr)^2
    \,\diff x
    \leq
    C h^{2 \alpha}
    + \frac{C t^2}{n h^d}.
  \end{equation*}
  In particular, if $h = n^{-1/(2 \alpha + d)}$, then with probability
  at least $1 - Ce^{-t^2}
  - n^C e^{-n^{2\alpha / (2\alpha+d)} / C}$,
  \begin{align*}
    \int_{[0, 1]^d}
    \big(
      \hat m_n(x) - m(x)
    \bigr)^2
    \,\diff x
    &\leq
    C t^2 n^{-\frac{2\alpha}{2\alpha + d}}.
  \end{align*}
  Moreover, under the same condition on $h$,
  the truncated local polynomial estimator satisfies
  \begin{align*}
    \int_{[0, 1]^d}
    \E\Bigl\{
      \big(
        \tilde m_n(x) - m(x)
      \bigr)^2
    \Bigr\}
    \,\diff x
    &\leq
    C n^{-\frac{2\alpha}{2\alpha + d}}.
  \end{align*}
\end{theorem}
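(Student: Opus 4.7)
The plan is to obtain the first inequality by combining Lemmas~\ref{lem:locpol_stochastic} and~\ref{lem:locpol_bias} via the decomposition $\hat m_n(x) - m(x) = \{\hat m_n(x) - \hat B_n(x)\} + \{\hat B_n(x) - m(x)\}$. Applying $(a+b)^2 \leq 2a^2 + 2b^2$ pointwise in $x$ and integrating over $[0,1]^d$ yields
\begin{equation*}
\int_{[0,1]^d} \bigl(\hat m_n(x) - m(x)\bigr)^2 \,\diff x \leq 2 \int_{[0,1]^d} \bigl(\hat m_n(x) - \hat B_n(x)\bigr)^2 \,\diff x + 2 \int_{[0,1]^d} \bigl(\hat B_n(x) - m(x)\bigr)^2 \,\diff x,
\end{equation*}
after which a union bound over the two good events controls the summands by $Ct^2/(nh^d)$ and $Ch^{2\alpha}$ respectively, giving the first inequality after redefining $C$. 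Substituting $h = n^{-1/(2\alpha+d)}$ balances these scales, since $h^{2\alpha} = 1/(nh^d) = n^{-2\alpha/(2\alpha+d)}$; the failure probability $\exp(-nh^d/C)/h^C$ becomes $n^{C/(2\alpha+d)}\exp(-n^{2\alpha/(2\alpha+d)}/C)$, absorbed into $n^C\exp(-n^{2\alpha/(2\alpha+d)}/C)$ after adjusting $C$, yielding the second inequality.

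For the third inequality, the crucial observation is that projection onto the convex set $[-M,M]$, which contains the range of $m$, is non-expansive, so $|\tilde m_n(x) - m(x)| \leq |\hat m_n(x) - m(x)|$ pointwise, and moreover $|\tilde m_n(x) - m(x)| \leq 2M$ always. Writing $I \coloneqq \int_{[0,1]^d} \bigl(\tilde m_n(x) - m(x)\bigr)^2 \,\diff x$ and $\tau \coloneqq n^{2\alpha/(2\alpha+d)}$, tail integration gives
\begin{equation*}
\E(I) = \int_0^{4M^2} \P(I > s) \,\diff s.
\end{equation*}
For $s$ below a threshold of order $1/\tau$, I use the trivial bound $\P(I > s) \leq 1$; for larger $s$, setting $t^2 = s\tau/C$ in the second inequality gives $\P(I > s) \leq Ce^{-s\tau/C} + n^C e^{-\tau/C}$. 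The first two contributions integrate to $O(1/\tau)$, while the residual $4M^2 n^C e^{-\tau/C}$ decays super-polynomially in $n$ and is therefore negligible relative to the minimax rate $n^{-2\alpha/(2\alpha+d)}$.

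The only obstacle I anticipate is the familiar one of converting a high-probability bound into an expectation bound without losing control on the low-probability event. Truncation to $[-M,M]$ addresses this by rendering $I$ a bounded random variable, and the sub-exponential failure probability supplied by Lemma~\ref{lem:locpol_Hhat} ensures that the bad-event contribution $4M^2 n^C \exp(-n^{2\alpha/(2\alpha+d)}/C)$ decays faster than any polynomial in $n$, and in particular faster than $n^{-2\alpha/(2\alpha+d)}$.
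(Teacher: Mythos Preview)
Your proposal is correct and follows essentially the same approach as the paper: the decomposition via $\hat B_n$ with Lemmas~\ref{lem:locpol_stochastic} and~\ref{lem:locpol_bias} for the first two claims, and tail integration of the bounded random variable $I$ for the third. The paper's proof is nearly identical, differing only in minor presentation choices (it rescales the integration variable in the tail bound and leaves the projection non-expansiveness implicit, whereas you state it explicitly).
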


\begin{proof}[Proof of Theorem~\ref{thm:locpol}]
  Throughout the proof, $C_1, C_2, \ldots > 0$ are quantities
  depending only on $c_K$, $C_K$, $c_X$, $C_X$, $L_K$, $d$, $p$ and $M$.
  For the first result,
  by Lemmas~\ref{lem:locpol_stochastic}
  and~\ref{lem:locpol_bias},
  with probability at least
  $1 - C_1 \exp(-t^2) - \exp(-n h^d/C_1) / h^{C_1}$,
  \begin{align*}
    \int_{[0, 1]^d}
    \!
    \big(
      \hat m_n(x) - m(x)
    \bigr)^2
    \,\diff x
    &\leq
    2
    \int_{[0, 1]^d}
    \!
    \big(
      \hat m_n(x) - \hat B_n(x)
    \bigr)^2
    \,\diff x
    + 2 \int_{[0, 1]^d}
    \!
    \big(
      \hat B_n(x) - m(x)
    \bigr)^2
    \,\diff x
    \iftoggle{journal}{}{\\ &}
    \leq
    \frac{C_1 t^2}{n h^d}
    + C_1 h^{2 \alpha}.
  \end{align*}
  %
  For the second bound,
  taking $h = n^{-1/(2 \alpha + d)}$ yields
  \begin{align*}
    \int_{[0, 1]^d}
    \big(
      \hat m_n(x) - m(x)
    \bigr)^2
    \,\diff x
    \leq
    C_1 (t^2 + 1)
    n^{-\frac{2 \alpha}{2 \alpha + d}}
    \leq
    C_2 t^2 n^{-\frac{2 \alpha}{2 \alpha + d}},
  \end{align*}
  with probability at least
  $1 - C_1 \exp(-t^2) - n^{C_1}
  \exp(-n^{2\alpha/(2\alpha + d)} / C_1)$,
  as taking $C_1 \geq e$ makes
  this probability trivial unless $t \geq 1$.
  For the third inequality, by Fubini's theorem and
  integrating the tail probability, as
  $\sup_{x \in [0, 1]^d} |\tilde m_n(x) - m(x)| \leq 2 M$,
  \begin{align*}
    &\int_{[0, 1]^d}
    \E\Bigl\{
      \big(
        \tilde m_n(x) - m(x)
      \bigr)^2
    \Bigr\}
    \,\diff x
    =
    C_2 n^{\frac{-2\alpha}{2\alpha + d}}
    \,
    \E\Biggl\{
      \frac{n^{\frac{2\alpha}{2\alpha + d}}}{C_2}
      \int_{[0, 1]^d}
      \big(
        \tilde m_n(x) - m(x)
      \bigr)^2
      \,\diff x
    \Biggr\} \\
    &\qquad=
    C_2 n^{\frac{-2\alpha}{2\alpha + d}}
    \int_0^\infty
    \P \biggl(
      \int_{[0, 1]^d}
      \big(
        \tilde m_n(x) - m(x)
      \bigr)^2
      \,\diff x
      > C_2 s n^{-\frac{2\alpha}{2\alpha + d}}
    \biggr)
    \,\diff s \\
    &\qquad\leq
    C_2 n^{\frac{-2\alpha}{2\alpha + d}}
    \int_0^{4 M^2 n^{\frac{2\alpha}{2\alpha + d}} / C_2}
    \Bigl\{
      C_1 \exp(-s)
      + n^{C_1}
      \exp\bigl(-n^{2\alpha/(2\alpha + d)} / C_1\bigr)
    \Bigr\}
    \,\diff s \\
    &\qquad\leq
    C_1 C_2 n^{\frac{-2\alpha}{2\alpha + d}}
    + 4 M^2 n^{C_1}
    \exp\bigl(-n^{2\alpha/(2\alpha + d)} / C_1\bigr).
  \end{align*}
  Note that
  $C_1 (C_1 + 1) \log n - n^{2\alpha/(2\alpha+d)}
  \to -\infty$
  as $n \to \infty$.
  Thus, there exists $C_3 > 0$ such that
  for $n \geq C_3$, we have
  $n^{C_1} \exp\bigl(-n^{2\alpha/(2\alpha + d)} / C_1\bigr) \leq 1/n$.
  We deduce that for $n \geq C_3$,
  \begin{align*}
    \int_{[0, 1]^d}
    \E\Bigl\{
      \big(
        \tilde m_n(x) - m(x)
      \bigr)^2
    \Bigr\}
    \,\diff x
    &\leq
    C_1 C_2
    n^{\frac{-2\alpha}{2\alpha + d}}
    +
    \frac{4 M^2}{n}
    \leq
    C_4 n^{\frac{-2\alpha}{2\alpha + d}}.
  \end{align*}
  As $\sup_{x \in [0, 1]^d}|\tilde m_n(x) - m(x)| \leq 2 M$,
  this holds for all $n \in \N$
  after increasing $C_4$ to $C_5$ if necessary.
\end{proof}

\section{Approximation Theory}

We construct an explicit transformer
that produces outputs similar to those
of truncated local polynomial estimation,
keeping track of its
architecture and parameter magnitudes.

\subsection{ReLU Neural Networks}
We begin by summarising some approximation properties of ReLU neural
networks that will be useful later for our transformer construction.
\begin{definition}
  Let $\din, \dout, N, L \in \N$ and $B>0$.
  A function $f : \R^{\din} \to
  \R^{\dout}$ is a \emph{(ReLU) neural network}
  with width $N$, depth $L$ and all parameters bounded by $B$ if
  there exist $\bW_\ell \in [-B,B]^{ d_{\ell} \times d_{\ell-1}}$
  and $b_{\ell} \in [-B,B]^{d_{\ell}}$ for $\ell\in[L+1]$, where
  $d_0\coloneqq \din$, $d_{L+1} \coloneqq \dout$
  and $d_{\ell} \in [N]$ for $\ell\in[L]$, such that
  \begin{align*}
    f(\cdot) = A_{L+1} \circ \ReLU \circ A_L \circ \ReLU \circ \cdots
    \circ A_2 \circ \ReLU \circ A_1 (\cdot),
  \end{align*}
  where $A_{\ell}(z) \coloneqq \bW_\ell z + b_{\ell}$ for $\ell\in[L+1]$.
\end{definition}
We may assume without loss of generality that $d_{\ell} = N$ for
$\ell\in[L]$ by padding the weight matrices $\bW_{\ell}$ and bias
vectors $b_{\ell}$ with zeros.


\begin{lemma}[Network composition] \label{lemma:network-composition}
  Suppose that for $r\in\{1,2\}$,
  \begin{align*}
    f^{(r)}(\cdot) \coloneqq A_{L^{(r)}+1}^{(r)} \circ \ReLU \circ
    \cdots \circ A_2^{(r)} \circ \ReLU \circ A_1^{(r)} (\cdot),
  \end{align*}
  where $A_{\ell}^{(r)}(z) \coloneqq \bW_\ell^{(r)} z +
  b_{\ell}^{(r)}$, $\bW_\ell^{(r)} \in
  [-B^{(r)},B^{(r)}]^{d_{\ell}^{(r)} \times d_{\ell-1}^{(r)}}$ and
  $b_{\ell}^{(r)} \in [-B^{(r)},B^{(r)}]^{d_{\ell}^{(r)}}$ for
  $\ell\in[L^{(r)}+1]$. Suppose further that the output dimension of
  $f^{(1)}$ is equal to the input dimension of $f^{(2)}$,
  i.e.~$d_{L^{(1)}+1}^{(1)} = d_0^{(2)} \eqqcolon m$. Then $f^{(2)}
  \circ f^{(1)}$ is a neural network with width
  $\max_{\ell\in[L^{(1)}]}d_{\ell}^{(1)} \vee
  \max_{\ell\in[L^{(2)}]}d_{\ell}^{(2)}$, depth $L^{(1)}+L^{(2)}$ and
  all parameters bounded by
  $\bigl\{m\|\bW_1^{(2)}\|_{\max}\bigl(\|\bW_{L^{(1)}+1}^{(1)}\|_{\max} \vee
    \|b_{L^{(1)}+1}^{(1)}\|_{\infty}\bigr) +
  \|b_1^{(2)}\|_{\infty}\bigr\} \vee B^{(1)} \vee B^{(2)}$.
\end{lemma}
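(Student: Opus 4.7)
The plan is to write out the composition $f^{(2)}\circ f^{(1)}$ explicitly and observe that the only structural issue is that the final affine map $A^{(1)}_{L^{(1)}+1}$ of $f^{(1)}$ is followed immediately by the first affine map $A^{(2)}_1$ of $f^{(2)}$ with no intervening ReLU. Since the composition of two affine maps is again affine, these two can be fused into a single affine layer, turning the concatenation into a valid ReLU network of the claimed form. All other layers of $f^{(1)}$ and $f^{(2)}$ remain untouched.

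Concretely, I would define the combined affine map
\begin{align*}
  \tilde A(z)
  &\coloneqq
  A^{(2)}_1 \circ A^{(1)}_{L^{(1)}+1}(z)
  = \bW_1^{(2)} \bW_{L^{(1)}+1}^{(1)} z
  + \bW_1^{(2)} b_{L^{(1)}+1}^{(1)} + b_1^{(2)},
\end{align*}
so that
\begin{align*}
  f^{(2)}\circ f^{(1)}
  &= A^{(2)}_{L^{(2)}+1}\circ \ReLU\circ\cdots\circ A^{(2)}_2
  \circ \ReLU\circ \tilde A\circ \ReLU\circ A^{(1)}_{L^{(1)}}\circ\cdots
  \circ \ReLU\circ A^{(1)}_1,
\end{align*}
which is a ReLU network with exactly $L^{(1)}+L^{(2)}$ hidden ReLU layers. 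The hidden widths are those of $f^{(1)}$ together with those of $f^{(2)}$, giving max width $\max_{\ell\in[L^{(1)}]}d_\ell^{(1)}\vee\max_{\ell\in[L^{(2)}]}d_\ell^{(2)}$, as required.

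For the parameter bound, the only parameters that change are those of $\tilde A$; all others are bounded by $B^{(1)}\vee B^{(2)}$. Since the inner dimension equals $m$, each entry of $\bW_1^{(2)}\bW_{L^{(1)}+1}^{(1)}$ is a sum of $m$ products of entries, and so is bounded in absolute value by $m\|\bW_1^{(2)}\|_{\max}\|\bW_{L^{(1)}+1}^{(1)}\|_{\max}$. Similarly, each entry of $\bW_1^{(2)}b_{L^{(1)}+1}^{(1)}+b_1^{(2)}$ is bounded by $m\|\bW_1^{(2)}\|_{\max}\|b_{L^{(1)}+1}^{(1)}\|_\infty + \|b_1^{(2)}\|_\infty$. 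Both of these are majorised by
\begin{align*}
  m\|\bW_1^{(2)}\|_{\max}\bigl(\|\bW_{L^{(1)}+1}^{(1)}\|_{\max}\vee\|b_{L^{(1)}+1}^{(1)}\|_\infty\bigr)+\|b_1^{(2)}\|_\infty,
\end{align*}
which yields the stated parameter bound after taking a maximum with $B^{(1)}$ and $B^{(2)}$.

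This proof is essentially bookkeeping, so no step presents a genuine obstacle; the only mild subtlety is keeping track of the fact that padding hidden layers with zeros preserves parameter bounds and so can be used freely to align widths if desired.
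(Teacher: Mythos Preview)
Your proposal is correct and follows exactly the same approach as the paper: fuse the two adjacent affine maps $A^{(2)}_1\circ A^{(1)}_{L^{(1)}+1}$ into a single affine layer and read off the depth, width, and parameter bounds. The paper's proof is in fact even terser than yours, writing only the composed affine map and saying ``the result follows''; your version just makes the bookkeeping explicit.
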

\begin{proof}
  Since
  \begin{align*}
    \bW_1^{(2)}\bigl(\bW_{L^{(1)}+1}^{(1)} z +
    b_{L^{(1)}+1}^{(1)}\bigr) + b_1^{(2)} = \bW_1^{(2)}
    \bW_{L^{(1)}+1}^{(1)} z + \bigl(\bW_1^{(2)} b_{L^{(1)}+1}^{(1)} +
    b_1^{(2)}\bigr),
  \end{align*}
  the result follows.
\end{proof}

\begin{lemma} \label{lemma:ignore_residual}
  Let $d'\in\N$, $B\geq 1$,
  $\bW_1, \bW_2 \in [-B,B]^{d' \times d'}$ and $b_1,b_2\in[-B,B]^{d'}$.
  There exist $\bW_1'\in [-B, B]^{3d' \times d'}$,
  $\bW_2' \in [-B, B]^{d' \times 3d'}$,
  $b_1' \in [-B, B]^{3d'}$ and
  $b_2' \in [-B, B]^{d'}$ such that,
  for all $x\in\R^{d'}$,
  \begin{align*}
    x + \bW_2' \, \ReLU(\bW_1' x + b_1') + b_2'
    &= \bW_2 \ReLU(\bW_1 x + b_1) + b_2.
  \end{align*}
\end{lemma}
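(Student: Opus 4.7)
The key observation is that the identity $x = \ReLU(x) - \ReLU(-x)$ holds for every $x \in \R$ and extends entrywise to vectors. This allows the residual term $x$ on the left-hand side to be cancelled by absorbing $-x$ into the hidden layer of the FFN block.

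More precisely, I would construct $\bW_1'$ and $b_1'$ in block form so that the $3d'$-dimensional hidden layer simultaneously computes three quantities: the original pre-activation $\bW_1 x + b_1$, the identity $\bI_{d'} x$, and the negated identity $-\bI_{d'} x$. Concretely, take
\[
  \bW_1' = \begin{pmatrix} \bW_1 \\ \bI_{d'} \\ -\bI_{d'} \end{pmatrix}
  \in [-B,B]^{3d' \times d'},
  \qquad
  b_1' = \begin{pmatrix} b_1 \\ 0_{d'} \\ 0_{d'} \end{pmatrix}
  \in [-B,B]^{3d'},
\]
which lies in the claimed range since $B \geq 1$. After applying $\ReLU$ entrywise, the hidden activation is $(\ReLU(\bW_1 x + b_1)^\T,\, \ReLU(x)^\T,\, \ReLU(-x)^\T)^\T$.

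Next, I would pick $\bW_2'$ and $b_2'$ to recombine these blocks so as to reproduce $\bW_2 \ReLU(\bW_1 x + b_1) + b_2$ minus the residual~$x$. Using the identity above, set
\[
  \bW_2' = \bigl( \bW_2 \mid -\bI_{d'} \mid \bI_{d'} \bigr)
  \in [-B,B]^{d' \times 3d'},
  \qquad
  b_2' = b_2 \in [-B, B]^{d'}.
\]
Then
\[
  \bW_2' \ReLU(\bW_1' x + b_1') + b_2'
  = \bW_2 \ReLU(\bW_1 x + b_1) - \ReLU(x) + \ReLU(-x) + b_2
  = \bW_2 \ReLU(\bW_1 x + b_1) - x + b_2,
\]
so adding $x$ to both sides recovers the desired right-hand side. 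All entries of $\bW_1'$, $\bW_2'$, $b_1'$ and $b_2'$ are either entries of the original parameters (already in $[-B,B]$) or lie in $\{-1,0,1\} \subseteq [-B,B]$ since $B \geq 1$, which verifies the parameter bound.

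There is no real obstacle here: the only subtlety is making sure the additive residual skip connection $x$ is exactly cancelled, which is handled by the two identity blocks and the $\ReLU(x) - \ReLU(-x) = x$ trick. This construction will subsequently be used in conjunction with Lemma~\ref{lemma:network-composition} to absorb arbitrary small ReLU subnetworks into FFN layers of a transformer block without the residual connection interfering.
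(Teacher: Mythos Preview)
Your construction is correct and matches the paper's proof exactly: the same block matrices $\bW_1' = (\bW_1^\T, \bI_{d'}, -\bI_{d'})^\T$, $\bW_2' = (\bW_2, -\bI_{d'}, \bI_{d'})$, $b_1' = (b_1^\T, 0_{2d'}^\T)^\T$, $b_2' = b_2$, combined with the identity $\ReLU(x) - \ReLU(-x) = x$ to cancel the residual.
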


\begin{proof}
  Take $b_1' \coloneqq (b_1^\T, 0_{2d'}^\T)^\T$
  and $b_2' \coloneqq b_2$, and let
  \begin{align*}
    \bW_1'
    &\coloneqq
    \begin{pmatrix}
      \bW_1 \\ \bI_{d' \times d'} \\ -\bI_{d' \times d'}
    \end{pmatrix},
    &\bW_2'
    &\coloneqq
    \begin{pmatrix}
      \bW_2 & -\bI_{d' \times d'} & \bI_{d' \times d'}
    \end{pmatrix}.
  \end{align*}
  Since $\ReLU(-x) - \ReLU(x) = -x$ for all $x \in \R^{d'}$,
  we deduce that
  \begin{align*}
    x + \bW_2' \, \ReLU(\bW_1' x + b_1') + b_2' &= x +
    \begin{pmatrix}
      \bW_2 & -\bI_{d' \times d'} & \bI_{d' \times d'}
    \end{pmatrix}
    \begin{pmatrix}
      \ReLU(\bW_1 x + b_1) \\ \ReLU(x) \\ \ReLU(-x)
    \end{pmatrix} + b_2 \\
    &= \bW_2 \, \ReLU(\bW_1 x + b_1) + b_2,
  \end{align*}
  as required.
\end{proof}

\begin{lemma}\label{lemma:NN-multiplication}
  Let $C\geq 1$ and $N,L\in\N$. There exists a ReLU neural
  network $\phi:\R^2 \to \R$ with width $9N+1$, depth $L$ and
  all parameters bounded by $32C^2N$ such that for all $(x,y)\in[-C,C]^2$,
  \begin{align*}
    |\phi(x,y) - x y| \leq 24C^2N^{-L}.
  \end{align*}
\end{lemma}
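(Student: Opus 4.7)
The plan is to reduce scalar multiplication to scalar squaring via the polarisation identity $xy = \tfrac{1}{4}\bigl\{(x+y)^2 - (x-y)^2\bigr\}$. For $(x, y) \in [-C, C]^2$, the quantities $x \pm y$ lie in $[-2C, 2C]$, and after the affine rescaling $t \mapsto t/(2C) \in [-1, 1]$ together with the identities $t^2 = |t|^2$ and $|t| = \ReLU(t) + \ReLU(-t)$, it suffices to build a ReLU network $\psi : [0, 1] \to \R$ of width $O(N)$, depth $L$, and parameters bounded by an absolute constant, satisfying
\begin{align*}
  \sup_{s \in [0, 1]} \bigl| \psi(s) - s^2 \bigr| \leq c N^{-L}
\end{align*}
for a numerical constant $c > 0$.

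For the squaring network $\psi$ I would follow the iterated construction of \citet{LuShenYangZhang2020}. At each of the $L$ depth levels, a width-$O(N)$ subnetwork refines a piecewise linear interpolation of $s \mapsto s^2$ by a factor of $N$ in its resolution, exploiting the fact that piecewise linear interpolation of a $C^2$ function on a subinterval of length $\delta$ has uniform error $O(\delta^2)$. After $L$ iterated refinements the effective resolution becomes $N^{-L}$, yielding the target $N^{-L}$ rate. The key subroutine is a width-$O(N)$ block of shifted ReLUs implementing a piecewise linear sawtooth whose iterated composition produces exponentially finer grids at linear depth cost, and whose weights can be kept bounded by a small absolute constant.

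To assemble $\phi$, I run two copies of $\psi$ in parallel on $(x+y)/(2C)$ and $(x-y)/(2C)$ (folding the signs via $|{\cdot}| = \ReLU(\cdot) + \ReLU(-\cdot)$ in the initial layer), and then combine the outputs as $\phi(x, y) \coloneqq C^2 \bigl\{ \psi\bigl((x+y)/(2C)\bigr) - \psi\bigl((x-y)/(2C)\bigr) \bigr\}$, composing everything via Lemma~\ref{lemma:network-composition}. The width $9N + 1$ accommodates the two parallel copies of $\psi$ together with auxiliary channels carrying the affine preprocessing, the $\ReLU(\pm t)$ pair needed to form $|t|$, and a constant bias channel. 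Tracking the approximation error across the two squaring branches and the outer rescaling gives $|\phi(x, y) - xy| \leq \tfrac{1}{4} \cdot 2 \cdot 4 C^2 \cdot c N^{-L} \leq 24 C^2 N^{-L}$ for a suitable choice of $c$.

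The main obstacle is producing $\psi$ with the stated width, depth and rate simultaneously, and then tracking how the composition rule in Lemma~\ref{lemma:network-composition} propagates parameter magnitudes. That lemma inflates weights by the joined input dimension and by the operator norms of adjacent matrices, so the LSYZ construction must be written in a form in which every intermediate weight matrix has only $O(N)$ nonzero entries, each of absolute-constant magnitude. With $\psi$ in this normal form, the $C^2$ blow-up coming from undoing the $t \mapsto t/(2C)$ rescaling is absorbed into the outermost affine layer; the worst-case weight appearing in that final layer is of order $C^2 N$, which is precisely where the parameter bound $32 C^2 N$ becomes tight.
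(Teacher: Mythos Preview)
Your proposal is correct and follows essentially the same approach as the paper: the paper's proof simply cites \citet[Lemma~4.2]{LuShenYangZhang2020} and obtains the parameter bound by inspecting the proofs of their Lemmas~5.1,~5.2 and~4.2, which is precisely the polarisation-plus-iterated-squaring construction you have sketched. Your write-up is more detailed than the paper's one-line citation, but the underlying mechanism is identical.
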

\begin{proof}
  The result follows from \citet[Lemma~4.2]{LuShenYangZhang2020}, where
  the upper bound on the magnitude of the parameters follows by inspecting the
  proofs of their Lemmas~5.1,~5.2 and~4.2.
\end{proof}

The following lemma is an analogue of \citet[Lemma~5.3]{LuShenYangZhang2020},
but we extend the input domain from $[0,1]^k$ to $[-C,C]^k$ and track
the magnitude
of the parameters.
\begin{lemma}\label{lemma:NN-multiplication-general}
  Let $C\geq 1$, $k\geq 2$ and $N,L\in\N$. There exists a
  ReLU neural network
  $\phi:\R^k \to \R$ with width $9(N+1)+2k-1$,
  depth $7kL(k-1)$ and all parameters bounded by $3C^k(40N+40)^{2}$
  such that for all $(x_1,\ldots,x_k)\in[-C,C]^k$,
  \begin{align*}
    |\phi(x_1,\ldots,x_k) - x_1\cdots x_k| \leq 30C^k(k-1)(N+1)^{-7kL}.
  \end{align*}
\end{lemma}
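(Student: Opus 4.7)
The idea is to adapt the iterated-multiplication construction of Lu--Shen--Yang--Zhang (their Lemma~5.3) to the signed range $[-C, C]$ and to track the magnitudes of all weights. Since $C \geq 1$, I first rescale by setting $\tilde x_i \coloneqq x_i / C \in [-1, 1]$ and then construct a running product $\hat p_j \approx \tilde x_1 \cdots \tilde x_j$ sequentially via $\hat p_1 \coloneqq \tilde x_1$ and, for $j \in [k-1]$,
\begin{align*}
  \hat p_{j+1} \coloneqq \phi(\hat p_j, \tilde x_{j+1}),
\end{align*}
where $\phi$ is the pairwise multiplication network of Lemma~\ref{lemma:NN-multiplication} instantiated with width parameter $N+1$ and depth parameter $7kL$. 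After each step I clip $\hat p_{j+1}$ to $[-1, 1]$ using a constant-depth ReLU module, so that the arguments of $\phi$ always lie in $[-1, 1]^2$. The unused scaled inputs $\tilde x_{j+2}, \ldots, \tilde x_k$ are carried through each block using the identity $z = \ReLU(z) - \ReLU(-z)$, which costs two auxiliary neurons per variable. A terminal affine layer multiplies by $C^k$; the $1/C$ input scaling and the $C^k$ output scaling are absorbed into the first and last affine maps of the overall network.

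\textbf{Error analysis.} Write $\tilde p_j \coloneqq \tilde x_1 \cdots \tilde x_j$ and $\delta_j \coloneqq |\hat p_j - \tilde p_j|$. Since clipping is $1$-Lipschitz and $|\tilde p_j| \leq 1$, clipping never increases $\delta_j$. Applying Lemma~\ref{lemma:NN-multiplication} with input bound $C' = 1$ gives $|\hat p_{j+1} - \hat p_j \tilde x_{j+1}| \leq 24(N+1)^{-7kL}$, and combining with $|\tilde x_{j+1}| \leq 1$ and the triangle inequality yields $\delta_{j+1} \leq \delta_j + 24(N+1)^{-7kL}$. Iterating from $\delta_1 = 0$ produces $\delta_k \leq 24(k-1)(N+1)^{-7kL}$, and multiplication by the terminal $C^k$ scaling gives a final error of at most $24 C^k (k-1)(N+1)^{-7kL} \leq 30 C^k (k-1)(N+1)^{-7kL}$, as required.

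\textbf{Architecture and the main obstacle.} Each iteration block contributes width $9(N+1) + 1$ from $\phi$ plus at most $2(k-2)$ neurons for pass-through, for a total of $9(N+1) + 2k - 1$. The depth is $(k-1)$ blocks, each of depth $7kL$ (with the $O(1)$-depth clipping absorbed into the $7kL$ budget), giving the stated $7kL(k-1)$. I expect the main obstacle to be the parameter bookkeeping. Lemma~\ref{lemma:NN-multiplication} guarantees that the weights inside each $\phi$ are bounded by $32(N+1)$, the pass-through and clipping weights lie in a bounded set independent of $C$, and only the terminal $C^k$ scaling injects a large factor. Collecting these via Lemma~\ref{lemma:network-composition}, which inflates parameter magnitudes by the interface dimension on composition, then yields a uniform bound of the form $3 C^k (40N + 40)^2$ on all weights and biases. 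Ensuring that the interface dimensions and composition inflations all fit within this envelope, rather than the naive exponential-in-$k$ blow-up that a careless chaining of $(k-1)$ sub-networks would give, is the key technical care required.
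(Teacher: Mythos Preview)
Your plan mirrors the paper's proof closely: both reduce to $C=1$ by pre-/post-scaling and then build the $k$-fold product by iterating the pairwise multiplier of Lemma~\ref{lemma:NN-multiplication} (instantiated with width parameter $N+1$ and depth $7kL$), carrying the unused coordinates through via $z=\ReLU(z)-\ReLU(-z)$. The error recursion and the width and parameter bookkeeping are essentially identical.

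The one substantive difference is how the two handle the fact that the running product $\hat p_j$ may drift outside $[-1,1]$. You clip it back; the paper instead instantiates the base pairwise lemma with input bound $1.1$ rather than $1$, so that its guarantee already holds on $[-1.1,1.1]^2$. Since $|\hat p_j - \tilde p_j| \leq 30(k-1)(N+1)^{-7kL} \leq 30(k-1)2^{-7k} \leq 0.1$, the unclipped $\hat p_j$ automatically lies in $[-1.1,1.1]$ and no clipping is needed. This is exactly why the constants $30$ and $40(N+1)$ (rather than your $24$ and $32(N+1)$) appear in the statement: they absorb the factor $1.1^2$.

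Your clipping variant is conceptually sound, but the claim that the clip layers can be ``absorbed into the $7kL$ budget'' does not go through. The map $z\mapsto \ReLU(z+1)-\ReLU(z-1)-1$ costs one genuine hidden layer, and Lemma~\ref{lemma:network-composition} only merges adjacent \emph{affine} maps, so $(k-2)$ clips between $(k-1)$ multiplication blocks yield total depth $7kL(k-1)+(k-2)$, overshooting the target for $k\geq 3$. Shortening each $\phi$ to depth $7kL-1$ to compensate inflates the per-step error by a factor $N+1$, breaking the constant $30$. The paper's $1.1$-trick sidesteps this cleanly; adopting it in place of clipping fixes your argument with no other changes.
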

\begin{proof}
  We first assume that $C=1$.
  By Lemma~\ref{lemma:NN-multiplication}, there exists a ReLU neural network
  $\phi_2:\R^2 \to \R$ with width $9(N+1)+1$,
  depth $7kL$ and all parameters bounded by $40N+40$ such that for all
  $(x,y)\in[-1.1,1.1]^2$,
  \begin{align}
    |\phi_2(x,y) - x y| \leq 30(N+1)^{-7kL}. \label{eq:induction-k=2}
  \end{align}
  Now suppose that for some $m\in\{2,\ldots,k-1\}$, there exists
  $\phi_m: \R^m \to \R$ with width $9(N+1)+2m-1$,
  depth $7kL(m-1)$ and all parameters bounded by $3(40N+40)^2$,
  such that
  \begin{align*}
    |\phi_m(x_1,\ldots,x_m) - x_1\cdots x_m| \leq 30(m-1)(N+1)^{-7kL}.
  \end{align*}
  Here, the case $m=2$ is proved by~\eqref{eq:induction-k=2}.
  We then define
  \begin{align*}
    \phi_{m+1}(x_1,\ldots,x_{m+1})\coloneqq
    \phi_2 \bigl(\phi_m(x_1,\ldots,x_m),x_{m+1}\bigr).
  \end{align*}
  Since $\ReLU(x_{m+1}) - \ReLU(-x_{m+1})=x_{m+1}$,
  the identity function $x_{m+1} \mapsto x_{m+1}$ can be implemented by a
  ReLU neural network with width $2$, any depth,
  and all parameters bounded by $1$.
  Hence, by network composition (Lemma~\ref{lemma:network-composition}),
  $\phi_{m+1}:\R^{m+1} \to \R$
  can be implemented by a ReLU neural network with width
  $9(N+1)+2(m+1)-1$, depth $7kLm$ and all parameters bounded by
  $3(40N+40)^{2}$.
  Moreover, since $30(m-1)(N+1)^{-7kL} \leq 30(k-1)2^{-7k} \leq 0.1$,
  we have $\phi_m(x_1,\ldots,x_m) \in [-1.1,1.1]$.
  Thus, by~\eqref{eq:induction-k=2},
  \begin{align*}
    &\bigl|
    \phi_{m+1}(x_1,\ldots,x_{m+1}) - x_1\cdots x_{m+1}
    \bigr|
    = \bigl|
    \phi_2\bigl(\phi_m(x_1,\ldots,x_m),x_{m+1}\bigr) - x_1\cdots x_{m+1}
    \bigr|\\
    &\qquad\leq
    \bigl|
    \phi_2\bigl(\phi_m(x_1,\ldots,x_m),x_{m+1}\bigr)
    - \phi_m(x_1,\ldots,x_m)\cdot x_{m+1}
    \bigr|
    \iftoggle{journal}{}{\\&\qquad\qquad}
    + \bigl|\phi_m(x_1,\ldots,x_m) - x_1\cdots x_{m}\bigr|
    \cdot |x_{m+1}|\\
    &\qquad\leq
    30(N+1)^{-7kL} + 30(m-1)(N+1)^{-7kL} = 30m(N+1)^{-7kL}.
  \end{align*}
  The claim for $C=1$ thus follows from induction. Now for any $C\geq
  1$, we have that $(x_1,\ldots,x_k) \mapsto
  C^k\phi_k(x_1/C,\ldots,x_k/C)$ is a neural network with width $9(N+1)+2k-1$,
  depth $7kL(k-1)$ and all parameters bounded by $3C^k(40N+40)^{2}$. Moreover,
  \begin{align*}
    \bigl|C^k\phi_k(x_1/C,\ldots,x_k/C) - x_1\cdots x_k \bigr| &= C^k
    \bigl|\phi_k(x_1/C,\ldots,x_k/C) - (x_1/C)\cdots (x_k/C) \bigr|\\
    &\leq 30C^k(k-1)(N+1)^{-7kL},
  \end{align*}
  for all $(x_1,\ldots,x_k) \in [-C,C]^k$.
\end{proof}

\begin{lemma}\label{lemma:NN_polynomial}
  Let $C\geq 1$, $d,k,N,L\in\N$ and $\nu =
  (\nu_1,\ldots,\nu_d) \in \N_0^d$ be such that $|\nu| \leq
  k$. There exists a ReLU neural network
  $\psi:\R^d \to \R$ with width $9(N+1)+2k-1$,
  depth $7kL(k-1)+1$ and all parameters bounded by $3(k+1)C^k(40N+40)^{2}$
  such that for all $x=(x_1,\ldots,x_d)\in[-C,C]^d$,
  \begin{align*}
    |\psi(x_1,\ldots,x_d) - x^{\nu}| \leq 30C^k(k-1)(N+1)^{-7kL}.
  \end{align*}
\end{lemma}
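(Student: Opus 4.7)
The plan is to write $x^{\nu} = x_1^{\nu_1} \cdots x_d^{\nu_d}$ as a product of exactly $k$ real numbers --- each lying in $[-C, C]$, which uses the assumption $C \geq 1$ --- and then apply Lemma~\ref{lemma:NN-multiplication-general} to approximate this product. Concretely, if I list indices $i(1), \ldots, i(|\nu|) \in [d]$ where each $j \in [d]$ appears exactly $\nu_j$ times, and pad with $k - |\nu|$ copies of $1$, then $x^{\nu} = \prod_{r=1}^k z_r$ for $z = (x_{i(1)}, \ldots, x_{i(|\nu|)}, 1, \ldots, 1)^\T \in [-C, C]^k$.

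The network $\psi$ is then obtained as $\psi \coloneqq \phi_k \circ A$, where $\phi_k$ is the network supplied by Lemma~\ref{lemma:NN-multiplication-general} and $A : \R^d \to \R^k$ is a short selection-and-padding network producing $z$. Since $A$ is essentially affine but the definition of a ReLU network requires at least one hidden layer, I would implement $A$ with one hidden ReLU layer using the identity $\ReLU(y) - \ReLU(-y) = y$: the hidden layer, of width at most $2|\nu| \leq 2k$, computes $\ReLU(\pm x_j)$ for each distinct $j$ with $\nu_j > 0$; the output affine map uses weights in $\{-1, 0, 1\}$ to recombine these into the selections $x_{i(r)}$, and supplies the constant $1$ in the padded coordinates via a bias. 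Thus $A$ is a depth-$1$ ReLU network with all parameters bounded by $1$. Applying Lemma~\ref{lemma:network-composition} with common dimension $m = k$, the composition $\psi$ has depth $1 + 7kL(k-1) = 7kL(k-1) + 1$, width $\max(2k, 9(N+1)+2k-1) = 9(N+1)+2k-1$, and parameter bound $\bigl\{k \cdot 3C^k(40N+40)^2 \cdot 1 + 3C^k(40N+40)^2\bigr\} \vee 1 \vee 3C^k(40N+40)^2 = 3(k+1)C^k(40N+40)^2$, matching the claim. The error estimate $|\psi(x) - x^\nu| = |\phi_k(z_1, \ldots, z_k) - z_1 \cdots z_k| \leq 30C^k(k-1)(N+1)^{-7kL}$ is then a direct consequence of Lemma~\ref{lemma:NN-multiplication-general}.

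The only loose end is that Lemma~\ref{lemma:NN-multiplication-general} requires $k \geq 2$, so I would handle the boundary case $k = 1$ (for which $|\nu| \leq 1$, hence $x^\nu \in \{1, x_j\}$) by an exact direct construction: a depth-$1$ network outputting the constant $1$, or implementing $\ReLU(x_j) - \ReLU(-x_j) = x_j$ with two hidden units and weights in $\{-1, 0, 1\}$. Both variants fit within the claimed width and parameter budget, and they achieve error $0$, which agrees with the claimed bound $30C^k(k-1)(N+1)^{-7kL} = 0$ at $k=1$. Beyond this edge case, the proof is essentially bookkeeping via Lemma~\ref{lemma:network-composition}, with no substantive obstacle --- the key conceptual step is just the padding-by-$1$ reduction from monomials to $k$-fold products.
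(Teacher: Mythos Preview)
Your proof is correct and follows essentially the same route as the paper: a depth-$1$ ReLU selection network feeding into the product network of Lemma~\ref{lemma:NN-multiplication-general}, with the composite architecture controlled via Lemma~\ref{lemma:network-composition}. Your explicit padding-by-$1$ to exactly $k$ factors and your separate treatment of the edge case $k=1$ are in fact more careful than the paper's version, which maps into $\R^{|\nu|}$ and leaves the reconciliation with the $k$-input product network implicit.
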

\begin{proof}
  Assume without loss of generality that $x^{\nu} = x_1^{\nu_1}
  \cdots x_m^{\nu_m}$, where $\nu_j>0$ for $j\in[m]$ and
  $\sum_{j=1}^m \nu_m = k_0 \leq k$.
  Since $\ReLU(a) - \ReLU(-a) = a$ for all $a\in\R$, there
  exists a neural network $\psi_1:\R^d \to \R^{k_0}$
  with width $2k$, depth $1$ and all parameters bounded by 1 such that
  \begin{align*}
    \psi_1(x) \coloneqq (x_1 1_{\nu_1}^\T, \;\cdots \;, x_m
    1_{\nu_m}^\T)^\T \in \R^{k_0}.
  \end{align*}
  Therefore, by Lemma~\ref{lemma:NN-multiplication-general}, there
  exists a neural network $\phi$ with width $9(N+1)+2k-1$,
  depth $7kL(k-1)$ and all parameters bounded by $3C^k(40N+40)^{2}$
  such that
  \begin{align*}
    \bigl|\phi\bigl(\psi_1(x)\bigr) - x^{\nu}\bigr|
    \leq 30C^k(k-1)(N+1)^{-7kL}.
  \end{align*}
  Finally, by network composition, we have that $\psi \coloneqq \phi
  \circ \psi_1$ is a neural network with width $9(N+1)+2k-1$,
  depth $7kL(k-1)+1$ and all parameters bounded by $3(k+1)C^k(40N+40)^{2}$.
\end{proof}

\subsection{Transformer Construction}
\label{sec:transformer_construction}

In the next lemma, we construct a transformer
that yields the centred covariates
and a scaled version of the kernel matrix.

\begin{lemma} \label{lemma:TF-difference-sqrt-kernel}
  Let $n, d \in \N$ and take
  $X_1, \ldots, X_n \in [0, 1]^d$ and $Y_1, \ldots, Y_n \in \R$.
  Write $\bX \coloneqq (X_1, \ldots, X_n)^\T \in \R^{n \times d}$
  and $Y \coloneqq (Y_1, \ldots, Y_n)^\T \in \R^n$.
  Let $h > 0$ and define $K_h(\bX, X_{n+1}) \in \R^n$ by
  $K_h(\bX, X_{n+1})_i \coloneqq K\bigl((X_i - X_{n+1})/h\bigr) / h^d$
  for $i \in [n]$, where $K: \R^d \to \R$ is given by
  $K(x) \coloneqq (1 - \|x\|_1)_+^2$.
  Let $\sqrt{K_h(\bX, X_{n+1})} \in \R^n$ be defined entrywise.
  Take $\de, \dffn \in \N$ with $\de \geq 2d + 4$ and
  $\dffn \geq 2d + 2$, and let
  $B \coloneqq 1 \lor (d/h) \lor (n^{-1/2}h^{-d/2})$.
  Then there exists a transformer
  $\TF \in \cT(\de, \dffn, 3, B)$ such that
  \begin{align*}
    &\TF \circ \Embed_\de \bigl((X_i, Y_i)_{i \in [n]}, X_{n+1}\bigr) \\
    &\hspace{0.2cm}=
    \begin{pmatrix}
      \bX & Y &
      \bigl(\bX - 1_n X_{n+1}^\T\bigr)/h
      & n^{-1/2}\sqrt{K_h(\bX, X_{n+1})} & \bm0_{n\times(\de - 2d - 4)}
      & 1_n & 0_n \\
      X_{n+1}^\T & 0 & 0_d^\T & n^{-1/2}h^{-d/2}
      & 0_{\de - 2d - 4}^\T & 1 & 1
    \end{pmatrix}
    \in \R^{(n+1) \times \de}.
  \end{align*}
\end{lemma}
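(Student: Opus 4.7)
The plan is to realise the mapping using three transformer blocks with distinct roles: block~1 prepares the ingredients for the test-row broadcast, block~2 executes the broadcast of $-X_{n+1}/h$ to every row and initiates the kernel computation, and block~3 completes the square-root kernel. The immediate obstacle is that single-head linear attention cannot broadcast information from the test row unless the query channel is constant across rows. So, in block~1, I take the attention layer to be the identity (e.g.\ by setting $\bQ^{(1)}=0$), and use the FFN both to write $1$ into column $\de-1$ of every row (via the bias $b_2^{(1)}=e_{\de-1}$, creating a constant-$1$ channel) and to copy $X_i/h$ from the first $d$ columns into columns $d+2,\ldots,2d+1$. The copy is realised by the identity $\ReLU(a)-\ReLU(-a)=a$ and uses $2d$ hidden units.

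Block~2 performs the broadcast. With $\bQ^{(2)}=e_{\de-1}e_1^{\T}$ and $\bK^{(2)}=e_{\de}e_1^{\T}$, where $e_j$ denotes the $j$th standard basis vector in $\R^\de$, the $(i,j)$-entry of $\bZ\bQ^{(2)}(\bZ\bK^{(2)})^{\T}$ equals $\bZ_{i,\de-1}\,\bZ_{j,\de}$, which is $1$ exactly when $j=n+1$; hence each row receives $\bZ_{n+1,\cdot}\bV^{(2)}$ added to it. Choosing $\bV^{(2)}$ to copy (with coefficient $-1$) columns $d+2,\ldots,2d+1$ of the test row (already loaded with $X_{n+1}/h$) into the same columns produces the vector $-X_{n+1}/h$ there and zeros elsewhere; combined with the $X_i/h$ already present, the update yields $(X_i-X_{n+1})/h$ for every $i\in[n+1]$, which is automatically $0_d^{\T}$ when $i=n+1$. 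The block-2 FFN then uses $|a|=\ReLU(a)+\ReLU(-a)$ with $2d$ hidden units, together with the constant $1$ supplied by $b_2^{(2)}$, to write $1-\|(X_i-X_{n+1})/h\|_1$ into column $2d+2$. Block~3 has identity attention and an FFN that updates column $2d+2$ to $n^{-1/2}h^{-d/2}(1-\|(X_i-X_{n+1})/h\|_1)_+$, by using the $\ReLU$-identity trick to subtract off the current value and then adding a scaled $\ReLU$. Since $K(x)=(1-\|x\|_1)_+^2$ gives $\sqrt{K_h(x)}=(1-\|x/h\|_1)_+/h^{d/2}$, this is precisely $n^{-1/2}\sqrt{K_h(X_i-X_{n+1})}$, with the test-row value $n^{-1/2}h^{-d/2}$ falling out automatically from $v_{n+1}=0$.

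For the parameter bound, all entries of $\bQ^{(\ell)},\bK^{(\ell)}$ lie in $\{0,1\}$ and those of $\bV^{(2)}$ in $\{0,-1\}$; the only FFN entries of magnitude exceeding $1$ are the $1/h$ used for the copy in block~1 and the $n^{-1/2}h^{-d/2}$ used for the final rescaling in block~3, both dominated by $B=1\vee(d/h)\vee(n^{-1/2}h^{-d/2})$. The column budget $\de\geq 2d+4$ is tight, with slots used by $\bX$, $Y$, the centred covariates, the kernel entry, the constant-$1$ channel and the test-row indicator, while $\dffn\geq 2d+2$ comfortably accommodates the absolute-value sum and the handful of auxiliary identity ReLUs. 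The principal obstacle is that $(1-\|\cdot\|_1)_+$ is a composition of two nested ReLU-like nonlinearities (an absolute-value sum followed by a thresholding) and cannot be realised in a single FFN layer; this is why three blocks, rather than two, are required, with the inner sum placed in block~2 and the outer truncation and rescaling placed in block~3.
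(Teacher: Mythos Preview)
Your proof is correct and follows essentially the same three-block plan as the paper: a preparatory FFN creating the constant-$1$ channel, a linear-attention broadcast of the test row, an FFN computing $1-\|(X_i-X_{n+1})/h\|_1$ via $|a|=\ReLU(a)+\ReLU(-a)$, and a final FFN applying $(\cdot)_+$ with the $n^{-1/2}h^{-d/2}$ scaling. The only difference is in allocation---you load $X_i/h$ into the scratch columns in block~1 and obtain $(X_i-X_{n+1})/h$ directly from the block-2 broadcast, whereas the paper broadcasts the unscaled $X_{n+1}$ (using a block-1 trick that exploits $X_i\in[0,1]^d$ to write it only into the test row) and defers the centring and $1/h$-scaling to block~3; your arrangement is arguably cleaner but not materially different.
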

\begin{proof}
  The input to the desired transformer can be written as
  \begin{align*}
    \bZ_{\rm in}
    \coloneqq
    \Embed_\de \bigl((X_i, Y_i)_{i \in [n]}, X_{n+1}\bigr)
    =
    \begin{pmatrix}
      \bX & Y & \bm0_{n\times(\de - d - 2)} & 0_n \\
      X_{n+1}^\T & 0 & 0_{\de - d - 2}^\T & 1
    \end{pmatrix}
    \in \R^{(n+1) \times \de}.
  \end{align*}
  Let $\Attn^{(1)}$ be a linear attention layer as in
  Definition~\ref{def:attention} with query, key and value
  matrices all zero, so that $\Attn^{(1)}$ is the
  identity map on $\R^{(n+1) \times \de}$.
  Define $b_1^{(1)} \coloneqq -1_{\dffn} \in \R^{\dffn}$
  and
  $b_2^{(1)} \coloneqq (0_{\de - 2}^\T, 1, 0)^\T \in \R^{\de}$,
  and let
  %
  %
  \begin{align*}
    \bW_1^{(1)}
    &\coloneqq
    \begin{pmatrix}
      \bI_{d \times d} & \bm0_{d \times (\de - d - 1)} & 1_d \\
      \bm0_{(\dffn - d)\times d} & \bm0_{(\dffn - d)\times(\de - d
      - 1)} & 0_{\dffn - d}
    \end{pmatrix}
    \in \R^{\dffn \times \de}, \\
    \bW_2^{(1)}
    &\coloneqq
    \begin{pmatrix}
      \bm0_{(d + 1)\times d} & \bm0_{(d + 1)\times (\dffn - d)} \\
      \bI_{d \times d} & \bm0_{d\times (\dffn - d)} \\
      \bm0_{(\de - 2d - 1)\times d} & \bm0_{(\de - 2d - 1)\times (\dffn - d)}
    \end{pmatrix}
    \in \R^{\de \times \dffn}.
  \end{align*}
  %
  %
  Then, since $X_i \in [0, 1]^d$ for $i \in [n+1]$, we have
  \begin{align*}
    \ReLU \bigl(
      \bW_1^{(1)} {\bZ}_{\rm in}^\T
      + b_1^{(1)} 1_{n+1}^\T
    \bigr)
    &= \ReLU
    \begin{pmatrix}
      \bX^\T - 1_d 1_n^\T & X_{n+1} \\
      - 1_{\dffn - d} 1_n^\T & - 1_{\dffn - d} \\
    \end{pmatrix}
    \iftoggle{journal}{}{\\&}
    =
    \begin{pmatrix}
      \bm0_{d\times n} & X_{n+1} \\
      \bm0_{(\dffn - d)\times n} & 0_{\dffn - d}
    \end{pmatrix}
    \in \R^{\dffn \times (n + 1)}.
  \end{align*}
  Hence, writing $\FFN^{(1)} \coloneqq
  \FFN_{\bW_1^{(1)}, \bW_2^{(1)}, b_1^{(1)}, b_2^{(1)}}$,
  the first transformer block gives
  \begin{align*}
    \bZ^{(1)}
    &\coloneqq \FFN^{(1)} \circ \Attn^{(1)} (\bZ_{\mathrm{in}})
    = \FFN^{(1)} (\bZ_{\mathrm{in}}) \\
    &\,=
    {\bZ}_{\rm in}
    + \bigl\{
      \bW_2^{(1)}\ReLU\bigl(
        \bW_1^{(1)} {\bZ}_{\rm in}^\T
        + b_1^{(1)} 1_{n+1}^\T
      \bigr)
    + b_2^{(1)} 1_{n+1}^{\T}\bigr\}^\T \\
    &\,=
    \begin{pmatrix}
      \bX & Y & \bm0_{n\times d} & \bm0_{n \times(\de - 2d - 3)} & 1_n & 0_n \\
      X_{n+1}^\T & 0 & X_{n+1}^\T & 0_{\de - 2d - 3}^\T & 1 & 1
    \end{pmatrix}
    \in \R^{(n+1) \times \de}.
  \end{align*}
  %
  %
  For the second attention layer, let
  $\Attn^{(2)} \coloneqq \Attn_{\bQ^{(2)}, \bK^{(2)}, \bV^{(2)}}$
  where
  \begin{gather*}
    \bQ^{(2)}
    \coloneqq
    \frac{1}{\sqrt{\de}}
    \begin{pmatrix}
      \bm0_{(\de-2)\times\de} \\
      1_{\de}^\T \\
      -1_{\de}^\T
    \end{pmatrix}
    \in \R^{\de \times \de},
    \qquad\qquad
    \bK^{(2)}
    \coloneqq
    \frac{1}{\sqrt{\de}}
    \begin{pmatrix}
      \bm0_{(\de-2)\times\de} \\
      1_{\de}^\T \\
      0_{\de}^\T
    \end{pmatrix}
    \in \R^{\de \times \de}, \\
    \bV^{(2)}
    \coloneqq
    \begin{pmatrix}
      \bm0_{(d+1)\times(d+1)} & \bm0_{(d+1)\times d} & \bm
      0_{(d+1)\times(\de - 2d - 1)} \\
      \bm0_{d\times(d+1)} & \bI_{d \times d} & \bm0_{d\times(\de - 2d - 1)} \\
      \bm0_{(\de - 2d - 1)\times(d+1)} & \bm0_{(\de - 2d - 1) \times d}
      & \bm0_{(\de - 2d - 1)\times(\de - 2d - 1)}
    \end{pmatrix}
    \in \R^{\de \times \de}.
  \end{gather*}
  Then we may write
  \begin{gather*}
    \bZ^{(1)} \bQ^{(2)}
    =
    \frac{1}{\sqrt{\de}}
    \begin{pmatrix}
      1_n 1_{\de}^\T \\
      0_{\de}^\T
    \end{pmatrix},\quad
    \bZ^{(1)} \bK^{(2)}
    =
    \frac{1}{\sqrt{\de}}
    1_{n+1} 1_{\de}^\T, \\
    \bZ^{(1)} \bV^{(2)}
    =
    \begin{pmatrix}
      \bm0_{n\times(d+1)} & \bm0_{n\times d} & \bm0_{n\times(\de - 2d - 1)} \\
      0_{d+1}^\T & X_{n+1}^\T & 0_{\de - 2d - 1}^\T
    \end{pmatrix}.
  \end{gather*}
  Therefore, the output of the second attention layer is
  \begin{align*}
    \Attn^{(2)}(\bZ^{(1)})
    &=
    \bZ^{(1)}
    + \bZ^{(1)} \bQ^{(2)} (\bZ^{(1)} \bK^{(2)})^\T
    \bZ^{(1)} \bV^{(2)} \\
    &=
    \begin{pmatrix}
      \bX & Y & 1_n X_{n+1}^\T & \bm0_{n \times(\de - 2d - 3)} & 1_n & 0_n \\
      X_{n+1}^\T & 0 & X_{n+1}^\T & 0_{\de - 2d - 3}^\T & 1 & 1
    \end{pmatrix}
    \in \R^{(n+1) \times \de}.
  \end{align*}
  For the second feed-forward layer, let
  $b_1^{(2)} \coloneqq 0_{\dffn} \in \R^{\dffn}$
  and define
  \begin{align*}
    \bW_1^{(2)}
    &\coloneqq
    \begin{pmatrix}
      \bI_{d \times d}/h & 0_d & - \bI_{d \times d}/h
      & \bm0_{d\times(\de - 2d - 1)} \\
      -\bI_{d \times d}/h & 0_d & \bI_{d \times d}/h
      & \bm0_{d\times(\de - 2d - 1)} \\
      \bm0_{(\dffn - 2d)\times d} & 0_{\dffn - 2d}
      & \bm0_{(\dffn - 2d)\times d}
      & \bm0_{(\dffn - 2d)\times (\de - 2d - 1)}
    \end{pmatrix}
    \in \R^{\dffn \times \de}.
  \end{align*}
  Then
  \begin{align*}
    &
    \ReLU\Bigl(\bW_1^{(2)}
      \Attn^{(2)}\bigl(\bZ^{(1)}\bigr)^\T
      + b_1^{(2)} 1_{n+1}^\T
    \Bigr) =
    \begin{pmatrix}
      \ReLU(\bX^\T - X_{n+1} 1_n^\T) / h & 0_d \\
      \ReLU(X_{n+1} 1_n^\T - \bX^\T) / h & 0_d \\
      \bm0_{(\dffn - 2d)\times n} & 0_{\dffn - 2d}
    \end{pmatrix}.
  \end{align*}
  %
  %
  Next, let
  $b_2^{(2)} \coloneqq (0_{2d+1}^\T, 1, 0_{\de-2d-2}^\T)^\T \in \R^{\de}$
  and define
  \begin{align*}
    \bW_2^{(2)}
    &\coloneqq
    \begin{pmatrix}
      \bm0_{(2d+1)\times d} & \bm0_{(2d+1)\times d} &
      \bm0_{(2d+1)\times(\dffn - 2d)} \\
      -1_d^\T & -1_d^\T & 0_{\dffn - 2d}^\T \\
      \bm0_{(\de-2d-2)\times d} & \bm0_{(\de-2d-2)\times d} &
      \bm0_{(\de-2d-2)\times(\dffn-2d)}
    \end{pmatrix}
    \in \R^{\de \times \dffn}.
  \end{align*}
  Defining $\tilde K \in \R^n$ by
  $\tilde K^\T \coloneqq
  1_n^\T - 1_d^\T \ReLU(\bX^\T - X_{n+1} 1_n^\T) / h
  - 1_d^\T \ReLU(X_{n+1} 1_n^\T - \bX^\T) / h$,
  it follows that
  \begin{align*}
    &\bW_2^{(2)}
    \ReLU\Bigl(
      \bW_1^{(2)}
      \Attn^{(2)}\bigl(\bZ^{(1)}\bigr)^\T
      + b_1^{(2)} 1_{n+1}^\T
    \Bigr)
    + b_2^{(2)} 1_{n+1}^\T
    =
    \begin{pmatrix}
      \bm0_{(2d+1)\times n} & 0_{2d+1} \\
      \tilde K^\T & 1 \\
      \bm0_{(\de-2d-2)\times n} & 0_{\de-2d-2}
    \end{pmatrix}.
  \end{align*}
  Writing $\FFN^{(2)} \coloneqq
  \FFN_{\bW_1^{(2)}, \bW_2^{(2)}, b_1^{(2)}, b_2^{(2)}}$,
  the output of the second transformer block is
  \begin{align*}
    \bZ^{(2)}
    &\coloneqq
    \FFN^{(2)} \circ \Attn^{(2)}\bigl(\bZ^{(1)}\bigr) \\
    &\,=
    \Attn^{(2)}\bigl(\bZ^{(1)}\bigr)
    + \Bigl\{
      \bW_2^{(2)}
      \ReLU\Bigl(
        \bW_1^{(2)}
        \Attn^{(2)}\bigl(\bZ^{(1)}\bigr)^\T
        + b_1^{(2)} 1_{n+1}^\T
      \Bigr)
      + b_2^{(2)} 1_{n+1}^\T
    \Bigr\}^\T \\
    &\,=
    \begin{pmatrix}
      \bX & Y & 1_n X_{n+1}^\T & \tilde K & \bm0_{n\times(\de - 2d - 4)}
      & 1_n & 0_n \\
      X_{n+1}^\T & 0 & X_{n+1}^\T & 1 & 0_{\de - 2d - 4}^\T & 1 & 1
    \end{pmatrix}
    \in \R^{(n+1) \times \de}.
  \end{align*}
  We take the third attention layer
  $\Attn^{(3)}$ to be the identity function.
  For the third feed-forward layer, let
  $b_1^{(3)} \coloneqq
  (0, d/h, 1_d^\T/h, 0_{\dffn-d-2}^\T)^\T \in \R^{\dffn}$
  and
  \begin{align*}
    \bW_1^{(3)}
    \!
    \coloneqq
    \begin{pmatrix}
      \bm0_{2\times d} & 0_2 & \bm0_{2\times d} & 1_2 &
      \bm0_{2\times(\de-2d-2)} \\
      \bI_{d \times d}/h & 0_d & -\bI_{d \times d}/h & 0_d &
      \bm0_{d\times(\de-2d-2)} \\
      \bm0_{d\times d} & 0_d & \bI_{d \times d} & 0_d &
      \bm0_{d\times(\de-2d-2)} \\
      \bm0_{(\dffn-2d-2)\times d} & 0_{\dffn-2d-2} &
      \bm0_{(\dffn-2d-2)\times d} &
      0_{\dffn-2d-2} & \bm0_{(\dffn-2d-2)\times(\de-2d-2)}
    \end{pmatrix}.
  \end{align*}
  Then
  \begin{equation*}
    \bW_1^{(3)}
    \bZ^{(2),\T}
    + b_1^{(3)} 1_{n+1}^\T
    =
    \begin{pmatrix}
      \tilde K^\T & 1 \\
      \tilde K^\T + d 1_n^\T / h & 1 + d/h \\
      \bigl(\bX^\T - X_{n+1} 1_n^\T + 1_d 1_n^\T\bigr)/h & 1_d / h \\
      X_{n+1} 1_n^\T & X_{n+1} \\
      \bm0_{(\dffn-2d-2)\times n} & 0_{\dffn-2d-2}
    \end{pmatrix}
    \in \R^{\dffn \times (n+1)}.
  \end{equation*}
  Moreover,
  writing $\tilde K_i$ for the $i$th
  component of $\tilde K$ for $i \in [n]$, we have
  \begin{align*}
    \ReLU(\tilde K_i)
    &=
    \biggl\{
      1
      - \sum_{j=1}^d \biggl(\frac{X_{i, j} - X_{n+1, j}}{h}\biggr)_+
      - \sum_{j=1}^d \biggl(\frac{X_{n+1, j} - X_{i, j}}{h}\biggr)_+
    \biggr\}_+ \\
    &=
    \biggl\{
      1 - \sum_{j=1}^d \biggl|\frac{X_{i, j} - X_{n+1, j}}{h}\biggr|
    \biggr\}_+
    =
    \biggl\{
      1 - \biggl\|\frac{X_i - X_{n+1}}{h}\biggr\|_1
    \biggr\}_+ \\
    &=
    \sqrt{h^d K_h(\bX, X_{n+1})_i}.
  \end{align*}
  Therefore, interpreting the square root as acting entrywise,
  $\ReLU(\tilde K) = \sqrt{h^d K_h(\bX, X_{n+1})}$.
  As $X_i \in [0, 1]^d$ for each $i \in [n+1]$ and
  $\tilde K_i \geq -d/h$ for $i \in [n]$, we have
  \begin{align*}
    \ReLU\Bigl(
      \bW_1^{(3)} \bZ^{(2),\T} + b_1^{(3)} 1_{n+1}^\T
    \Bigr)
    &=
    \begin{pmatrix}
      \sqrt{h^d K_h(\bX, X_{n+1})^\T} & 1 \\
      \tilde K^\T + d 1_n^\T / h & 1 + d/h \\
      \bigl(1_d 1_n^\T + \bX^\T - X_{n+1} 1_n^\T\bigr)/h & 1_d / h \\
      X_{n+1} 1_n^\T & X_{n+1} \\
      \bm0_{(\dffn-2d-2)\times n} & 0_{\dffn-2d-2}
    \end{pmatrix}
    \in \R^{\dffn \times (n+1)}.
  \end{align*}
  Now let $b_2^{(3)} \coloneqq (0_{d+1}^\T, -1_d^\T/h, d/h, 0_{\de-2d-2}^\T)^\T
  \in \R^{\de}$ and
  \begin{align*}
    \bW_2^{(3)}
    \!
    \coloneqq
    \!\!
    \begin{pmatrix}
      0_{d+1} & 0_{d+1} & \bm0_{(d+1)\times d} & \bm0_{(d+1)\times d}
      & \bm0_{(d+1)\times (\dffn-2d-2)} \\
      0_{d} & 0_{d} & \bI_{d \times d} & -\bI_{d \times d}
      & \bm0_{d\times (\dffn-2d-2)} \\
      n^{-1/2}h^{-d/2} & -1 & 0_d^\T & 0_d^\T & 0_{\dffn-2d-2}^\T \\
      0_{\de-2d-2} & 0_{\de-2d-2} & \bm0_{(\de-2d-2)\times d}
      & \bm0_{(\de-2d-2)\times d} & \bm0_{(\de-2d-2)\times(\dffn-2d-2)} \\
    \end{pmatrix}
    \!\!
    \in
    \R^{\de \times \dffn}.
  \end{align*}
  Then
  \begin{align*}
    \bW_2^{(3)}
    \ReLU\Bigl(
      \bW_1^{(3)} \bZ^{(2),\T} &+ b_1^{(3)} 1_{n+1}^\T
    \Bigr)
    + b_2^{(3)} 1_{n+1}^\T \\
    &\quad=
    \begin{pmatrix}
      \bm0_{(d+1)\times n} & 0_{d+1} \\
      \bigl(\bX^\T - X_{n+1} 1_n^\T\bigr)/h - X_{n+1} 1_n^\T & - X_{n+1} \\
      n^{-1/2}\sqrt{K_h(\bX, X_{n+1})^\T} - \tilde K^\T
      & n^{-1/2}h^{-d/2} - 1 \\
      \bm0_{(\de-2d-2)\times n} & 0_{\de-2d-2}
    \end{pmatrix}
    \in \R^{\de \times (n+1)}.
  \end{align*}
  Finally, with $\FFN^{(3)} \coloneqq
  \FFN_{\bW_1^{(3)}, \bW_2^{(3)}, b_1^{(3)}, b_2^{(3)}}$,
  the output of the third transformer block is
  \begin{align*}
    \bZ^{(3)}
    &\coloneqq
    \FFN^{(3)} \circ \Attn^{(3)}\bigl(\bZ^{(2)}\bigr)
    = \FFN^{(3)} \bigl(\bZ^{(2)}\bigr) \\
    &\,=
    \bZ^{(2)}
    + \Bigl\{
      \bW_2^{(3)}
      \ReLU\Bigl(
        \bW_1^{(3)}
        \bZ^{(2),\T}
        + b_1^{(3)} 1_{n+1}^\T
      \Bigr)
      + b_2^{(3)} 1_{n+1}^\T
    \Bigr\}^\T \\
    &\,=
    \begin{pmatrix}
      \bX & Y &
      \bigl(\bX - 1_n X_{n+1}^\T\bigr)/h
      & n^{-1/2}\sqrt{K_h(\bX, X_{n+1})} & \bm0_{n\times(\de - 2d - 4)}
      & 1_n & 0_n \\
      X_{n+1}^\T & 0 & 0_d^\T & n^{-1/2} h^{-d/2}
      & 0_{\de - 2d - 4}^\T & 1 & 1
    \end{pmatrix},
  \end{align*}
  as required.
\end{proof}

The following lemma constructs the kernel-weighted polynomials
and responses
using a transformer.
Let $h\coloneqq n^{-1/(2\alpha+d)}$, let
$\bP_h(X_{n+1}) \in \R^{n\times D}$ be defined as in
Section~\ref{sec:locpol_setup}, and write
$\tilde{\bX} \coloneqq n^{-1/2}\sqrt{\bK_h(X_{n+1})} \bP_h(X_{n+1})
\in \R^{n\times D}$ and
$\tilde{Y} \coloneqq n^{-1/2}\sqrt{\bK_h(X_{n+1})} Y \in \R^n$.

\begin{lemma} \label{lemma:approx-weighted-poly}
  Let $L_0 > 0$, $p\coloneqq \lceil\alpha\rceil$,
  $D\coloneqq \binom{d+p}{p}$,
  $\de \coloneqq 2d+2D+5$,
  $\dffn \coloneqq 3(D+1)(28+2p)$,
  $L \coloneqq 7p(p+1)\lceil L_0\log n \rceil + 4$
  and $B\coloneqq 3d\cdot 120^2 \cdot (p+2)(2M+2)^{p+1} n^{(p+1)/(2\alpha+d)}$.
  Then there exists a transformer
  $\TF \in \cT(\de,\dffn,L,B)$ such that
  \begin{align*}
    \TF &\circ \Embed_\de \bigl((X_i, Y_i)_{i \in [n]}, X_{n+1}\bigr) \\
    &=
    \begin{pmatrix}
      \bX & Y &
      \bigl(\bX - 1_n X_{n+1}^\T\bigr)/h
      & n^{-1/2}\sqrt{K_h(\bX, X_{n+1})} & \check{\bX}
      & \check{Y} & \bm0_{n\times D}
      & 1_n & 0_n \\
      X_{n+1}^\T & 0 & 0_d^\T & n^{-1/2}h^{-d/2} & a^\T & b & 0_D^\T & 1 & 1
    \end{pmatrix},
  \end{align*}
  where $\check{\bX} \in \R^{n\times D}$, $\check{Y} \in \R^n$,
  $a\in \R^D$ and $b\in\R$ satisfy
  \begin{align*}
    \|\check{\bX} - \tilde{\bX}\|_{\max}
    \vee
    \|\check{Y} - \tilde{Y}\|_{\infty}
    \vee \|a\|_{\infty} \vee |b|
    \leq 30p\cdot (2M+2)^{p+1} \cdot n^{-7(p+1)L_0 + 2}.
  \end{align*}
\end{lemma}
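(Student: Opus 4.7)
The plan is to append further transformer blocks to the three-block construction from Lemma~\ref{lemma:TF-difference-sqrt-kernel}. After those initial blocks, each row $i\in[n+1]$ already carries the scaled centred covariate $z_i \coloneqq (X_i - X_{n+1})/h$, the scaled square-root kernel weight $s_i \coloneqq n^{-1/2}\sqrt{K_h(X_i - X_{n+1})}$, and the response $Y_i$ (with the $(n+1)$-th row encoding $z_{n+1}=0$, $s_{n+1}=n^{-1/2}h^{-d/2}$ and a $0$ in place of $Y_{n+1}$). Writing each target $\tilde{\bX}_{i,\nu}=s_i\,z_i^\nu/\nu!$ and $\tilde{Y}_i=s_i Y_i$ as a product of exactly $p+1$ factors, where slots beyond the actual arity are padded by $1$ and the scalar $1/\nu!$ is absorbed into a linear coefficient, each factor is bounded in magnitude by $C\coloneqq (2M+2)/h$ via $|s_i|\leq 1$, $\|z_i\|_\infty\leq 1/h$, and $|Y_i|\leq M+1$.

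Next, I would apply Lemma~\ref{lemma:NN-multiplication-general} with $k=p+1$, $N=2$, and depth parameter $L'\coloneqq \lceil L_0\log n\rceil$ to each target. This furnishes a ReLU sub-network of width $9(N+1)+2(p+1)-1=28+2p$, depth $7(p+1)p\lceil L_0\log n\rceil$, parameters bounded by $3C^{p+1}(40N+40)^2=3\cdot 120^2(2M+2)^{p+1}n^{(p+1)/(2\alpha+d)}$, and approximation error
\begin{equation*}
  30\,C^{p+1}\,p\,(N+1)^{-7(p+1)\lceil L_0\log n\rceil}
  \leq 30p(2M+2)^{p+1}\,n^{(p+1)/(2\alpha+d)-7(p+1)L_0\log 3}.
\end{equation*}
Since $p=\lceil\alpha\rceil$ and $d\geq 1$ force $(p+1)/(2\alpha+d)\leq 2$, and $\log 3 > 1$, this is bounded above by $30p(2M+2)^{p+1}n^{-7(p+1)L_0+2}$, matching the claimed error bound. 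The top affine layer of each sub-network routes the relevant entries of $z_i$, $s_i$ and $Y_i$ into the $p+1$ factor slots; entries are multiplied by $1/\nu!\leq 1$ or by $\pm 1$ for the padding factors, leaving the parameter magnitude bound intact.

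I then run these $D+1$ sub-networks in parallel inside one deep ReLU network by partitioning the hidden dimensions into $D+1$ disjoint blocks of width $28+2p$. Since the FFN of a transformer block includes a residual skip connection, to realise a plain ReLU layer I invoke Lemma~\ref{lemma:ignore_residual}, which triples the width to the stated $\dffn = 3(D+1)(28+2p)$. Each of the $7p(p+1)\lceil L_0\log n\rceil$ layers of the composed ReLU network is then realised by one transformer block (taking the attention layer to be the identity of Definition~\ref{def:attention} with zero query/key/value matrices), and a final block routes the $D+1$ approximate outputs into their designated columns, zeroing out the scratch columns used during the computation. This gives a total of $3+7p(p+1)\lceil L_0\log n\rceil+1 = L$ blocks. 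The overall parameter bound inherits $3\cdot 120^2(2M+2)^{p+1}n^{(p+1)/(2\alpha+d)}$ from Lemma~\ref{lemma:NN-multiplication-general}, times a routing overhead of at most $d$ from the input-selection affine map and a factor $p+2$ to accommodate padding and routing constants, matching $B$.

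The main obstacles are bookkeeping rather than conceptual. Specifically, (i) the parallel sub-networks for different multi-indices $\nu$ have identical depth $7p(p+1)\lceil L_0\log n\rceil$ only after padding shorter multiplications with identity layers, and this padding must be compatible with the fixed FFN width allocated per sub-network; (ii) the computation proceeds over every row of the working matrix, including row $n+1$, so one must verify that the corresponding approximate outputs $a$ and $b$ still satisfy the same error bound, which follows because the ``true'' row-$(n+1)$ products are $n^{-1/2}h^{-d/2}\cdot 0^\nu/\nu!$ or $n^{-1/2}h^{-d/2}\cdot 0$ and the inputs at that row lie in the same domain used for approximation; and (iii) the polynomial is evaluated on the full range $[-1/h,1/h]^d$ rather than only on the kernel support, producing a $C^{p+1}=O(n^{(p+1)/(2\alpha+d)})$ blow-up that must be controlled by the exponential convergence $(N+1)^{-7(p+1)L'}$, which the inequality $(p+1)/(2\alpha+d)\leq 2$ makes possible.
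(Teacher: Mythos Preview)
Your proposal follows the same strategy as the paper: compose the three blocks of Lemma~\ref{lemma:TF-difference-sqrt-kernel} with row-wise ReLU monomial approximators (the paper invokes Lemma~\ref{lemma:NN_polynomial}, which wraps Lemma~\ref{lemma:NN-multiplication-general} exactly as you describe), run the $D+1$ sub-networks in parallel using Lemma~\ref{lemma:ignore_residual} to absorb the residual connection, and take the same parameter choices $N=2$, $k=p+1$, $C=(2M+2)n^{1/(2\alpha+d)}$. The depth, width, parameter and error calculations all match.

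There is one small slip in your resolution of obstacle~(ii). For the multi-index $\nu=0$, the ``true'' row-$(n+1)$ product is $s_{n+1}\cdot 0^{0}=n^{-1/2}h^{-d/2}=n^{-\alpha/(2\alpha+d)}$, which is \emph{not} dominated by the approximation error $30p(2M+2)^{p+1}n^{-7(p+1)L_0+2}$; as written, the $\nu=0$ coordinate of $a$ would sit near $n^{-\alpha/(2\alpha+d)}$ and violate the claimed bound on $\|a\|_\infty$. The fix is immediate: in the $\nu=0$ sub-network, replace one of the padding factors $1$ by $1-\iota_i$, where $\iota_i\in\{0,1\}$ is the last column of the embedding. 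This affine combination is available to the first FFN layer and lies in $[-C,C]$, so the same approximation lemma applies; the target product then vanishes at row $n+1$ while remaining equal to $s_i$ for $i\le n$. (The paper's own proof is silent on this point.)
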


\begin{proof}
  By Lemma~\ref{lemma:TF-difference-sqrt-kernel},
  there exists $\TF^{(1)} \in \cT(\de,\dffn,3,B)$ such that
  \begin{align*}
    &\bZ^{(1)}
    \coloneqq
    \TF^{(1)} \circ \Embed_\de \bigl((X_i, Y_i)_{i \in [n]}, X_{n+1}\bigr)\\
    &=
    \begin{pmatrix}
      \bX & Y &
      \bigl(\bX - 1_n X_{n+1}^\T\bigr)/h
      & n^{-1/2}\sqrt{K_h(\bX, X_{n+1})} & \bm0_{n\times(2D+1)}
      & 1_n & 0_n \\
      X_{n+1}^\T & 0 & 0_d^\T & n^{-1/2}h^{-d/2} & 0_{2D+1}^\T & 1 & 1
    \end{pmatrix}
    \in \R^{(n+1) \times \de}.
  \end{align*}
  Next, note that each entry of the $i$th row of $\tilde{\bX}$,
  as well as the $i$th entry of $\tilde Y$, is a monomial in the $i$th row of
  $\bigl(Y,\, (\bX - 1_n X_{n+1}^\T)/h,\,
  n^{-1/2}\sqrt{K_h(\bX, X_{n+1})}\bigr)$
  of degree at most $p+1$.
  Thus, by Lemma~\ref{lemma:ignore_residual} and by applying
  Lemma~\ref{lemma:NN_polynomial}
  with $N = 2$,
  $k = p + 1$,
  $L = \lceil L_0 \log n \rceil$
  and $C = 2(M+1)n^{1/(2\alpha+d)}$ therein,
  there exists $\TF^{(2)} \in \cT(\de,\dffn,L-3,B)$
  with all attention layers being the identity map such that
  \begin{align*}
    \TF^{(2)} (\bZ^{(1)}) =
    \begin{pmatrix}
      \bX & Y &
      \bigl(\bX - 1_n X_{n+1}^\T\bigr)/h
      & n^{-1/2}\sqrt{K_h(\bX, X_{n+1})} & \check{\bX} & \check{Y}
      & \bm0_{n\times D}
      & 1_n & 0_n \\
      X_{n+1}^\T & 0 & 0_d^\T & n^{-1/2}h^{-d/2} & a^\T & b & 0_D^\T & 1 & 1
    \end{pmatrix},
  \end{align*}
  where
  \begin{align*}
    \|\check{\bX} - \tilde{\bX}\|_{\max}
    \vee \|\check{Y} - \tilde{Y}\|_{\infty}
    \vee \|a\|_{\infty}
    \vee |b|
    &\leq
    30\bigl(2(M+1)n^{1/(2\alpha+d)}\bigr)^{p+1} p\cdot 3^{-7(p+1)L_0\log n}\\
    &\leq 30p\cdot (2M+2)^{p+1} \cdot n^{-7(p+1)L_0 + 2}.
  \end{align*}
  Thus $\TF \coloneqq \TF^{(2)} \circ \TF^{(1)} \in \cT(\de,\dffn,L,B)$
  satisfies the conditions of the lemma.
\end{proof}

The next lemma quantifies the rate of convergence of
the gradient descent algorithm for
strongly convex functions with errors in the gradients.
A similar result can be found in \citet{bai2023transformers}.

\begin{lemma}\label{lemma:GD-error}
  Let $C_1,C_2>0$, $\epsilon \geq 0$, $m \in \N$,
  and let $f:\R^m \to \R$. Suppose that $w\mapsto f(w) - (C_1/2)\|w\|_2^2$ is
  convex, let $w_*\coloneqq \argmin_{w \in \R^m} f(w)$,
  and let $R \geq 2\|w_*\|_2$.
  Further suppose that $f$ is differentiable with $\|\nabla f(w) - \nabla
  f(w')\|_2 \leq C_2\|w-w'\|_2$ for all $w,w'\in\R^m$ such that
  $\|w\|_2,\|w'\|_2 \leq R$. Let $w_0 \coloneqq 0_m$, and for
  $t\in\N_0$, define $w_{t+1} \coloneqq w_t - g_t / C_2$
  where $g_t \in \R^m$ satisfies
  $\|g_t - \nabla f(w_t)\|_2 \leq \epsilon$.
  Then, for any $T\leq RC_2/(2\epsilon)$, we have
  \begin{equation*}
    \|w_T\|_2 \leq R \quad\text{and}\quad
    \|w_T - w_*\|_2
    \leq \exp\biggl(-\frac{C_1 T}{2C_2}\biggr) \|w_*\|_2
    + \frac{T\epsilon}{C_2}.
  \end{equation*}
\end{lemma}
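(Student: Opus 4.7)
The plan is to induct on $t \in \{0, 1, \ldots, T\}$, establishing both conclusions simultaneously: (i) $\|w_t\|_2 \leq R$ (which is what allows the Lipschitz bound on $\nabla f$ to be applied at the pair $(w_t, w_*)$ in the next step), and (ii) the distance bound $\|w_t - w_*\|_2 \leq \exp(-C_1 t / (2 C_2)) \|w_*\|_2 + t \epsilon / C_2$. The base case $t=0$ is immediate since $w_0 = 0$ gives $\|w_0\|_2 = 0 \leq R$ and $\|w_0 - w_*\|_2 = \|w_*\|_2$.

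The heart of the argument is the one-step contraction
\begin{equation*}
  \|w_{t+1} - w_*\|_2 \leq \exp\bigl(-C_1/(2 C_2)\bigr) \|w_t - w_*\|_2 + \epsilon/C_2.
\end{equation*}
To prove this, I would split
$w_{t+1} - w_* = \bigl(w_t - \nabla f(w_t)/C_2 - w_*\bigr) - \bigl(g_t - \nabla f(w_t)\bigr)/C_2$
and use the triangle inequality together with $\|g_t - \nabla f(w_t)\|_2 \leq \epsilon$ to reduce to controlling the first bracket. Since $R \geq 2\|w_*\|_2$ and the inductive hypothesis gives $\|w_t\|_2 \leq R$, both $w_t$ and $w_*$ lie in the ball on which $\nabla f$ is $C_2$-Lipschitz, so I can invoke the standard descent-lemma analysis for strongly convex, smooth functions: writing out $\|w_t - \nabla f(w_t)/C_2 - w_*\|_2^2$ and applying the co-coercivity-style inequality that follows from combining $C_1$-strong convexity with $C_2$-smoothness (and using $\nabla f(w_*) = 0$) gives $\|w_t - \nabla f(w_t)/C_2 - w_*\|_2^2 \leq (1 - C_1/C_2) \|w_t - w_*\|_2^2$. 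Taking square roots and using $\sqrt{1-x} \leq 1 - x/2 \leq e^{-x/2}$ for $x \in [0, 1]$ (noting $C_1 \leq C_2$) produces the contraction factor $\exp(-C_1/(2C_2))$.

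Granted the one-step recursion, conclusion (ii) at step $t+1$ follows by substituting the inductive bound for $\|w_t - w_*\|_2$ and using $\exp(-C_1/(2C_2)) \cdot t \leq t$ to absorb the noise terms into $(t+1)\epsilon/C_2$. For (i), combine (ii) with $\|w_*\|_2 \leq R/2$ and the hypothesis $T \leq R C_2/(2 \epsilon)$ (which yields $T \epsilon / C_2 \leq R/2$) to bound $\|w_{t+1}\|_2 \leq \|w_{t+1} - w_*\|_2 + \|w_*\|_2$ by a quantity of the order of $R$, closing the induction and giving the norm bound at the terminal step $T$.

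The main obstacle is establishing the contraction constant $\exp(-C_1/(2C_2))$ correctly from the hypotheses; this is the only step that uses the precise interplay of strong convexity and smoothness, and it is slightly delicate because we work only with the \emph{local} Lipschitz condition (valid on the ball of radius $R$), so the induction must track $\|w_t\|_2$ carefully in tandem with $\|w_t - w_*\|_2$. Everything else is a geometric-series telescoping of the one-step recursion, routine given the contraction.
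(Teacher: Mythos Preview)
Your route differs from the paper's. The paper introduces the auxiliary \emph{exact} gradient-descent iterates $v_{t+1} \coloneqq v_t - \nabla f(v_t)/C_2$ with $v_0 = 0$, invokes \citet[Theorem~3.10]{bubeck2015convex} for the bound $\|v_T - w_*\|_2 \leq \exp\bigl(-C_1 T/(2C_2)\bigr)\|w_*\|_2$, then cites \citet[Lemma~D.1]{bai2023transformers} for both $\|w_T\|_2 \leq R$ and $\|w_T - v_T\|_2 \leq T\epsilon/C_2$, and concludes by the triangle inequality. You instead derive the direct one-step contraction $\|w_{t+1} - w_*\|_2 \leq e^{-C_1/(2C_2)}\|w_t - w_*\|_2 + \epsilon/C_2$ and iterate. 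This is more self-contained, and your derivation of the contraction factor via strong convexity plus co-coercivity is correct.

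There is, however, a genuine gap where you close the induction on part (i). From (ii) at step $t+1$, together with $\|w_*\|_2 \leq R/2$ and $(t+1)\epsilon/C_2 \leq T\epsilon/C_2 \leq R/2$, the triangle inequality gives only
\[
\|w_{t+1}\|_2 \;\leq\; \|w_{t+1} - w_*\|_2 + \|w_*\|_2
\;\leq\; e^{-C_1(t+1)/(2C_2)}\|w_*\|_2 + \tfrac{(t+1)\epsilon}{C_2} + \|w_*\|_2
\;\leq\; \tfrac{R}{2} + \tfrac{R}{2} + \tfrac{R}{2} \;=\; \tfrac{3R}{2},
\]
not $\leq R$. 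Since the $C_2$-Lipschitz hypothesis on $\nabla f$ is assumed \emph{only} on the ball of radius $R$, you cannot then justify the contraction (or even the co-coercivity inequality) at the next step, and the induction does not close; ``a quantity of the order of $R$'' is not enough here. The paper avoids confronting this point by delegating the bound $\|w_T\|_2 \leq R$ entirely to \citet[Lemma~D.1]{bai2023transformers}, so if you wish to keep a fully self-contained argument you will need to supply whatever additional idea that lemma uses to get the iterates back inside $B_R$ rather than $B_{3R/2}$.
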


\begin{proof}
  Let $v_0 \coloneqq 0_m$ and $v_{t+1} \coloneqq v_t - \nabla f(v_t)/C_2$
  for $t\in\N_0$.
  Then, by \citet[Theorem~3.10]{bubeck2015convex}, we have
  \begin{equation*}
    \|v_T - w_*\|_2 \leq \exp\biggl(-\frac{C_1 T}{2C_2}\biggr) \|w_*\|_2.
  \end{equation*}
  Moreover, by \citet[Lemma~D.1]{bai2023transformers},
  $\|w_T\|_2 \leq R$ and $\|w_T - v_T\|_2 \leq T\epsilon/C_2$.
  Therefore,
  \begin{align*}
    \|w_T - w_*\|_2 \leq \|v_T - w_*\|_2 + \|v_T - w_T\|_2
    \leq \exp\biggl(-\frac{C_1 T}{2C_2}\biggr) \|w_*\|_2
    + \frac{T\epsilon}{C_2},
  \end{align*}
  as required.
\end{proof}

Recall that we wish to solve the weighted least squares
problem~\eqref{eq:weighted-least-square}.
For $w \in \R^D$, let
\begin{equation}
  \label{eq:least_squares}
  f(w) \coloneqq \bigl\| \tilde{Y} - \tilde{\bX} w \bigr\|_2^2,
\end{equation}
and let $w_* \coloneqq \argmin_{w\in\R^D} f(w)$.
Then the gradient of $f$ at $w$ is
\begin{equation}
  \label{eq:least_squares_grad}
  \nabla f(w)
  = 2\bigl(
    \tilde{\bX}^\T \tilde{\bX} w - \tilde{\bX}^\T \tilde{Y}
  \bigr) \in \R^D,
\end{equation}
and we can write
\begin{equation}
  \label{eq:gradient_calculation}
  1_{n+1} \nabla f(w)^\T = 2
  \begin{pmatrix}
    w^\T & - \frac{1_D^\T}{D}\\
    \vdots & \vdots\\
    w^\T & -\frac{1_D^\T}{D}
  \end{pmatrix}
  \begin{pmatrix}
    \tilde{\bX}^\T & 0_D\\
    1_D \tilde{Y}^\T & 0_D\\
  \end{pmatrix}
  \begin{pmatrix}
    \tilde{\bX} \\ 0_D^\T
  \end{pmatrix}.
\end{equation}
The right-hand side of \eqref{eq:gradient_calculation}
is a product of three matrices, and so
can be implemented by one linear attention layer
with an appropriate input matrix.
However, we can only
approximate $\tilde{\bX}$ and $\tilde{Y}$ by a
transformer, and the approximation errors are controlled by
Lemma~\ref{lemma:approx-weighted-poly}.
We combine these ideas in
the following lemma to show that one attention layer can approximate one step
of gradient descent for minimising $f$,
with an error in the gradient vector.

\begin{lemma} \label{lemma:transformer-GD}
  Let $D,\de,\dffn\in\N$ be defined as in
  Lemma~\ref{lemma:approx-weighted-poly}. Let $\eta>0$,
  $B\coloneqq (2\eta) \vee 1$ and let $w\in\R^D$ be such that $\|w\|_2 \leq R$
  for some $R>0$. Define
  \begin{align*}
    \bZ \coloneqq
    \begin{pmatrix}
      \bX & Y &
      \bigl(\bX - 1_n X_{n+1}^\T\bigr)/h
      & n^{-1/2}\sqrt{K_h(\bX, X_{n+1})} & \check{\bX} & \check{Y} & 1_n w^\T
      & 1_n & 0_n \\
      X_{n+1}^\T & 0 & 0_d^\T & n^{-1/2}h^{-d/2} & a^\T & b & w^\T & 1 & 1
    \end{pmatrix} \in\R^{(n+1)\times \de},
  \end{align*}
  where $\check{\bX} \in \R^{n\times D}$, $\check{Y} \in \R^n$,
  $a\in \R^D$ and $b\in\R$ satisfy
  \begin{align}
    \label{eq:tilde-check-error}
    \|\check{\bX} - \tilde{\bX}\|_{\max}
    \vee \|\check{Y} - \tilde{Y}\|_{\infty}
    \vee \|a\|_{\infty} \vee |b| \leq \xi,
  \end{align}
  for some $\xi\in[0,1]$.
  Then there exists $\TF \in \cT(\de,\dffn,1,B)$ such that
  \begin{align*}
    \begingroup
    \TF(\bZ) =
    \begin{pmatrix}
      \bX & Y &
      \bigl(\bX - 1_n X_{n+1}^\T\bigr)/h
      & n^{-1/2}\sqrt{K_h(\bX, X_{n+1})} & \check{\bX} & \check{Y}
      & 1_n (w-\eta g)^\T
      & 1_n & 0_n \\
      X_{n+1}^\T & 0 & 0_d^\T & n^{-1/2}h^{-d/2}
      & a^\T & b & (w-\eta g)^\T & 1 & 1
    \end{pmatrix},
    \endgroup
  \end{align*}
  where $g \in \R^D$ satisfies
  \begin{align*}
    \|g - \nabla f(w)\|_2
    \leq 18D(M+3)(R+1)n\xi.
  \end{align*}
\end{lemma}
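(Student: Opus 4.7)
The approach is to implement a single gradient descent step via the linear attention layer, taking the feed-forward layer to be the identity (by setting all of its parameters to zero, so that the skip connection passes $\Attn(\bZ)$ through unchanged). The motivation is the three-matrix decomposition~\eqref{eq:gradient_calculation}, which expresses $1_{n+1} \nabla f(w)^\T$ as a product of matrices of shapes $(n+1) \times 2D$, $2D \times (n+1)$ and $(n+1) \times D$, precisely matching the template $\bZ \bQ (\bZ \bK)^\T \bZ \bV$ of a linear attention layer. The plan is therefore to choose $\bQ, \bK, \bV$ so as to extract these three factors (with $\check{\bX}, \check{Y}$ in place of $\tilde{\bX}, \tilde{Y}$) from the appropriate columns of $\bZ$, yielding an additive update that replaces $w$ by $w - \eta g$ in the designated columns.

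Concretely, I would choose $\bQ$ so that each row of the first $2D$ columns of $\bZ \bQ$ equals $(w^\T, -D^{-1} 1_D^\T)$, by selecting the $w$-columns of $\bZ$ via the identity and the constant column $1_{n+1}$ via the scalar $-1/D$; choose $\bK$ so that the first $2D$ columns of $\bZ \bK$ hold $(\check{\bX}_{i,\cdot},\, \check{Y}_i 1_D^\T)$ in rows $i \le n$ and $(a^\T,\, b\, 1_D^\T)$ in row $n+1$; and choose $\bV$ to place $-2 \eta\, I_D$ on the block mapping the $\check{\bX}$-columns of $\bZ$ to the $w$-output columns, with all other entries zero. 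The maximum absolute entries of $\bQ, \bK, \bV$ are then $1, 1, 2\eta$, so the construction lies in $\cT(\de, \dffn, 1, B)$ with $B = (2\eta) \vee 1$. A direct calculation shows that $\bZ \bQ (\bZ \bK)^\T \bZ \bV$ vanishes outside the $w$-columns and equals $-\eta \cdot 1_{n+1} g^\T$ on those columns, where
\begin{align*}
g \,=\, 2 \bigl(\check{\bX}^\T \check{\bX}\, w - \check{\bX}^\T \check{Y} + (a^\T w - b)\, a \bigr).
\end{align*}
The extra term $(a^\T w - b)\,a$ arises because row $n+1$ of $\bZ$ carries the residuals $a, b$ in the $\check{\bX}$- and $\check{Y}$-columns rather than zero, and these cannot be cancelled through the data-independent matrices $\bK, \bV$. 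Because the update is constant across rows, adding it to $\bZ$ preserves the row-consistency of $w$ in the input, giving the output in exactly the required form.

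It remains to bound $\|g - \nabla f(w)\|_2$. Writing $g - \nabla f(w) = 2[(\check{\bX}^\T \check{\bX} - \tilde{\bX}^\T \tilde{\bX}) w - (\check{\bX}^\T \check{Y} - \tilde{\bX}^\T \tilde{Y}) + (a^\T w - b)\, a]$ and applying the triangle inequality, I would use the a priori bounds $|\tilde{\bX}_{ij}| \leq (n h^d)^{-1/2} \leq 1$ and $|\tilde{Y}_i| \leq M+1$ (which follow from $K \leq 1$, $|Y_i| \leq M+1$ and $h = n^{-1/(2\alpha+d)}$), together with \eqref{eq:tilde-check-error} and $\xi \leq 1$. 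The standard ``add and subtract'' identity then gives $|(\check{\bX}^\T \check{\bX} - \tilde{\bX}^\T \tilde{\bX})_{jk}| \leq 3 n \xi$ and $|(\check{\bX}^\T \check{Y} - \tilde{\bX}^\T \tilde{Y})_j| \leq (M+3) n \xi$. Converting to operator and $\ell_2$ norms via $\|A\|_{\op} \leq D \|A\|_{\max}$ and $\|v\|_2 \leq \sqrt{D} \|v\|_\infty$, and noting that the residual correction has $\|(a^\T w - b)\, a\|_2 \leq (DR + \sqrt{D})\, \xi^2$ and so is lower order in $n$, collecting constants yields $\|g - \nabla f(w)\|_2 \leq 18 D (M+3)(R+1) n \xi$ as required. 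The main obstacle is the careful bookkeeping in the three-matrix product needed to identify $g$, particularly the unexpected row-$(n+1)$ correction; the quantitative bound is then a routine triangle-inequality computation.
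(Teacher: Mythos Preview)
Your proposal is correct and follows essentially the same approach as the paper: implement one gradient step via a single linear attention layer (FFN set to identity), using the three-factor decomposition~\eqref{eq:gradient_calculation} to read off $g = 2(\check{\bX}^\T\check{\bX}w - \check{\bX}^\T\check{Y} + (a^\T w - b)a)$, then bound $\|g-\nabla f(w)\|_2$ by a triangle-inequality computation. The only cosmetic differences are that the paper places the $-2\eta$ scaling in $\bQ$ rather than $\bV$, and bounds the error using operator norms directly rather than via entrywise estimates; neither changes the argument.
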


\begin{proof}
  We can choose
  $\bQ, \bK, \bV \in [-(2\eta\vee 1),(2\eta\vee 1)]^{\de \times \de}$
  such that
  \begin{align*}
    \bZ \bQ &= -2\eta
    \begin{pmatrix}
      w^\T & -\frac{1_D^\T}{D} & 0_{\de-2D}^\T\\
      \vdots & \vdots & \vdots\\
      w^\T & -\frac{1_D^\T}{D} & 0_{\de-2D}^\T
    \end{pmatrix} \in \R^{(n+1) \times \de},\\
    \bZ \bK &=
    \begin{pmatrix}
      \check{\bX} & \check{Y}1_D^\T & \bm0_{n\times(\de-2D)}\\
      a^\T & b1_D^\T & 0_{\de-2D}^\T
    \end{pmatrix} \in \R^{(n+1) \times \de},\\
    \bZ \bV &=
    \begin{pmatrix}
      \bm0_{n\times(\de-D-2)} & \check{\bX} & \bm0_{n\times2}\\
      0_{\de-D-2}^\T & a^\T & 0_2^\T
    \end{pmatrix} \in \R^{(n+1) \times \de}.
  \end{align*}
  Then, writing $g\coloneq 2\bigl(\check{\bX}^\T \check{\bX} w -
  \check{\bX}^\T \check{Y} + (w^\T a - b) a\bigr)$, we have
  \begin{align*}
    \Attn_{\bQ,\bK,\bV}(\bZ)
    &= \bZ + \bZ \bQ
    (\bZ \bK)^\T \bZ \bV
    = \bZ - \eta
    \begin{pmatrix}
      0_{\de-D-2}^\T
      & g^\T  & 0_2^\T\\
      \vdots & \vdots & \vdots \\
      0_{\de-D-2}^\T
      & g^\T  & 0_2^\T
    \end{pmatrix}.
  \end{align*}
  Moreover,
  \begin{align*}
    \bigl\|
    g
    &- \nabla f(w)
    \bigr\|_2
    =
    2
    \bigl\|
    \check{\bX}^\T \check{\bX} w - \check{\bX}^\T \check{Y}
    + (w^\T a - b) a - \tilde{\bX}^\T \tilde{\bX} w
    + \tilde{\bX}^\T \tilde{Y}
    \bigr\|_2\\
    &\leq 2
    \Bigl\{
      \|\check{\bX} - \tilde{\bX}\|_{\op}
      \bigl(\|\check{\bX}\|_{\op}
      + \|\tilde{\bX}\|_{\op}\bigr)
      \|w\|_2
      + \|\check{\bX} - \tilde{\bX}\|_{\op}\|\check{Y}\|_2
      \iftoggle{journal}{}{\\&\qquad\quad}
      +
      \|\tilde{\bX}\|_{\op} \|\check{Y} - \tilde{Y}\|_2
      + \bigl(\|w\|_2\|a\|_2 + |b|\bigr)
    \|a\|_2\Bigr\}\\
    &\leq
    2
    \Bigl\{
      \xi (n+1)D (2 + \xi) \|w\|_2
      + \xi(n+1)(M+\xi+2)\sqrt{D}
      + \bigl(\|w\|_2\xi \sqrt{D} + \xi\bigr)
      \xi\sqrt{D}
    \Bigr\}\\
    &\leq 18D(M+3)(R+1)n\xi,
  \end{align*}
  where the second inequality
  uses~\eqref{eq:tilde-check-error} and the fact that
  $\|\tilde{\bX}\|_{\max} \leq 1$
  and $\|\tilde{Y}\|_{\infty}
  \leq M+1$. Finally, we take the FFN layer to be the
  identity map by setting all its parameters to zero,
  and this proves the claim.
\end{proof}

\begin{proposition} \label{prop:TF-approx-lp}
  Let $p, D,\de,\dffn\in\N$ and $B>0$ be defined as in
  Lemma~\ref{lemma:approx-weighted-poly}, and let $\tilde m_n$ denote the
  $M$-truncated local polynomial estimator
  defined in Section~\ref{sec:locpol_setup} with degree $p$,
  kernel $K(x)\coloneqq (1-\|x\|_1)_+^2$ and bandwidth
  $h \coloneqq n^{-1/(2\alpha+d)}$.
  There exists $C>0$, depending only on $d, \alpha, M, c_X, C_X$,
  such that if
  $L\coloneqq \lceil C \log (en) \rceil$,
  then we can find a transformer
  $\TF \in \cT(\de,\dffn,L,B)$ satisfying
  \begin{align*}
    \bigl|
    \Read_{M,d} \circ \TF \circ
    \Embed_\de \bigl((X_i, Y_i)_{i \in [n]}, X_{n+1}\bigr)
    - \tilde{m}_n(X_{n+1})
    \bigr| \leq \frac{C}{n}
  \end{align*}
  with probability at least
  $1-n^{C/(2\alpha+d)} \exp\bigl(-n^{2\alpha/(2\alpha+d)}/C\bigr)$.
\end{proposition}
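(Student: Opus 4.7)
The plan is to prove Proposition~\ref{prop:TF-approx-lp} by chaining Lemmas~\ref{lemma:approx-weighted-poly}, \ref{lemma:transformer-GD} and~\ref{lemma:GD-error}, working throughout on the favourable event from Theorem~\ref{thm:locpol_main} where the eigenvalues of $\tilde{\bX}^\T \tilde{\bX}$ lie in $[C_0^{-1}, C_0]$ for some constant $C_0 > 0$. On this event, the objective $f(w) \coloneqq \|\tilde Y - \tilde\bX w\|_2^2$ from \eqref{eq:least_squares} is $(2/C_0)$-strongly convex with $(2C_0)$-Lipschitz gradient, it has a unique minimiser $w_* = (\tilde\bX^\T \tilde\bX)^{-1}\tilde\bX^\T \tilde Y$, and $\|w_*\|_2 \leq C_0 \|\tilde\bX^\T \tilde Y\|_2 \leq C_1$ for some constant $C_1$ depending only on $d, \alpha, M$ (since $\|\tilde Y\|_\infty \leq M+1$ and $\|\tilde\bX\|_{\op}$ is bounded).

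The construction proceeds in three stages. First, I apply Lemma~\ref{lemma:approx-weighted-poly} with $L_0$ a constant (to be fixed later depending only on $d,\alpha,M$) to obtain the first block of the transformer, producing a matrix containing approximations $\check\bX$ and $\check Y$ with entrywise error at most $\xi \coloneqq 30p(2M+2)^{p+1} n^{-7(p+1)L_0 + 2}$. Second, I apply Lemma~\ref{lemma:transformer-GD} $T$ times with step size $\eta = 1/(2C_0)$ and with $R \coloneqq 2C_1$, initialising at $w_0 = 0_D$ and producing iterates $w_1, w_2, \ldots, w_T \in \R^D$ in the last $D+1$ columns of the sequence matrix. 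Each such step approximates a true gradient descent step with gradient error bounded by $\epsilon \coloneqq 18D(M+3)(R+1)n\xi$. Third, I add one more transformer block whose FFN copies the first coordinate of the final iterate $w_T$ into position $(n+1, d+1)$ of the output matrix, so that $\Read_{M,d}$ extracts its $M$-truncation.

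The error analysis is then delivered by Lemma~\ref{lemma:GD-error}: on the good event,
\begin{equation*}
\|w_T - w_*\|_2 \leq \exp\Bigl(-\frac{T}{2C_0^2}\Bigr)\|w_*\|_2 + \frac{T\epsilon}{2C_0}.
\end{equation*}
Choosing $T = \lceil 2C_0^2 \log(en) \rceil$ makes the first term $O(1/n)$, and then choosing $L_0$ to be a sufficiently large constant (depending only on $d,\alpha,M$) makes $\xi = O(n^{-3})$, so $T\epsilon = O(n^{-1})$ as well. Since truncation to $[-M, M]$ is $1$-Lipschitz and $\tilde m_n(X_{n+1}) = (-M)\vee w_{*,1}\wedge M$, we conclude that
\begin{equation*}
\bigl|\Read_{M,d}\circ \TF\circ \Embed_\de(\cdots) - \tilde m_n(X_{n+1})\bigr| \leq |w_{T,1} - w_{*,1}| \leq \|w_T - w_*\|_2 = O(1/n).
\end{equation*}

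The main obstacle is the bookkeeping of parameter magnitudes and depth. The initial block of Lemma~\ref{lemma:approx-weighted-poly} dominates the parameter bound, contributing $B = \Theta(n^{(p+1)/(2\alpha+d)})$, which is absorbed into $Cn^2$ for sufficiently large $C$; the $T = O(\log n)$ gradient-descent blocks contribute parameters bounded by $(2\eta)\vee 1 = O(1)$, and the final readout block has constant-size parameters. The total depth is $L = 7p(p+1)\lceil L_0 \log n\rceil + 4 + T + 1 = O(\log n)$, as required. A minor subtlety is that the construction via Lemma~\ref{lemma:transformer-GD} implicitly assumes each approximate iterate $w_t$ satisfies $\|w_t\|_2 \leq R$, which is exactly what Lemma~\ref{lemma:GD-error} guarantees provided $T \leq RC_2/(2\epsilon)$; this holds since $T = O(\log n)$ while $R/\epsilon = \Omega(n^2)$.
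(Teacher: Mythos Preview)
Your proposal is correct and follows essentially the same route as the paper: compose the block from Lemma~\ref{lemma:approx-weighted-poly} with $T=\Theta(\log n)$ applications of Lemma~\ref{lemma:transformer-GD}, invoke Lemma~\ref{lemma:GD-error} (checking $T\leq RC_0/\epsilon$ so the iterates stay in the ball), and finish with one block writing $w_{T,1}$ into position $(n+1,d+1)$.

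The one notable difference is your bound $\|w_*\|_2\leq C_1$ versus the paper's crude $\|w_*\|_2\leq C_0(M+1)n\sqrt{D}$ (which forces $R=\Theta(n)$ and $T=\lceil 4C_0^2\log n\rceil$). Your sharper claim is true on the good event, but your parenthetical justification is incomplete: from $\|\tilde Y\|_\infty\leq M+1$ and $\|\tilde\bX\|_{\op}\leq\sqrt{C_0}$ you only get $\|\tilde\bX^\T\tilde Y\|_2\leq \sqrt{C_0}\,\|\tilde Y\|_2\leq\sqrt{C_0 n}(M+1)$. The missing observation is that $\tilde Y_i = Y_i\,\tilde\bX_{i,1}$ (since $P_h(\cdot)_1=1$), so $\|\tilde Y\|_2\leq (M+1)\|\tilde\bX e_1\|_2\leq (M+1)\sqrt{C_0}$ on the good event, whence $\|\tilde\bX^\T\tilde Y\|_2\leq C_0(M+1)$. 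With this fix (and noting that $C_1$ then also depends on $c_X,C_X$ through $C_0$), your argument goes through.
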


\begin{proof}
  In this proof, $n_0,n_1,C_0, C_1,\ldots$ are positive quantities depending
  only on $d, \alpha, M, c_X, C_X$.
  By Lemma \ref{lem:locpol_Hhat}, there exists $C_0\geq 1$ such that
  if $n\geq C_0^{2\alpha+d}$, then the event
  \begin{align*}
    \mathcal{E}_0
    \coloneqq
    \Bigl\{
      C_0^{-1} \leq
      \lambda_{\min}\bigl(\tilde{\bX}^\T \tilde{\bX}\bigr)
      \leq
      \lambda_{\max}\bigl(\tilde{\bX}^\T \tilde{\bX}\bigr)
      \leq C_0
    \Bigr\}
  \end{align*}
  has probability at least
  $1-n^{C_0/(2\alpha+d)} \exp\bigl(-n^{2\alpha/(2\alpha+d)}/C_0\bigr)$.
  Let $w_* \coloneqq \argmin_{w\in\R^D} \|\tilde{Y} - \tilde{\bX} w\|_2^2$
  be the minimiser of~$f$,
  as defined in \eqref{eq:least_squares}.
  Then, on the event $\mathcal{E}_0$, we have
  \begin{align}
    \label{eq:w_*-bound}
    \|w_*\|_2 =
    \bigl\|
    (\tilde{\bX}^\T \tilde{\bX})^{-1}
    \tilde{\bX}^\T \tilde{Y}
    \bigr\|_2
    \leq
    C_0\|\tilde{\bX}^\T \tilde{Y}\|_2
    \leq C_0(M+1)n\sqrt{D},
  \end{align}
  where the final inequality uses the fact that
  $\|\tilde{\bX}\|_{\max} \leq 1$
  and $\|\tilde{Y}\|_{\infty}
  \leq M+1$.
  Moreover, on the event $\mathcal{E}_0$,
  the map
  $w \mapsto f(w) - \|w\|_2^2/C_0$ is convex
  and $\|\nabla f(w) - \nabla f(w')\|_2
  \leq 2C_0 \|w-w'\|_2$ for all
  $w,w'\in\R^D$, by \eqref{eq:least_squares_grad}.
  We define $R\coloneqq 2C_0(M+1)n\sqrt{D}$ and
  work on the
  event $\mathcal{E}_0$ for the rest of the proof.

  By Lemma~\ref{lemma:approx-weighted-poly}, there exists a transformer
  $\TF^{(0)} \in \cT(\de,\dffn,L^{(0)},B)$ where
  $L^{(0)}\coloneqq 7p(p+1)\lceil(\log n)/(p+1)\rceil + 4$ such that
  \begin{align}
    \bZ^{(0)} &\coloneqq \TF^{(0)} \circ \Embed_\de
    \bigl((X_i, Y_i)_{i \in [n]}, X_{n+1}\bigr) \nonumber \\
    &\,=
    \begin{pmatrix}
      \bX & Y &
      \bigl(\bX - 1_n X_{n+1}^\T\bigr)/h
      & n^{-1/2}\sqrt{K_h(\bX, X_{n+1})} & \check{\bX}
      & \check{Y} & 1_n w_0^\T
      & 1_n & 0_n \\
      X_{n+1}^\T & 0 & 0_d^\T & n^{-1/2}h^{-d/2} & a^\T & b & w_0^\T & 1 & 1
    \end{pmatrix}, \label{eq:GD-step-0}
  \end{align}
  where $w_0 \coloneqq 0_D$, and where $\check{\bX} \in \R^{n\times D}$,
  $\check{Y} \in \R^n$,
  $a\in \R^D$ and $b\in\R$ satisfy
  \begin{align}
    \|\check{\bX} - \tilde{\bX}\|_{\max}
    \vee
    \|\check{Y} - \tilde{Y}\|_{\infty}
    \vee \|a\|_{\infty} \vee |b|
    \leq 30p\cdot (2M+2)^{p+1} \cdot n^{-5} \eqqcolon \xi. \label{eq:xi-def}
  \end{align}
  Next, define $\epsilon\coloneqq 18D(M+3)(R+1)n\xi$.
  Assume initially that $n \geq n_0$, where $n_0$
  is large enough that $\xi \leq 1$.
  We claim that for every
  $T \in \bigl\{0,1,\ldots,\lfloor RC_0/\epsilon\rfloor\bigr\}$,
  there exist $\TF^{(0)} \in \cT(\de,\dffn,L^{(0)},B)$,
  $\TF^{(1)},\ldots,\TF^{(T)} \in \cT(\de,\dffn,1,B)$,
  $w_0,\ldots,w_{T} \in \R^D$ and
  $g_0,\ldots,g_{T-1} \in \R^D$ such that
  $w_0 = 0_D$,
  $w_{t+1} = w_t - g_t/(2C_0)$,
  $\|g_t - \nabla f(w_t)\|_2 \leq \epsilon$,
  $\|w_{t+1}\|_2 \leq R$ for all
  $t\in\{0,1,\ldots,T-1\}$ and
  \begin{align*}
    \bZ^{(t)} &\coloneqq \TF^{(t)} \circ \cdots \circ \TF^{(1)}
    \circ \TF^{(0)} \circ
    \Embed_\de \bigl((X_i, Y_i)_{i \in [n]}, X_{n+1}\bigr) \\
    &\,=
    \begin{pmatrix}
      \bX & Y &
      \bigl(\bX - 1_n X_{n+1}^\T\bigr)/h
      & n^{-1/2}\sqrt{K_h(\bX, X_{n+1})} & \check{\bX}
      & \check{Y} & 1_n w_t^\T
      & 1_n & 0_n \\
      X_{n+1}^\T & 0 & 0_d^\T & n^{-1/2}h^{-d/2} & a^\T & b & w_t^\T & 1 & 1
    \end{pmatrix},
  \end{align*}
  for $t\in \{0,\ldots,T\}$. We argue by induction.
  The case $T=0$ is verified by~\eqref{eq:GD-step-0}.
  Now suppose that the claim is true for some
  $T \leq \lfloor RC_0/\epsilon\rfloor -1$.
  We may apply Lemma~\ref{lemma:transformer-GD} to deduce that
  there exists $\TF^{(T+1)} \in \cT(\de,\dffn,1,B)$ such that
  \begin{align*}
    &\bZ^{(T+1)} \coloneqq
    \TF^{(T+1)}\bigl(\bZ^{(T)}\bigr)\\
    &=
    \begin{pmatrix}
      \bX & Y &
      \bigl(\bX - 1_n X_{n+1}^\T\bigr)/h
      & n^{-1/2}\sqrt{K_h(\bX, X_{n+1})} & \check{\bX}
      & \check{Y} & 1_n (w_T-g_T/(2C_0))^\T
      & 1_n & 0_n \\
      X_{n+1}^\T & 0 & 0_d^\T & n^{-1/2}h^{-d/2}
      & a^\T & b & (w_T-g_T/(2C_0))^\T & 1 & 1
    \end{pmatrix},
  \end{align*}
  where by~\eqref{eq:xi-def},
  the inductive hypothesis that $\|w_T\|_2 \leq R$ and
  Lemma~\ref{lemma:transformer-GD}, we have
  \begin{align*}
    \|g_T - \nabla f(w_T)\|_2 \leq 18D(M+3)(R+1)n\xi = \epsilon.
  \end{align*}
  Moreover, by Lemma~\ref{lemma:GD-error}
  (with $C_1\coloneqq 2/C_0$ and $C_2 \coloneqq 2C_0$ therein),
  and since $T+1 \leq \lfloor RC_0/\epsilon\rfloor$,
  we have $\|w_{T+1}\|_2 \leq R$. This proves the claim by induction. Now, let
  $T\coloneqq \lceil 4C_0^2\log n \rceil$ and further assume that $n\geq n_1$,
  where $n_1$ is large
  enough that $T \leq \lfloor RC_0/\epsilon \rfloor$
  (note that $RC_0/\epsilon$ is quartic
  in $n$ whereas $T$ is logarithmic).
  Then by Lemma~\ref{lemma:GD-error} again
  and~\eqref{eq:w_*-bound}, we deduce that
  \begin{align*}
    \|w_T - w_*\|_2
    \leq \exp\biggl(-\frac{T}{2C_0^2}\biggr) \cdot C_0(M+1)n\sqrt{D}
    + \frac{T\epsilon}{2C_0} \leq \frac{C_3}{n}.
  \end{align*}
  Let $\TF^{(T+1)} \in \cT(\de,\dffn,1,B)$ be a transformer block such
  that for any $\bZ\in R^{(n+1)\times \de}$,
  the $(d+1)$th column of $\TF^{(T+1)}(\bZ)$ is
  equal to the sum of the $(d+1)$th column and the
  $(2d+D+4)$th column of $\bZ$
  (this can be done by using only the FFN layer).
  Further, define
  $\TF \coloneqq \TF^{(T+1)} \circ \cdots \circ \TF^{(0)}
  \in \cT(\de,\dffn,L^{(0)}+T+1,B)$.
  Then, writing $w_{T,1}$ and~$w_{*,1}$
  for the first entries of $w_T$ and $w_*$ respectively,
  we have for $n \geq n_0 \vee n_1$ that
  \begin{align}
    \nonumber
    \bigl|
    \Read_{M,d} \circ \TF &\circ \Embed_\de
    \bigl((X_i, Y_i)_{i \in [n]}, X_{n+1}\bigr) - \tilde{m}_n(X_{n+1})
    \bigr|\\
    \label{eq:gradient_approx_final}
    &= \bigl|
    (-M) \vee w_{T,1} \wedge M - (-M) \vee w_{*,1} \wedge M
    \bigr|
    \leq |w_{T,1} - w_{*,1}| \leq \frac{C_3}{n},
  \end{align}
  with probability at least
  $1-n^{C_0/(2\alpha+d)} \exp\bigl(-n^{2\alpha/(2\alpha+d)}/C_0\bigr)$.
  On the other hand, the left-hand side of~\eqref{eq:gradient_approx_final}
  is bounded by $2M$, so by replacing $C_3$ with $C_4$ if necessary,
  the conclusion holds for all $n\in\N$.
  Finally, note that $L^{(0)}+T+1 \leq \lceil C_5 \log (en) \rceil$, so
  the lemma holds with $C\coloneqq C_0 \vee C_4 \vee C_5$.
\end{proof}

\section{Covering Numbers of Transformers}
\label{sec:covering}

In this section, we provide bounds
for the covering numbers of transformers with
linear attention.
Similar results on the covering numbers of
transformers with ReLU attention can be found in \citep{bai2023transformers}.

\begin{lemma}
  \label{lem:attention_lipschitz}
  Let $n, \de \in \N$ and $B, \delta, R > 0$.
  Suppose that
  $\bQ, \bK, \bV,
  \bQ', \bK', \bV' \in
  \R^{\de \times \de}$
  satisfy
  $\|\bQ\|_{\max}
  \lor
  \|\bK\|_{\max}
  \lor
  \|\bV\|_{\max}
  \lor
  \|\bQ'\|_{\max}
  \lor
  \|\bK'\|_{\max}
  \lor
  \|\bV'\|_{\max}
  \leq B$
  and
  $\|\bQ - \bQ'\|_{\max}
  \lor
  \|\bK - \bK'\|_{\max}
  \lor
  \|\bV - \bV'\|_{\max}
  \leq \delta$.
  Let $\bZ \in \R^{(n+1) \times \de}$ be such that
  $\|\bZ\|_{\max} \leq R$.
  Then the linear attention function from
  Definition~\ref{def:attention} satisfies
  \begin{align*}
    \bigl\|
    \Attn_{\bQ, \bK, \bV}(\bZ)
    - \Attn_{\bQ', \bK', \bV'}(\bZ)
    \bigr\|_{\max}
    &\leq
    3 B^2 R^3
    (n+1)^{3/2} \de^{9/2}
    \delta.
  \end{align*}
\end{lemma}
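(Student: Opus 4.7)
The plan is to write the difference
$\Attn_{\bQ,\bK,\bV}(\bZ) - \Attn_{\bQ',\bK',\bV'}(\bZ)
= \bZ\bQ(\bZ\bK)^\T \bZ\bV - \bZ\bQ'(\bZ\bK')^\T \bZ\bV'$
as a telescoping sum of three terms in which exactly one factor is replaced by its primed counterpart at a time, namely
\begin{align*}
\bZ(\bQ - \bQ')(\bZ\bK)^\T \bZ\bV
+ \bZ\bQ'\bigl(\bZ(\bK - \bK')\bigr)^\T \bZ\bV
+ \bZ\bQ'(\bZ\bK')^\T \bZ(\bV - \bV').
\end{align*}
Each of these is a product of three $(n+1) \times \de$ matrices (with a transpose on the middle factor), so submultiplicativity of the operator norm will reduce the task to bounding each factor individually.

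Next, I will pass from the $\|\cdot\|_{\max}$ hypotheses to operator-norm bounds using the standard inequality $\|M\|_\op \leq \sqrt{pq}\,\|M\|_{\max}$ for $M \in \R^{p\times q}$, together with submultiplicativity $\|\bZ\bQ\|_\op \leq \|\bZ\|_\op\|\bQ\|_\op$. This yields $\|\bZ\|_\op \leq \sqrt{(n+1)\de}\,R$, and $\|\bQ\|_\op, \|\bK'\|_\op, \|\bV'\|_\op \leq \de B$, so each of $\bZ\bQ$, $\bZ\bK$, $\bZ\bV$, $\bZ\bQ'$, $\bZ\bK'$, $\bZ\bV'$ has operator norm at most $\sqrt{n+1}\,\de^{3/2} R B$. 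The analogous bounds for the perturbations give $\|\bZ(\bQ - \bQ')\|_\op \leq \sqrt{n+1}\,\de^{3/2} R \delta$, and similarly for $\bK - \bK'$ and $\bV - \bV'$.

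Applying these bounds termwise, each of the three telescoping summands satisfies
\begin{align*}
\bigl\|\,\cdot\,\bigr\|_\op
\leq \bigl(\sqrt{n+1}\,\de^{3/2}\bigr)^3 R^3 B^2 \delta
= (n+1)^{3/2} \de^{9/2} R^3 B^2 \delta,
\end{align*}
where the transpose on the middle factor does not affect its operator norm. Finally, I use $\|M\|_{\max} \leq \|M\|_\op$ together with the triangle inequality on the three-term decomposition to obtain the claimed bound with the factor $3$ in front. There is no real obstacle here beyond careful dimension bookkeeping; the only point to watch is to route everything through operator norms (rather than directly through $\|\cdot\|_{\max}$-submultiplicativity, which would give a worse power of $\de$) in order to hit the exponents $(n+1)^{3/2}\de^{9/2}$ exactly as stated.
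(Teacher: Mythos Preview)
Your proposal is correct and follows essentially the same approach as the paper: the same three-term telescoping decomposition, the same passage from $\|\cdot\|_{\max}$ to $\|\cdot\|_{\op}$ via $\|M\|_{\op}\le\sqrt{pq}\,\|M\|_{\max}$, submultiplicativity of the operator norm, and the final bound $\|\cdot\|_{\max}\le\|\cdot\|_{\op}$. The only cosmetic difference is that the paper factors as $\|\bZ\|_{\op}^3\|\bQ-\bQ'\|_{\op}\|\bK\|_{\op}\|\bV\|_{\op}$ rather than grouping as $\|\bZ(\bQ-\bQ')\|_{\op}\|\bZ\bK\|_{\op}\|\bZ\bV\|_{\op}$, but the arithmetic and the resulting exponents are identical.
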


\begin{proof}[Proof of Lemma~\ref{lem:attention_lipschitz}]

  From Definition~\ref{def:attention},
  \begin{align*}
    \bigl\|
    \Attn_{\bQ, \bK, \bV}(\bZ)
    &- \Attn_{\bQ', \bK', \bV'}(\bZ)
    \bigr\|_{\max}
    =
    \bigl\|
    \bZ \bQ (\bZ \bK)^\T \bZ \bV
    - \bZ \bQ' (\bZ \bK')^\T \bZ \bV'
    \bigr\|_{\max} \\
    &\leq
    \bigl\|
    \bZ (\bQ - \bQ') (\bZ \bK)^\T \bZ \bV
    \bigr\|_{\op}
    + \bigl\|
    \bZ \bQ' \{\bZ (\bK - \bK')\}^\T \bZ \bV
    \bigr\|_{\op} \\
    &\hspace{6cm}+
    \bigl\|
    \bZ \bQ' (\bZ \bK')^\T \bZ (\bV - \bV')
    \bigr\|_{\op} \\
    &\leq
    \|\bZ\|_{\op}^3
    \|\bQ - \bQ'\|_{\op}
    \|\bK\|_{\op}
    \|\bV\|_{\op}
    + \|\bZ\|_{\op}^3
    \|\bQ'\|_{\op}
    \|\bK - \bK'\|_{\op}
    \|\bV\|_{\op} \\
    &\hspace{5cm}+
    \|\bZ\|_{\op}^3
    \|\bQ'\|_{\op}
    \|\bK'\|_{\op}
    \|\bV - \bV'\|_{\op} \\
    &\leq
    \{(n+1) \de\}^{3/2} \de^3
    \|\bZ\|_{\max}^3
    \|\bQ - \bQ'\|_{\max}
    \|\bK\|_{\max}
    \|\bV\|_{\max} \\
    &\hspace{2.5cm}+
    \{(n+1) \de\}^{3/2} \de^3
    \|\bZ\|_{\max}^3
    \|\bQ'\|_{\max}
    \|\bK - \bK'\|_{\max}
    \|\bV\|_{\max} \\
    &\hspace{2.5cm}+
    \{(n+1) \de\}^{3/2} \de^3
    \|\bZ\|_{\max}^3
    \|\bQ'\|_{\max}
    \|\bK'\|_{\max}
    \|\bV - \bV'\|_{\max} \\
    &\leq
    3 B^2 R^3
    (n+1)^{3/2} \de^{9/2}
    \delta,
  \end{align*}
  as required.
\end{proof}

\begin{lemma}
  \label{lem:ffn_lipschitz}
  Let $n, \de, \dffn \in \N$
  and $B, \delta, R > 0$.
  Suppose that
  $\bW_1, \bW_1' \in \R^{\dffn \times \de}$,
  $\bW_2, \bW_2' \in \R^{\de \times \dffn}$,
  $b_1, b_1' \in \R^{\dffn}$ and
  $b_2, b_2' \in \R^{\de}$
  satisfy
  $\|\bW_1\|_{\max}
  \lor
  \|\bW_2\|_{\max}
  \lor
  \|b_1\|_{\infty}
  \lor
  \|b_2\|_{\infty}
  \lor
  \|\bW_1'\|_{\max}
  \lor
  \|\bW_2'\|_{\max}
  \lor
  \|b_1'\|_{\infty}
  \lor
  \|b_2'\|_{\infty}
  \leq B$
  and
  $\|\bW_1 - \bW_1'\|_{\max}
  \lor
  \|\bW_2 - \bW_2'\|_{\max}
  \lor
  \|b_1 - b_1'\|_{\infty}
  \lor
  \|b_2 - b_2'\|_{\infty}
  \leq \delta$.
  Let $\bZ \in \R^{(n+1) \times \de}$ be such that
  $\|\bZ\|_{\max} \leq R$.
  Then the feed-forward network function from
  Definition~\ref{def:ffn} satisfies
  \begin{align*}
    \bigl\|
    \FFN_{\bW_1, \bW_2, b_1, b_2}(\bZ)
    - \FFN_{\bW_1', \bW_2', b_1', b_2'}(\bZ)
    \bigr\|_{\max}
    \leq
    2 (B + 1) (R + 1) (n + 1) \de^{3/2} \dffn^{3/2} \delta.
  \end{align*}

\end{lemma}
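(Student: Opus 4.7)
The plan is to imitate the structure of Lemma~\ref{lem:attention_lipschitz}: cancel the residual connection, apply the triangle inequality at the ``add-multiply-subtract'' level to split the difference into three easily controlled pieces, and then convert dimension-aware operator-norm bounds into the entrywise $\|\cdot\|_{\max}$ bound in the statement. Writing $A \coloneqq \bW_1 \bZ^\T + b_1 1_{n+1}^\T$ and $A' \coloneqq \bW_1' \bZ^\T + b_1' 1_{n+1}^\T$, the residual $\bZ$ in Definition~\ref{def:ffn} drops out immediately, leaving
\begin{align*}
  \bigl\|\FFN_{\bW_1, \bW_2, b_1, b_2}(\bZ) - \FFN_{\bW_1', \bW_2', b_1', b_2'}(\bZ)\bigr\|_{\max}
  \leq \bigl\|\bW_2 \ReLU(A) - \bW_2' \ReLU(A')\bigr\|_{\max} + \|b_2 - b_2'\|_\infty.
\end{align*}
The last term is at most $\delta$, and for the first I would use the standard identity $\bW_2 \ReLU(A) - \bW_2' \ReLU(A') = (\bW_2 - \bW_2') \ReLU(A) + \bW_2' \bigl(\ReLU(A) - \ReLU(A')\bigr)$.

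Next I would bound the two pieces in operator norm and afterwards use $\|\cdot\|_{\max} \leq \|\cdot\|_{\op}$. For any matrix $M \in \R^{p \times q}$ we have $\|M\|_{\op} \leq \sqrt{p q}\,\|M\|_{\max}$, and for a matrix product $\|M_1 M_2\|_{\max}$ is bounded by (inner dimension) $\cdot \|M_1\|_{\max} \cdot \|M_2\|_{\max}$. Applying these bookkeeping rules, $\|A\|_{\max} \leq \de B R + B$ and the same for $A'$, while $\|A - A'\|_{\max} \leq \de R \delta + \delta$ by splitting contributions from $\bW_1 - \bW_1'$ and $b_1 - b_1'$. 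The entrywise 1-Lipschitz property of $\ReLU$ then gives $\|\ReLU(A) - \ReLU(A')\|_{\max} \leq \|A - A'\|_{\max}$, and $\|\ReLU(A)\|_{\max} \leq \|A\|_{\max}$.

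Combining the two terms in the splitting, each gets a factor of $\|\bW_2\|_{\op}$ or $\|\bW_2 - \bW_2'\|_{\op}$ (both $\leq \sqrt{\de \dffn}\, B$ or $\sqrt{\de \dffn}\, \delta$) and a factor involving either $\|\ReLU(A)\|_{\op}$ or $\|\ReLU(A) - \ReLU(A')\|_{\op}$, each of which I can bound by $\sqrt{\dffn(n+1)}\cdot(B+\delta)(\de R + 1)$ or $\sqrt{\dffn(n+1)}\cdot\delta(\de R + 1)$ after converting max norms to operator norms. Adding these up yields a bound of the form $C\, B (R+1)(n+1)^{1/2}\de^{3/2}\dffn\,\delta + \delta$, which is easily absorbed into the stated bound $2(B+1)(R+1)(n+1)\de^{3/2}\dffn^{3/2}\delta$, and the extra $(B+1)$, $(n+1)^{1/2}$, and $\dffn^{1/2}$ factors simply give us slack.

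The only real obstacle is the bookkeeping: making sure that each matrix product or conversion between $\|\cdot\|_{\max}$ and $\|\cdot\|_{\op}$ picks up the correct dimension-dependent constant, and that the contribution of $b_1 - b_1'$ (which enters through $A - A'$) and $b_2 - b_2'$ (which enters directly) are each handled with a factor at most $\delta$, not $\delta \cdot (\text{dimension})^{\text{too much}}$. Once the triangle-inequality decomposition is fixed and 1-Lipschitzness of ReLU is invoked, the rest is bounding three products of the form ``small factor $\times$ already bounded factor'' and collecting powers of $\de$, $\dffn$ and $n+1$.
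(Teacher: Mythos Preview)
Your proposal is correct and follows essentially the same route as the paper: cancel the residual, telescope $\bW_2\ReLU(A)-\bW_2'\ReLU(A')$ into two pieces, use the $1$-Lipschitz property of $\ReLU$ entrywise, and convert between $\|\cdot\|_{\max}$ and $\|\cdot\|_{\op}$ with the $\sqrt{pq}$ factors. The only cosmetic difference is the choice of pivot in the telescoping (you fix $\ReLU(A)$ where the paper fixes $\ReLU(A')$), and your intermediate bound is in fact slightly sharper, so the slack you identify is real.
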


\begin{proof}[Proof of Lemma~\ref{lem:ffn_lipschitz}]
  From Definition~\ref{def:ffn},
  since $\ReLU$ is $1$-Lipschitz from
  $\|\cdot\|_{\max}$ to $\|\cdot\|_{\max}$,
  \begin{align*}
    &\bigl\|
    \FFN_{\bW_1, \bW_2, b_1, b_2}(\bZ)
    - \FFN_{\bW_1', \bW_2', b_1', b_2'}(\bZ)
    \bigr\|_{\max} \\
    &\quad=
    \bigl\|
    \ReLU\bigl(
      \bZ \bW_1^\T + 1_{n+1} b_1^\T
    \bigr) \bW_2^\T
    + 1_{n+1} b_2^{\T} -
    \ReLU\bigl(
      \bZ \bW_1^{\prime \T} + 1_{n+1} b_1^{\prime \T}
    \bigr) \bW_2^{\prime \T}
    - 1_{n+1} b_2'^{\T}
    \bigr\|_{\max} \\
    &\quad\leq
    \bigl\|
    \ReLU\bigl(
      \bZ \bW_1^\T + 1_{n+1} b_1^\T
    \bigr) \bW_2^\T
    - \ReLU\bigl(
      \bZ \bW_1^{\prime \T} + 1_{n+1} b_1^{\prime \T}
    \bigr) \bW_2^\T
    \bigr\|_{\max} \\
    &\qquad+
    \bigl\|
    \ReLU\bigl(
      \bZ \bW_1^{\prime \T} + 1_{n+1} b_1^{\prime \T}
    \bigr) \bW_2^\T
    - \ReLU\bigl(
      \bZ \bW_1^{\prime \T} + 1_{n+1} b_1^{\prime \T}
    \bigr) \bW_2^{\prime \T}
    \bigr\|_{\max} \!
    + \|b_2 - b_2'\|_{\infty} \\
    &\quad\leq
    \|\bW_2^\T\|_{\op}
    \bigl\|
    \ReLU\bigl(
      \bZ \bW_1^\T + 1_{n+1} b_1^\T
    \bigr)
    - \ReLU\bigl(
      \bZ \bW_1^{\prime \T} + 1_{n+1} b_1^{\prime \T}
    \bigr)
    \bigr\|_{\op} \\
    &\qquad+
    \bigl\|
    \ReLU\bigl(
      \bZ \bW_1^{\prime \T} + 1_{n+1} b_1^{\prime \T}
    \bigr)
    \bigr\|_{\op}
    \| \bW_2^\T - \bW_2^{\prime \T} \|_{\op}
    + \|b_2 - b_2'\|_{\infty} \\
    &\quad\leq
    \sqrt{\dffn \de}
    B
    \sqrt{(n + 1) \dffn}
    \bigl\|
    \bZ \bW_1^\T + 1_{n+1} b_1^\T
    - \bZ \bW_1^{\prime \T} - 1_{n+1} b_1^{\prime \T}
    \bigr\|_{\max} \\
    &\qquad+
    \sqrt{(n + 1) \dffn}
    \bigl\|
    \bZ \bW_1^{\prime \T} + 1_{n+1} b_1^{\prime \T}
    \bigr\|_{\max}
    \sqrt{\dffn \de}
    \,\delta
    + \delta \\
    &\quad\leq
    B
    \dffn
    \sqrt{(n + 1) \de}
    \bigl(
      \bigl\|
      \bZ \bW_1^\T - \bZ \bW_1^{\prime \T}
      \bigr\|_{\max}
      + \| b_1 - b_1' \|_{\infty}
    \bigr)
    \iftoggle{journal}{}{\\ &\qquad}
    +
    \delta
    \dffn
    \sqrt{(n + 1) \de}
    \bigl(
      \bigl\|
      \bZ \bW_1^{\prime \T}
      \bigr\|_{\max}
      + \|b_1'\|_{\infty}
    \bigr)
    + \delta \\
    &\quad\leq
    B
    \dffn
    \sqrt{(n + 1) \de}
    \bigl(
      R
      \delta
      \de
      \sqrt{(n + 1) \dffn}
      + \delta
    \bigr) +
    \delta
    \dffn
    \sqrt{(n + 1) \de}
    \bigl(
      R
      B
      \de
      \sqrt{(n + 1) \dffn}
      + B
    \bigr)
    + \delta \\
    &\quad=
    2 B R \delta (n + 1) \de^{3/2} \dffn^{3/2}
    + 2 B \delta \sqrt{(n + 1) \de} \dffn
    + \delta \\
    &\quad\leq
    2 (B + 1) (R + 1) (n + 1) \de^{3/2} \dffn^{3/2} \delta,
  \end{align*}
  as required.
\end{proof}

\begin{lemma}
  \label{lem:transformer_lipschitz}
  Let $n, d, \de, \dffn, L \in \N$
  with $\de \geq d + 2$
  and take $B, \delta, R > 0$.
  For $\ell \in [L]$, let
  $\btheta^{(\ell)}, \btheta^{\prime (\ell)}
  \in
  \R^{\de \times \de}
  \times
  \R^{\de \times \de}
  \times
  \R^{\de \times \de}
  \times
  \R^{\de \times \dffn}
  \times
  \R^{\dffn \times \de }
  \times
  \R^{\dffn}
  \times
  \R^{\de}$.
  Let $\btheta \coloneqq (\btheta^{(\ell)})_{\ell \in [L]}$
  and $\btheta' \coloneqq (\btheta^{\prime (\ell)})_{\ell \in [L]}$.
  Suppose that every entry of each parameter in
  $\btheta$ and $\btheta'$ is bounded by $B$,
  and that every entry of each parameter in
  $\btheta - \btheta'$ is bounded by $\delta$.
  Let $\bZ \in \R^{(n+1) \times \de}$ be such that
  $\|\bZ\|_{\max} \leq R$.
  Then the transformer function from
  Definition~\ref{def:transformer} satisfies
  \begin{align*}
    \bigl\|
    \TF_{\btheta} (\bZ)
    - \TF_{\btheta'} (\bZ)
    \bigr\|_{\max}
    &\leq
    6^L
    (B + 1)^{3 L}
    (R + 1)^{4 L}
    (n + 1)^{5L/2}
    \de^{6L}
    \dffn^{3L/2}
    \delta.
  \end{align*}
  Further, if $(x_i, y_i) \in [0, 1]^d \times [-M-1, M+1]$
  for each $i \in [n]$ and $x_{n+1} \in [0, 1]^d$, then
  \begin{align*}
    \bigl|
    \Read_{M,d} \circ \TF_{\btheta} \circ \Embed_{\de}
    \bigl((x_i, y_i)_{i \in [n]}, x_{n+1}\bigr) &-
    \Read_{M,d} \circ \TF_{\btheta'} \circ \Embed_{\de}
    \bigl((x_i, y_i)_{i \in [n]}, x_{n+1}\bigr)
    \bigr| \\
    &\leq
    6^L
    (B + 1)^{3 L}
    (M + 2)^{4 L}
    (n + 1)^{5L/2}
    \de^{6L}
    \dffn^{3L/2}
    \delta.
  \end{align*}
\end{lemma}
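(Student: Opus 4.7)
The plan is to prove the main bound by induction on $L$, treating each block in turn. Let $\bZ^{(0)} \coloneqq \bZ$ and, for $\ell \in [L]$, set $\bZ^{(\ell)} \coloneqq \Block_{\btheta^{(\ell)}}(\bZ^{(\ell-1)})$ and analogously $\bZ^{\prime(\ell)}$ using $\btheta^{\prime(\ell)}$. Writing
\[
\bZ^{(\ell)} - \bZ^{\prime(\ell)}
=
\bigl[\Block_{\btheta^{(\ell)}}(\bZ^{(\ell-1)}) - \Block_{\btheta^{(\ell)}}(\bZ^{\prime(\ell-1)})\bigr]
+
\bigl[\Block_{\btheta^{(\ell)}}(\bZ^{\prime(\ell-1)}) - \Block_{\btheta^{\prime(\ell)}}(\bZ^{\prime(\ell-1)})\bigr],
\]
I reduce the problem to controlling (i) the Lipschitz dependence of $\Block = \FFN \circ \Attn$ in its \emph{input} and (ii) the Lipschitz dependence in its \emph{parameters} (which is exactly what Lemmas~\ref{lem:attention_lipschitz} and~\ref{lem:ffn_lipschitz} supply). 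The outstanding piece I would need before iterating is an a priori bound $S_\ell \geq \|\bZ^{(\ell)}\|_{\max} \vee \|\bZ^{\prime(\ell)}\|_{\max}$, since both Lipschitz constants depend on the size of the input matrix.

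First, I would derive an input-Lipschitz estimate for attention. Using the identity $\bZ\bQ(\bZ\bK)^\T\bZ\bV - \bZ'\bQ(\bZ'\bK)^\T\bZ'\bV = (\bZ-\bZ')\bQ(\bZ\bK)^\T\bZ\bV + \bZ'\bQ\{(\bZ-\bZ')\bK\}^\T\bZ\bV + \bZ'\bQ(\bZ'\bK)^\T(\bZ-\bZ')\bV$ together with $\|A\|_{\op} \leq \sqrt{\text{rows}\cdot\text{cols}}\,\|A\|_{\max}$, exactly as in the proof of Lemma~\ref{lem:attention_lipschitz}, gives $\|\Attn_{\bQ,\bK,\bV}(\bZ) - \Attn_{\bQ,\bK,\bV}(\bZ')\|_{\max} \leq (1 + 3B^2 S^2 (n+1)^{3/2}\de^{9/2})\,\|\bZ-\bZ'\|_{\max}$ for any $S$ bounding $\|\bZ\|_{\max} \vee \|\bZ'\|_{\max}$. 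A parallel calculation for $\FFN$, using the fact that $\ReLU$ is $1$-Lipschitz entrywise, yields an input-Lipschitz constant of the same polynomial shape. Taking $\bZ' = 0$ in each estimate, I also obtain the crude recursion $S_\ell \leq S_{\ell-1}\bigl\{1 + C(B+1)^2 (S_{\ell-1}+1)^2 (n+1)^{3/2} \de^{9/2}\dffn^{3/2}\bigr\} + C(B+1)(n+1)^{1/2}\de^{1/2}\dffn^{1/2}$, which I loosen to $S_\ell \leq [A(R+1)]^{2^{\ell}}$ for some explicit $A$ polynomial in $B, n+1, \de, \dffn$. Plugging $S_{\ell-1}$ back into the parameter-Lipschitz bounds from Lemmas~\ref{lem:attention_lipschitz} and~\ref{lem:ffn_lipschitz}, the induction step becomes $\|\bZ^{(\ell)} - \bZ^{\prime(\ell)}\|_{\max} \leq \alpha_\ell\,\|\bZ^{(\ell-1)} - \bZ^{\prime(\ell-1)}\|_{\max} + \beta_\ell\,\delta$ with $\alpha_\ell, \beta_\ell$ explicit polynomials in $B, S_{\ell-1}, n+1, \de, \dffn$. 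Unrolling and then uniformly loosening every polynomial factor to the common envelope $6(B+1)^3(R+1)^4(n+1)^{5/2}\de^6\dffn^{3/2}$ per block produces the claimed $6^L(B+1)^{3L}(R+1)^{4L}(n+1)^{5L/2}\de^{6L}\dffn^{3L/2}\delta$ bound.

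The second statement is a corollary: each entry of $\Embed_\de\bigl((x_i,y_i)_{i\in[n]},x_{n+1}\bigr)$ lies in $[0,1] \cup [-M-1,M+1] \cup \{0,1\}$, all bounded by $M+1$, and $\Read_{M,d}$ is $1$-Lipschitz (being a coordinate projection composed with truncation). Applying the first part with $R \coloneqq M+1$ immediately yields the inequality with $(R+1)$ replaced by $(M+2)$.

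The main obstacle is bookkeeping: making sure the doubly-exponential-looking recursion $S_\ell \leq [A(S_{\ell-1}+1)]^2$ can in fact be absorbed into a clean geometric envelope, and then choosing the same envelope for both $\alpha_\ell$ and $\beta_\ell$ so that the unrolled product is just a single geometric series whose sum is bounded by its largest term. Everything else is careful but routine operator-norm/max-norm juggling.
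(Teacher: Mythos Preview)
Your decomposition and telescoping strategy match the paper's (which simply asserts a per-block bound ``by Lipschitz composition'' and then ``by composition over the $L$ layers'' raises it to the $L$th power). You also correctly isolate the real difficulty: controlling $S_\ell \coloneqq \|\bZ^{(\ell)}\|_{\max} \vee \|\bZ'^{(\ell)}\|_{\max}$. But the step where you claim the recursion can be ``absorbed into a clean geometric envelope'' is precisely the gap, and it cannot be closed as stated. Linear attention is \emph{cubic} in its input, so the recursion you yourself display is $S_\ell \lesssim A\, S_{\ell-1}^3$ (not quadratic, despite your subsequent $[A(R+1)]^{2^\ell}$ claim), hence $S_\ell$ is of order $(A R)^{3^\ell}$. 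Substituting this into the per-block parameter-Lipschitz constants from Lemmas~\ref{lem:attention_lipschitz}--\ref{lem:ffn_lipschitz} (which scale like $S_{\ell-1}^3$) and the input-Lipschitz constant (which scales like $S_{\ell-1}^2$), and then unrolling, gives a bound that is doubly exponential in $L$, not $(R+1)^{4L}$.

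Indeed, the inequality as stated is false in this generality. Take $n=0$, $\de=3$, $\dffn=1$, all FFN parameters zero (so each $\FFN$ is the identity via the skip connection), $\bQ^{(\ell)}=\bK^{(\ell)}=\bV^{(\ell)}=B\,e_1e_1^\T$, and $\bZ=(R,0,0)\in\R^{1\times 3}$. Each block then sends the first entry $z \mapsto z + B^3 z^3$, so the first entry of $\bZ^{(1)}$ equals $R+B^3R^3$. Perturbing only the $(1,1)$ entry of $\bV^{(2)}$ by $\delta$ yields $\|\bZ^{(2)}-\bZ'^{(2)}\|_{\max}=B^2(R+B^3R^3)^3\delta \gtrsim B^{11}R^9\delta$, whereas the claimed bound for $L=2$ is $36\cdot 3^{12}(B+1)^6(R+1)^8\delta$; for large $B,R$ the former dominates. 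The paper's own proof glosses over exactly this point, so neither argument establishes the inequality in the stated generality; what your plan would actually deliver is a bound with $(R+1)^{4L}$ replaced by something of order $(R+1)^{c\cdot 3^L}$.
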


\begin{proof}[Proof of Lemma~\ref{lem:transformer_lipschitz}]
  By Lemmas~\ref{lem:attention_lipschitz}
  and~\ref{lem:ffn_lipschitz},
  for $\ell \in [L]$,
  by Lipschitz composition,
  \begin{align*}
    \bigl\|
    \Block_{\btheta^{(\ell)}} (\bZ)
    - \Block_{\btheta^{\prime (\ell)}} (\bZ)
    \bigr\|_{\max}
    &\leq
    6
    (B + 1)^3
    (R + 1)^4
    (n + 1)^{5/2}
    \de^{6}
    \dffn^{3/2}
    \delta.
  \end{align*}
  Therefore, by composition over the $L$ layers,
  \begin{align*}
    \bigl\|
    \TF_{\btheta} (\bZ)
    - \TF_{\btheta'} (\bZ)
    \bigr\|_{\max}
    &\leq
    6^L
    (B + 1)^{3 L}
    (R + 1)^{4 L}
    (n + 1)^{5L/2}
    \de^{6L}
    \dffn^{3L/2}
    \delta.
  \end{align*}
  The second result follows from the first, on observing that
  the output of
  $\Embed_{\de}$ is bounded by $M + 1$, and
  $\Read_{M,d}$ is $1$-Lipschitz from
  $\|\cdot\|_{\max}$ to $|\cdot|$.
\end{proof}

\begin{lemma}
  \label{lem:covering}
  Let $n, d, \de, \dffn, L, \Gamma \in \N$
  with $\de \geq d + 2$ and take $B, M > 0$
  and $\delta \in [0, M]$.
  Define the $\delta$-covering number in
  $\|\cdot\|_\infty$-norm of the class $\cF(\de,\dffn,L,B,M)$
  of functions from
  $\bigl([0,1]^d \times \R\bigr)^n \times [0, 1]^d$ to $\R$ by
  \begin{align*}
    &N\bigl(
      \cF(\de,\dffn,L,B,M),
      \delta,
      \|\cdot\|_\infty
    \bigr) \\
    &\qquad\coloneqq
    \inf \biggl\{
      |\cF'| :
      \cF' \subseteq \cF(\de,\dffn,L,B,M),
      \ \sup_{f \in \cF(\de,\dffn,L,B,M)}
      \inf_{f' \in \cF'}
      \|f - f'\|_\infty
      \leq \delta
    \biggr\}.
  \end{align*}
  Then the log-covering number (entropy) satisfies
  \begin{align*}
    \log N\bigl(
      \cF(\de,\dffn,L,B,M)&,
      \delta,
      \|\cdot\|_\infty
    \bigr) \leq
    24 L \de (\de + \dffn)
    \log \bigl(
      (B + 1)^{L} (M + 2)^{2L}
      (n + 1)^{L} \de^{L} \dffn^{L}
      / \delta
    \bigr).
  \end{align*}
\end{lemma}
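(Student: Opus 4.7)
My plan is to reduce the covering number of the function class $\cF(\de,\dffn,L,B,M)$ to a covering number of the parameter space, exploiting the Lipschitz control established in Lemma~\ref{lem:transformer_lipschitz}.

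First I would count the total number of free parameters in the transformer. In each of the $L$ blocks, the parameters $\bigl(\bQ^{(\ell)},\bK^{(\ell)},\bV^{(\ell)},\bW_1^{(\ell)},\bW_2^{(\ell)},b_1^{(\ell)},b_2^{(\ell)}\bigr)$ contribute $3\de^2+2\de\dffn+\de+\dffn$ real entries, which is at most $P_0 \coloneqq 4\de(\de+\dffn)$ for $\de,\dffn\geq 1$. Hence the total number of parameters is at most $P \coloneqq L P_0 \leq 4L\de(\de+\dffn)$, and each entry lies in $[-B,B]$. An $\eta$-cover of $[-B,B]^{P}$ in the entrywise maximum norm can be constructed as a regular grid and has cardinality at most $\lceil 2B/\eta+1\rceil^{P} \leq (3B/\eta)^{P}$ provided $\eta \leq B$.

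Next I would transfer this grid cover of the parameter space to a cover of $\cF(\de,\dffn,L,B,M)$ in $\|\cdot\|_\infty$. By the second assertion of Lemma~\ref{lem:transformer_lipschitz}, if two parameter vectors $\btheta,\btheta'$ satisfy $\max_\ell \|\btheta^{(\ell)}-\btheta^{\prime(\ell)}\|_{\max}\leq \eta$, then
\begin{equation*}
  \bigl\|
    \Read_{M,d}\!\circ\!\TF_{\btheta}\!\circ\!\Embed_{\de}
    -\Read_{M,d}\!\circ\!\TF_{\btheta'}\!\circ\!\Embed_{\de}
  \bigr\|_\infty
  \leq \Lambda\,\eta,
\end{equation*}
where $\Lambda \coloneqq 6^{L}(B+1)^{3L}(M+2)^{4L}(n+1)^{5L/2}\de^{6L}\dffn^{3L/2}$ and the supremum is taken over inputs with $|y_i|\leq M+1$; for inputs with larger $|y_i|$ the bound $|f-f'|\leq 2M\leq 2\delta/\delta\cdot M$ follows from the truncation by $\Read_{M,d}$, so the maximum can be restricted to the former regime (and, anyway, is only ever evaluated on the data in applications of this lemma). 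Choosing $\eta \coloneqq \delta/\Lambda$ then yields a $\delta$-cover of $\cF(\de,\dffn,L,B,M)$ of cardinality at most $(3B\Lambda/\delta)^{P}$.

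Taking logarithms gives
\begin{equation*}
  \log N\bigl(\cF(\de,\dffn,L,B,M),\delta,\|\cdot\|_\infty\bigr)
  \leq P\,\log(3B\Lambda/\delta),
\end{equation*}
and it remains to bookkeep the constants so as to fit the stated form. The inner logarithm expands to a sum of terms of the form $cL\log(B+1)$, $cL\log(M+2)$, $cL\log(n+1)$, $cL\log\de$, $cL\log\dffn$ and $\log(1/\delta)$, each of which is dominated by a fixed multiple of $\log\bigl((B+1)^{L}(M+2)^{2L}(n+1)^{L}\de^{L}\dffn^{L}/\delta\bigr)$; combined with $P\leq 4L\de(\de+\dffn)$, the overall prefactor can be absorbed into the constant $24$. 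The only non-mechanical step in the argument is the Lipschitz estimate already established in Lemma~\ref{lem:transformer_lipschitz}; the remaining work is purely a matter of bounding geometric grids and tracking constants, so I anticipate no real obstacle beyond careful bookkeeping.
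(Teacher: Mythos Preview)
Your approach is essentially identical to the paper's: count parameters to get dimension at most $4L\de(\de+\dffn)$, take a grid cover of the parameter box at scale $\delta/\Lambda$, push it forward via the Lipschitz bound of Lemma~\ref{lem:transformer_lipschitz}, and absorb constants. The paper writes the cover cardinality as $\{2(B+1)\Lambda/\delta\}^{4L\de(\de+\dffn)}$ rather than $(3B\Lambda/\delta)^P$, which sidesteps the need for your side condition $\eta\le B$, but otherwise the arguments match line for line.
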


\begin{proof}[Proof of Lemma~\ref{lem:covering}]
  Define $\Theta \! \coloneqq \!
  \bigl(
    [-B, B]^{\de \times \de}
    \times
    [-B, B]^{\de \times \de}
    \times
    [-B, B]^{\de \times \de}
    \times
    [-B, B]^{\de \times \dffn}
    \times
    [-B, B]^{\dffn \times \de }
    \times
    [-B, B]^{\dffn}
    \times
    [-B, B]^{\de}
  \bigr)^L$
  and take
  $C \coloneqq 6^L (B + 1)^{3 L} (M + 2)^{4 L}
  (n + 1)^{5L/2} \de^{6L} \dffn^{3L/2}$.
  The dimension of $\Theta$ is
  $L(3 \de^2 + 2 \de\dffn + \de + \dffn)
  \leq 4 L \de (\de + \dffn)$.
  Let $\Theta'$ be a $\delta / C$-cover
  of $(\Theta, \|\cdot\|_\infty)$
  of cardinality at most
  $\{2 (B+1) C / \delta\}^{4 L \de (\de + \dffn)}$.
  Take $\cF' \coloneqq
  \bigl\{\Read_{M,d} \circ \TF_{\btheta} \circ \Embed_{\de}:
  \btheta \in \Theta'\bigr\}$ so
  by Lemma~\ref{lem:transformer_lipschitz},
  $\cF'$ is a $\delta$-cover of
  $\cF(\de,\dffn,L,B,M)$ in $\|\cdot\|_\infty$-norm
  with
  \begin{align*}
    \log |\cF'|
    &\leq
    \log
    \bigl(
      \{2 (B+1) C / \delta\}^{4 L \de (\de + \dffn)}
    \bigr)
    =
    4 L \de (\de + \dffn)
    \log \{2 (B+1) C / \delta\} \\
    &=
    4 L \de (\de + \dffn)
    \log \bigl(
      2 \cdot
      6^L (B + 1)^{3 L + 1} (M + 2)^{4 L}
      (n + 1)^{5L/2} \de^{6L} \dffn^{3L/2}
      / \delta
    \bigr) \\
    &\leq
    4 L \de (\de + \dffn)
    \log \bigl(
      (B + 1)^{4 L} (M + 2)^{5 L}
      (n + 1)^{11L/2} \de^{6L} \dffn^{3L/2}
      / \delta
    \bigr) \\
    &\leq
    24 L \de (\de + \dffn)
    \log \bigl(
      (B + 1)^{L} (M + 2)^{2L}
      (n + 1)^{L} \de^{L} \dffn^{L}
      / \delta
    \bigr),
  \end{align*}
  as required.
\end{proof}

\section{Risk Decomposition}
\begin{theorem}
  \label{thm:statistical}
  Let $n, d, \de, \dffn, L, \Gamma \in \N$
  with $\de \geq d + 2$ and take $B > 0$.
  Take $\hat f_\Gamma$ as in Definition~\ref{def:estimator}.
  There exists a universal constant $C > 0$ such that
  \begin{align*}
    \E\bigl\{R(\hat f_\Gamma)\bigr\}
    &\leq
    \inf_{f \in \cF(\de, \dffn, L, B,M)}
    2 R(f) - \sigma^2 +
    C (M + 1)^5
    L \de (\de + \dffn)
    \frac{
      L \log \bigl\{
        (B + 1)
        n \de \dffn
      \bigr\}
      + \log \Gamma
    }{\Gamma}.
  \end{align*}
\end{theorem}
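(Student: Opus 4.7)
The plan is to combine a standard oracle inequality for squared-loss empirical risk minimisation over a bounded function class with the covering-number bound from Lemma~\ref{lem:covering}. Under the data-generating mechanism of Section~\ref{sec:data_generating_mechanism}, $|Y_{n+1}| \leq M+1$ (since $|m| \leq M$ and $|\varepsilon| \leq 1$), and every $f \in \cF(\de,\dffn,L,B,M)$ takes values in $[-M, M]$ by construction of $\Read_{M,d}$. I would first introduce the Bayes predictor $\bar f(\cD_n, x) \coloneqq \E[Y_{n+1} \mid \cD_n, X_{n+1} = x]$. Since $\varepsilon_{n+1}$ has conditional mean zero and conditional second moment $\sigma^2$ given $(m, \cD_n, X_{n+1})$, a short calculation gives the Pythagorean identity $R(f) - R(\bar f) = \E\bigl[\{f(\cD_n, X_{n+1}) - \bar f(\cD_n, X_{n+1})\}^2\bigr]$ for every measurable $f$, together with $R(\bar f) \geq \sigma^2$.

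Next, since the $\Gamma$ pretraining sequences are i.i.d.\ copies of $(\cD_n, X_{n+1}, Y_{n+1})$, with both the response and every $f \in \cF$ bounded by a constant of order $M+1$, I would invoke a standard oracle inequality for squared-loss ERM over bounded classes~\citep{gyorfi2002distribution} to obtain
\begin{equation*}
\E\bigl[R(\hat f_\Gamma) - R(\bar f)\bigr] \leq 2 \inf_{f \in \cF} \bigl\{R(f) - R(\bar f)\bigr\} + c(M+1)^4 \cdot \frac{\log N(\cF, 1/\Gamma, \|\cdot\|_\infty) + 1}{\Gamma}
\end{equation*}
for a universal constant $c > 0$. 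Rearranging and using $R(\bar f) \geq \sigma^2$ yields
\begin{equation*}
\E\{R(\hat f_\Gamma)\} \leq 2 \inf_{f \in \cF} R(f) - \sigma^2 + c(M+1)^4 \cdot \frac{\log N(\cF, 1/\Gamma, \|\cdot\|_\infty) + 1}{\Gamma}.
\end{equation*}

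Finally, I would substitute the covering-number bound from Lemma~\ref{lem:covering} at scale $\delta = 1/\Gamma$,
\begin{equation*}
\log N(\cF, 1/\Gamma, \|\cdot\|_\infty) \leq 24 L \de (\de + \dffn) \bigl[L\log\{(B+1)(M+2)^2(n+1)\de\dffn\} + \log \Gamma\bigr],
\end{equation*}
and simplify. The elementary estimate $\log(M+2) \leq M+1$ lets me pull the $(M+2)^{2L}$ contribution inside the logarithm out as an extra $(M+1)$ factor, while $\log(n+1) = O(\log n)$ and the additive $+1$ from the oracle inequality can be absorbed into the universal constant. This produces the advertised $(M+1)^5$ prefactor and the logarithmic form $L\log\{(B+1) n \de \dffn\} + \log\Gamma$ in the numerator.

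The main obstacle will be verifying that the Gy{\"o}rfi-style oracle inequality applies cleanly in our in-context setting, where the ``covariate'' of each pretraining sample is the entire prompt $(\cD_n, X_{n+1})$ rather than a single vector. This is essentially free: the $\Gamma$ prompts are i.i.d., and the chaining/peeling arguments underlying such oracle inequalities rely only on this i.i.d.\ structure together with uniform boundedness of predictors and response, both of which hold here. Matching the precise $(M+1)^5$ constant and the exact logarithmic form is then pure bookkeeping, with no further conceptual difficulty.
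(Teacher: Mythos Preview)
Your proposal is correct and follows essentially the same route as the paper: both introduce the Bayes predictor $f_0=\bar f$, use $R(f_0)\geq\sigma^2$, apply the squared-loss ERM oracle machinery from \citet{gyorfi2002distribution} with the uniform boundedness of $Y_{n+1}$ and of every $f\in\cF$, and then substitute the covering-number bound of Lemma~\ref{lem:covering}. The only cosmetic difference is that the paper writes out the decomposition $\E\{R(\hat f_\Gamma)\}=R(f_0)+\E\{R(\hat f_\Gamma)-R(f_0)-2\hat R_\Gamma(\hat f_\Gamma)+2\hat R_\Gamma(f_0)\}+2\E\{\hat R_\Gamma(\hat f_\Gamma)-\hat R_\Gamma(f_0)\}$ explicitly, invokes Theorem~11.4 of \citet{gyorfi2002distribution} as a tail bound, and then integrates, whereas you package this as a single expectation-level oracle inequality; the content is identical.
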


\begin{proof}[Proof of Theorem~\ref{thm:statistical}]
  Throughout the proof, we write
  $C_1, C_2, \ldots$ to denote
  universal positive constants.
  Let $f^\star \in \cF(\de, \dffn, L, B,M)$
  satisfy
  $R(f^\star) \leq
  \inf_{f \in \cF(\de, \dffn, L, B,M)} R(f) + 1/\Gamma$.
  By the Doob--Dynkin lemma
  \citep[e.g.,][Lemma 1.14]{kallenberg2021foundations}, let
  $f_0: \bigl([0,1]^d \times \R\bigr)^n \times [0,1]^d
  \rightarrow \R$ be a Borel-measurable function satisfying
  $f_0\bigl(\cD_n, X_{n+1}\bigr) = \E(Y_{n+1} \mid \cD_n, X_{n+1})$
  almost surely, so that $R(f_0) \geq \sigma^2$.
  Since $\hat R_\Gamma(\hat f_\Gamma) \leq \hat R_\Gamma(f^\star)$
  and as $\E\bigl\{\hat R_\Gamma(f)\bigr\} = R(f)$ for all
  bounded, measurable functions $f$, we have
  \begin{align}
    \nonumber
    \E\bigl\{
      R(\hat f_\Gamma)
    \bigr\}
    &=
    R(f_0)
    + \E\bigl\{
      R(\hat f_\Gamma)
      - R(f_0)
      - 2 \hat R_\Gamma(\hat f_\Gamma)
      + 2 \hat R_\Gamma(f_0)
    \bigr\}
    + 2 \, \E\bigl\{
      \hat R_\Gamma(\hat f_\Gamma)
      - \hat R_\Gamma(f_0)
    \bigr\} \\
    \nonumber
    &\quad\leq
    R(f_0)
    + \E\bigl\{
      R(\hat f_\Gamma)
      - R(f_0)
      - 2 \hat R_\Gamma(\hat f_\Gamma)
      + 2 \hat R_\Gamma(f_0)
    \bigr\}
    + 2 \bigl\{
      R(f^\star)
      - R(f_0)
    \bigr\} \\
    \label{eq:statistical_decomposition}
    &\quad\leq
    \E\bigl\{
      R(\hat f_\Gamma)
      - R(f_0)
      - 2 \hat R_\Gamma(\hat f_\Gamma)
      + 2 \hat R_\Gamma(f_0)
    \bigr\}
    + \inf_{f \in \cF(\de, \dffn, L, B,M)}
    2 R(f) + \frac{2}{\Gamma} - \sigma^2.
  \end{align}
  To bound the expectation, we apply the bound given by
  \citet[Theorem~11.4]{gyorfi2002distribution} with $\epsilon=1/2$
  and $\alpha=\beta=t/2$ therein,
  noting that $|Y_{n+1}^{(\gamma)}| \leq M + 1$
  for each $\gamma \in [\Gamma]$ and
  $\sup_{f \in \cF(\de, \dffn, L, B,M)} \|f\|_\infty \leq M$.
  As $L_1$-covering numbers are bounded by
  $L_\infty$-covering numbers,
  there exists a universal constant $C_1 > 0$
  such that for all $t > 0$,
  \begin{align*}
    &\P\Bigl(
      R(\hat f_\Gamma) - R(f_0)
      - 2 \hat R_\Gamma(\hat f_\Gamma) + 2 \hat R_\Gamma(f_0)
      \geq
      t
    \Bigr) \\
    &\qquad\leq
    C_1 N\biggl(
      \cF(\de,\dffn,L,B,M),
      \frac{t}{C_1(M+1)},
      \|\cdot\|_\infty
    \biggr)
    \exp\biggl(- \frac{\Gamma t}{C_1(M+1)^4}\biggr).
  \end{align*}
  Hence, by Lemma~\ref{lem:covering},
  taking $t \coloneqq C_1 s (M+1)^4/\Gamma$
  for $s \in [1,\Gamma]$,
  \begin{align}
    \nonumber
    &\P\biggl(
      R(\hat f_\Gamma) - R(f_0)
      - 2 \hat R_\Gamma(\hat f_\Gamma) + 2 \hat R_\Gamma(f_0)
      \geq
      \frac{C_1 s (M + 1)^4}{\Gamma}
    \biggr) \\
    \nonumber
    &\qquad\leq
    C_1 N\bigl(
      \cF(\de,\dffn,L,B,M),
      s / \Gamma,
      \|\cdot\|_\infty
    \bigr)
    e^{-s} \\
    \label{eq:gyorfi_bound}
    &\qquad\leq
    C_1
    \exp\Bigl(
      24 L \de (\de + \dffn)
      \log \bigl\{
        (B + 1)^{L} (M + 2)^{2L}
        (n + 1)^{L} \de^{L} \dffn^{L}
        \Gamma
      \bigr\}
    \Bigr)
    e^{-s}.
  \end{align}
  In fact, taking $C_1 \geq 12$ without loss of generality,
  \eqref{eq:gyorfi_bound} holds for all
  $s \in [1, \infty)$ because $\|\hat f_\Gamma\|_\infty \leq M$
  and $\|f_0\|_\infty \leq M + 1$, so
  $R(\hat f_\Gamma) \leq 4(M+1)^2$
  and $\hat R_\Gamma(f_0) \leq 4(M+1)^2$.
  Therefore, for all $u > 0$,
  with probability at least
  $1 - C_2 e^{-u}$,
  as $\de \geq 3 > e$,
  \begin{align*}
    & R(\hat f_\Gamma) - R(f_0)
    - 2 \hat R_\Gamma(\hat f_\Gamma) + 2 \hat R_\Gamma(f_0) \\
    &\qquad\leq
    \frac{C_2 (M + 1)^4}{\Gamma}
    \Bigl\{
      u +
      L \de (\de + \dffn)
      \log \bigl\{
        (B + 1)^{L} (M + 2)^{2L}
        (n + 1)^{L} \de^{L} \dffn^{L}
        \Gamma
      \bigr\}
    \Bigr\}.
  \end{align*}
  Integrating the tail probability, we obtain
  \begin{align}
    \nonumber
    &\E\bigl\{
      R(\hat f_\Gamma)
      - R(f_0)
      - 2 \hat R_\Gamma(\hat f_\Gamma)
      + 2 \hat R_\Gamma(f_0)
    \bigr\} \\
    \nonumber
    &\qquad\leq
    \frac{C_3 (M + 1)^4
      L \de (\de + \dffn)
      \log \bigl\{
        (B + 1)^{L} (M + 2)^{2L}
        (n + 1)^{L} \de^{L} \dffn^{L}
        \Gamma
      \bigr\}
    }{\Gamma} \\
    \label{eq:statistical_expectation}
    &\qquad\leq
    C_4 (M + 1)^5
    L \de (\de + \dffn)
    \frac{
      L \log \bigl\{
        (B + 1)
        n \de \dffn
      \bigr\}
      + \log \Gamma
    }{\Gamma}.
  \end{align}
  The result follows from~\eqref{eq:statistical_decomposition}
  and \eqref{eq:statistical_expectation}.
\end{proof}

\section{Proofs of Main Results}

\begin{proof}[Proof of Theorem~\ref{thm:approximation}]
  Throughout the proof, $C'_1, C'_2 > 0$ are quantities
  depending only on
  $d$, $\alpha$, $M$, $c_X$ and $C_X$.
  By Proposition~\ref{prop:TF-approx-lp},
  since $n^{(p + 1) / (2\alpha + d)} \leq n^2$,
  there exists a transformer
  $\TF \in \cT(\de,\dffn,L,B)$
  such that if
  $f_\TF \coloneqq
  \Read_{M,d} \circ \TF \circ \Embed_{\de}
  \in \cF(\de,\dffn,L,B,M)$,
  then
  \begin{align*}
    \bigl|R(
      \iftoggle{journal}{}{&}
    f_\TF)
    \iftoggle{journal}{&}{}
    - R(f_\LocPol)\bigr|
    =
    \Bigl|
    \E \Bigl(
      \bigl\{
        Y_{n+1} - f_\TF(\cD_n,X_{n+1}\bigr)
      \bigr\}^2
    \Bigr) - \E \Bigl(
      \bigl\{
        Y_{n+1} - f_\LocPol(\cD_n,X_{n+1}\bigr)
      \bigr\}^2
    \Bigr)
    \Bigr| \\
    &=
    \Bigl|
    \E \Bigl(
      \bigl\{
        f_\LocPol(\cD_n,X_{n+1}\bigr) - f_\TF(\cD_n,X_{n+1}\bigr)
      \bigr\} \bigl\{2Y_{n+1} - f_\TF(\cD_n,X_{n+1}\bigr) -
      f_\LocPol(\cD_n,X_{n+1}\bigr) \bigr\}
    \Bigr)
    \Bigr| \\
    &\leq (4M+2) \,
    \E \Bigl(
      \bigl|
      f_\TF(\cD_n,X_{n+1}\bigr) - f_\LocPol(\cD_n,X_{n+1}\bigr)
      \bigr|
    \Bigr) \\
    &\leq (4M+2)\Bigl\{ C_1'/n + 2M n^{C_1'/(2\alpha+d)}
    \exp\bigl(-n^{2\alpha/(2\alpha+d)}/C_1'\bigr) \Bigr\}
    \leq \frac{C_2'}{n}.
  \end{align*}
\end{proof}

\begin{proof}[Proof of Theorem~\ref{thm:main_result}]
  Throughout the proof, $C'_1, C'_2, \ldots > 0$ are quantities
  depending only on
  $d$, $\alpha$, $M$, $c_X$ and $C_X$.
  Let $f_{\LocPol}(\cD_n, X_{n+1})
  \coloneqq \tilde m_n(X_{n+1})$
  be the truncated local polynomial estimator as defined in
  Appendix~\ref{sec:locpol}
  with degree $p$,
  bandwidth $h = n^{-1/(2\alpha+d)}$
  and kernel $K(x) \coloneqq (1 - \|x\|_1)_+^2$.
  By Theorem~\ref{thm:locpol}, we have
  \begin{align}
    R(f_\LocPol)
    &=
    \E \bigl(
      \bigl\{
        Y_{n+1} - f_\LocPol(\cD_n,X_{n+1})
      \bigr\}^2
    \bigr) \nonumber\\
    &=
    \E (\varepsilon_{n+1}^2)
    +
    \E \biggl(
      \int_{[0, 1]^d}
      \bigl\{
        \tilde m_n(x)
        -
        m(x)
      \bigr\}^2
      f_X(x)
      \,\diff x
    \biggr)
    \leq \sigma^2
    + C'_1 n^{\frac{-2\alpha}{2\alpha + d}}. \label{eq:locpol-bound}
  \end{align}
  %
  %
  By Theorem~\ref{thm:statistical},
  since $R(f_\TF) \geq \inf_{f \in \cF(\de,\dffn,L,B,M)} R(f)$,
  we have
  \begin{align*}
    \E\bigl\{R(\hat f_\Gamma)\bigr\}
    &\leq
    2 R(f_\TF) - \sigma^2
    + C'_4 (M + 1)^5
    L \de (\de + \dffn)
    \frac{
      L \log \bigl\{
        (B + 1)
        n \de \dffn
      \bigr\}
      + \log \Gamma
    }{\Gamma}.
  \end{align*}
  Thus, by Theorem~\ref{thm:approximation}, with
  $\de \coloneqq 2d+2D+5$,
  $\dffn \coloneqq 3(D+1)(28+2p)$,
  $L=\lceil C\log (en) \rceil$,
  $B=C n^2$ and
  $\Gamma \geq
  C n ^{2\alpha/(2\alpha+d)}\log^3(e n)$, we have
  \begin{align*}
    \E\bigl\{R(\hat f_\Gamma)\bigr\}
    - \sigma^2
    &\leq
    2 \bigl(R(f_\TF) - \sigma^2\bigr)
    + C'_5
    \frac{ \log^3(e n) + \log (e n) \log \Gamma}{\Gamma} \\
    &\leq
    2 \bigl(R(f_\LocPol) - \sigma^2\bigr)
    + \frac{C'_3}{n}
    + C'_5
    \frac{ \log^3(e n) + \log(e n) \log \Gamma}{\Gamma}
    \leq
    C'_6 n^{\frac{-2\alpha}{2\alpha + d}},
  \end{align*}
  where
  the final inequality follows from~\eqref{eq:locpol-bound}.
\end{proof}



\section{Approximating Linear Attention by Softmax}
\label{sec:softmax-approx-linear}
For $j\in[n]$, $x\in\R^{n}$ and $t \geq 0$, define
$g_j(t) \coloneqq \mathrm{softmax}(t x)_j
= e^{t x_j} / \bigl(\sum_{i=1}^{n} e^{t x_i}\bigr)$
to be the $j$th coordinate of the softmax transformation
of $t x$.  Then
\begin{align*}
  g_j'(t) = \frac{x_j e^{t x_j} \bigl(\sum_{i=1}^{n} e^{t x_i}\bigr)
    - e^{t x_j} \bigl(\sum_{i=1}^{n} x_i e^{t
  x_i}\bigr)}{\bigl(\sum_{i=1}^{n} e^{t x_i}\bigr)^2},
\end{align*}
so $g_j'(0) = x_j/n - \sum_{i=1}^{n} x_i/ n^2$. By Taylor's theorem,
\begin{align*}
  \mathrm{softmax}(t x)_j = g_j(t) = g_j(0) + t g_j'(0) + O(t^2) = \frac{1}{n} +
  t\biggl\{\frac{x_j}{n} - \frac{\sum_{i=1}^{n} x_i}{n^2}\biggr\} + O(t^2),
\end{align*}
as $t\to 0$.
Hence, for $\bm Z \in \R^{n\times\de}$ and $\bm{Q},\bm K,\bm V \in
\R^{\de\times\de}$, we have
\begin{align}
  \mathrm{softmax}\bigl(t\bm Z \bm Q(\bm Z\bm K)^\T\bigr)\bm Z \bm V
  = \frac{\bm{1}_{n\times n} \bm Z \bm V}{n} + t\biggl\{\frac{\bm Z
    \bm Q(\bm Z\bm K)^\T \bm Z \bm V}{n} - \frac{\bm Z \bm Q(\bm Z\bm
  K)^\T \bm{1}_{n\times n} \bm Z \bm V}{n^2} \biggr\} + O(t^2).
  \label{eq:softmax-taylor-expansion}
\end{align}
Below, we outline how to construct a transformer with two softmax
attention layers followed by $O(\log n)$ FFN layers such that if the
input matrix is $\bm Z' \coloneqq (\bm Z,\, \bm 0_{n\times2\de}) \in
\R^{n\times 3\de}$, then its output approximates the output
$\bigl(\bm Z + \bm Z \bm Q(\bm Z\bm K)^\T \bm Z \bm V,\, 0_{n\times2\de}\bigr)$
of a linear attention layer.


\textbf{First attention layer:} We take $\bm Q^{(1)} = \bm K^{(1)}
\coloneqq \bm 0_{3\de\times3\de}$ and
\begin{align*}
  \bm V^{(1)} \coloneqq
  \begin{pmatrix}
    \bm 0_{\de\times\de} & n \bm I_{\de} & \bm 0_{\de\times\de}\\
    \bm 0_{2\de\times\de} & \bm 0_{2\de\times\de} & \bm 0_{2\de\times\de}
  \end{pmatrix}.
\end{align*}
Then
\begin{align*}
  \Attn^{(1)}(\bm Z')
  \coloneqq \bm Z' + \mathrm{softmax}\bigl(\bm Z' \bm Q^{(1)}(\bm
  Z'\bm K^{(1)})^\T\bigr)\bm Z' \bm V^{(1)} =
  \begin{pmatrix}
    \bm Z & \bm 1_{n\times n} \bm Z & \bm 0_{n\times\de}
  \end{pmatrix}.
\end{align*}

\textbf{Second attention layer:} We take
\begin{gather*}
  \bm Q^{(2)} \coloneqq
  \begin{pmatrix}
    \bm 0_{\de\times2\de} & t\bm Q \\
    \bm 0_{2\de\times2\de} & \bm 0_{2\de\times\de}
  \end{pmatrix},
  \qquad
  \bm K^{(2)} \coloneqq
  \begin{pmatrix}
    \bm 0_{\de\times2\de} & \bm K \\
    \bm 0_{2\de\times2\de} & \bm 0_{2\de\times\de}
  \end{pmatrix} \quad\text{and}\quad
  \bm V^{(2)}
  \begin{pmatrix}
    \bm 0_{\de\times2\de} & \frac{n}{t}\bm V \\
    \bm 0_{2\de\times2\de} & \bm 0_{2\de\times\de}
  \end{pmatrix}.
\end{gather*}
Then, writing $\bm Z^{(1)} \coloneqq \Attn^{(1)}(\bm Z') = (\bm Z,\,
\bm 1_{n\times n} \bm Z,\, \bm 0_{n\times\de})$ for the output of the
first attention layer, we have by~\eqref{eq:softmax-taylor-expansion} that
\begin{align*}
  \Attn^{(2)}(\bm Z^{(1)}) & \coloneqq \bm Z^{(1)} +
  \mathrm{softmax}\bigl(\bm Z^{(1)} \bm Q^{(2)} (\bm Z^{(1)}\bm
  K^{(2)})^\T\bigr)\bm Z^{(1)} \bm V^{(2)}\\
  &\phantom{:}=
  \begin{pmatrix}
    \bm Z & \bm 1_{n\times n} \bm Z & \frac{\bm{1}_{n\times n} \bm Z
    \bm V}{t} + \bm Z \bm Q(\bm Z\bm K)^\T \bm Z \bm V - \frac{\bm Z
    \bm Q(\bm Z\bm K)^\T \bm{1}_{n\times n} \bm Z \bm V}{n} + O(t)
  \end{pmatrix}.
\end{align*}
Define $S \coloneqq \bm Z^\T 1_n \in \R^{\de}$. Then 
\begin{align*}
  \Attn^{(2)}(\bm Z^{(1)})_{i,:} =
  \begin{pmatrix}
    Z_i^\T & S^\T & \frac{S^\T \bm V}{t} + \bigl(\bm Z \bm Q(\bm Z\bm
    K)^\T \bm Z \bm V\bigr)_i - \frac{Z_i^\T \bm Q \bm K^\T SS^\T \bV}{n} + O(t)
  \end{pmatrix}.
\end{align*}

\textbf{FFN layers:} We take the FFN layers such that for $z,S,u \in \R^{\de}$,
\begin{align*}
  \mathrm{FFN}\bigl((z^\T,\,S^\T,\,u^\T)\bigr)
  \approx
  \begin{pmatrix}
    z^\T + u^\T - \frac{S^\T \bm V}{t} + \frac{z^\T \bm Q \bm K^\T
    SS^\T \bV}{n} & 0_{\de}^\T & 0_{\de}^\T
  \end{pmatrix}.
\end{align*}
Note that the right-hand side of the above equation only involves
multiplication and summation. Thus, if we want the approximation
error to be $O(n^{-C})$ for some $C>0$, then it suffices to take the
width of the FFN layers to be constant in $n$ and the depth to be
$\Theta(\log n)$; see Lemma~\ref{lemma:NN-multiplication-general}.
Finally, taking $t=n^{-C}$ for some $C>0$ large enough, we deduce that
\begin{align*}
  \mathrm{FFN} \circ \mathrm{Attn}^{(2)} \circ \mathrm{Attn}^{(1)}(\bm Z') =
  \begin{pmatrix}
    \bm Z + \bm Z \bm Q(\bm Z\bm K)^\T \bm Z \bm V & \bm 0_{n\times2\de}
  \end{pmatrix} + O(n^{-1}).
\end{align*}

\end{document}